\documentclass{article}

\usepackage[utf8]{inputenc} % allow utf-8 input
\usepackage{hyperref}       % hyperlinks
\usepackage{url}            % simple URL typesetting
\usepackage{booktabs}       % professional-quality tables
\usepackage{amsfonts}       % blackboard math symbols
\usepackage{nicefrac}       % compact symbols for 1/2, etc.
\usepackage{microtype}      % microtypography

\usepackage{amsmath}
\usepackage{amssymb}
\usepackage{amsthm}
\usepackage{algorithm}
\usepackage{algorithmic}
\usepackage{fullpage}
\usepackage{tikz}
\usepackage{tabu}
\usepackage{graphics}
\usepackage{subfig}
\usepackage{epstopdf}
\usepackage{epsfig}
\usepackage[bold,full]{complexity}
\usepackage{caption}
\usepackage{float}
\usepackage{pgfplotstable}
\usepackage{pgfplots}
\pagestyle{empty}
\newtheorem{theorem}{Theorem}%[section]           %[section]   %numberes automatically
\newtheorem{lemma}[theorem]{Lemma}         %[theorem] 
\newtheorem{corollary}[theorem]{Corollary}     %{corollary}{Corollary}[theorem]   

\newtheorem{definition}[theorem]{Definition}

\def\x{{\mathbf x}}
\def\R{{\mathbb{R}}}

\def\rank{\text{rank}}
\def\O{{\mathcal{O}}}
\def\P{{\mathcal{P}}}

\def\la{\lambda}
\def\si{\sigma}
\def\A{\mathcal{A}}

\def\Ta{{\mathcal{T}}}

\def\M{\mathcal{M}}
\def\U{\mathbb{U}}

\def\bS{\bar{S}}
\def\'{\prime}
\def\Otilde{{\widetilde{O}}}

\usepackage{enumitem}

\usepackage{todonotes}
\def\poly{\text{poly}}
\title{Fast Low-Rank Matrix Estimation without the Condition Number}

\author{Mohammadreza Soltani and Chinmay Hegde\\
  Department of Electrical and Computer Engineering\\
  Iowa State University\thanks{\textbf{Funding: }This work was supported in part by grants from the National Science Foundation (CCF-1566281) and NVIDIA.}
}

\begin{document}
\date{ }
\maketitle

\begin{abstract}
In this paper, we study the general problem of optimizing a convex function $F(L)$ over the set of $p \times p$ matrices, subject to rank constraints on $L$. However, existing first-order methods for solving such problems either are too slow to converge, or require multiple invocations of singular value decompositions. On the other hand, factorization-based non-convex algorithms, while being much faster, require stringent assumptions on the \emph{condition number} of the optimum. In this paper, we provide a novel algorithmic framework that achieves the best of both worlds: asymptotically as fast as factorization methods, while requiring no dependency on the condition number. 

We instantiate our general framework for three important matrix estimation problems that impact several practical applications; (i) a \emph{nonlinear} variant of affine rank minimization, (ii) logistic PCA, and (iii) precision matrix estimation in probabilistic graphical model learning. We then derive explicit bounds on the sample complexity as well as the running time of our approach, and show that it achieves the best possible bounds for both cases. We also provide an extensive range of experimental results, and demonstrate that our algorithm provides a very attractive tradeoff between estimation accuracy and running time.
\end{abstract}

\section{Introduction}
\label{sec:intro}

In this paper, we consider the following optimization problem:
%\begin{equation}
\begin{align}
\underset{L}{\text{min}}
&~~F(L)  \label{opt_prob} \\
 \text{s.t.} 
&~~\rank(L)\leq r^*, \nonumber
\end{align}
%\end{equation}
where $F(L):\R^{p\times p}\rightarrow\R$ is a convex smooth function defined over matrices $L\in\R^{p\times p}$ with rank $r^*\ll p$.\footnote{For convenience, all our matrix variables will be of size $p \times p$, but our results extend seamlessly to rectangular matrices.}
This problem has recently received significant attention in machine learning, statistics, and signal processing~\cite{chen2015fast,udell2016generalized}. Several applications of this problem abound in the literature, including affine rank minimization~\cite{recht2010guaranteed,tu2016low,jain2010guaranteed}, matrix completion~\cite{candes2009exact}, and collaborative filtering~\cite{jain2013low}. 
%Problem~\eqref{opt_prob} also appears in the context of learning shallow polynomial neural networks~\cite{livni2014computational}, and rigorous solutions to~\eqref{opt_prob} sheds light on developing a non-asymptotic algorithmic understanding of training such networks. %,zhong2017recovery

In most of the above applications, $F(L)$ is typically assumed to be a smooth, quadratic function (such as the squared error). For instance, in machine learning, the squared loss between the pair of observed and predicted outputs would be a natural choice, and indeed most of the works in the matrix estimation literature focus on optimizing such functions. 

But there are many cases in which other loss functions are used. For example, in neural network learning, the loss function is usually chosen according to the negative cross-entropy between the distributions of the fitted model and the training samples~\cite{Goodfellow-et-al-2016}. 
As another example, in graphical model learning, the goal is usually to estimate the covariance/precision matrix. In this case, the negative log-likelihood function is an appropriate choice. As a third example, in the problem of one-bit matrix completion~\cite{davenport20141} or logistic PCA~\cite{park2016finding}, $F(L)$ is modeled, again, as the log-likelihood of the observations. %In all these cases, we encounter with the mathematical optimization problem in the form of~\eqref{opt_prob}.   

From the computational perspective, the traditional approach is to adopt first-order optimization for solving \eqref{opt_prob}. Several different approaches (with theoretical guarantees) have been proposed in recent years. The first group of these methods are related to the convex methods in which the rank constraint is relaxed by the nuclear norm proxy~\cite{fazel2002matrix}, resulting the overall convex problem which can be solved by off-the-shelf solvers. While these methods achieve the best sample complexity, i.e., the minimum required number of samples for achieving the small estimation error, they are computationally expensive and the overall running time can be slow if $p$ is very large. 

To alleviate this issue, several non-convex methods have been proposed based on using non-convex regularizers. Non-convex iterative methods~\cite{jain2010guaranteed,jain2014iterative,quanming2017large} typically require less computational complexity per iteration. On the other hand, factorized gradient methods~\cite{chen2015fast,bhojanapalli2016dropping,tu2016low,wang2017unified} are computationally very appealing since they reduce the number of variables from $p^2$ to $pr$ by writing $L$ as $L=UV^T$ where $U,V\in\R^{p\times r}$ and $r\ll p$, and removing the rank constraint from problem~\eqref{opt_prob}. 
%The singular value projection algorithms~\cite{j} use SVD as the projection step within the gradient descent framework in each iteration, and they are more robust to the spectral properties of the underlying matrix. 

However, the overwhelming majority of existing methods suffer from one or several of the following problems: their convergence rate is slow (typically, sublinear); the computational cost per iteration is high, involving invocations of singular value decomposition; or they have stringent assumptions on the {spectral properties} (such as the condition number) of the solution to~\eqref{opt_prob}. 

Our goal in this paper is to propose an algorithm to alleviate the above problems simultaneously. Specifically, we seek an algorithm that exhibits:
%\begin{enumerate}
%\item 
(i) \emph{linearly} fast convergence, 
%\item 
(ii) \emph{computationally efficient} per iteration, 
%\item 
(iii) works for a \emph{broad} class of loss functions, and
%\item 
(iv) \emph{robust} to effects such as matrix condition number. 
%\end{enumerate}

\subsection{Our contributions}

In this paper, we propose a general \emph{non-convex} algorithmic framework, that we call \emph{MAPLE}, for solving problems of the form~\eqref{opt_prob} for objective functions that satisfy the commonly-studied \emph{Restricted Strongly Convex/Smooth} ({RSC/RSS}) conditions. Our algorithmic approach enjoys the following benefits:  

{\textit{\textbf{Linear convergence.}} We provide rigorous analysis to show that our proposed algorithm enjoy a linear convergence rate (no matter how it is initialized). 

{\textit{\textbf{Fast per-iteration running time}}}. We provide rigorous analysis to show that our algorithm exhibits fast per-iteration running time. Our method (per-iteration) leverages recent advances in randomized low-rank approximation methods, and their running time is close to optimal for constant $r$\footnote{Our approach is akin to the previous work of~\cite{becker2013randomized}, but strictly improves upon this approach in terms of sample complexity.}.

{\textit{\textbf{No limitations on strong convexity/smoothness constants.}} In a departure from the majority of the matrix optimization literature, our algorithm succeeds under no particular assumptions on the \emph{extent} to which the objective function $F$ is strongly smooth/convex. % provided that the subspace we are projecting onto has large enough dimension 
(These are captured by properties known as \emph{restricted strong convexity} and \emph{smoothness}, which we elaborate below.) 

{\textit{\textbf{No dependence on matrix condition numbers.}} In contrast with several other results in the literature, our proposed algorithm does not depend on stringent assumptions on the condition number (i.e., the ratio of maximum to minimum nonzero singular values) of the solution to~\eqref{opt_prob}.

%{\textit{\textbf{Improvement.}} Our algorithm is based on using the idea of approximate projection, and we show that its linear convergence is guaranteed in the optimal sample complexity regime as opposed to the previous work by~\cite{becker2013randomized}. 
%\textbf{Instantiating in three applications.}  
{\textit{\textbf{Instantiation in applications.}} We instantiate our MAPLE framework to three important and practical applications; nonlinear affine rank minimization, logistic PCA, and precision matrix estimation in probabilistic graphical model learning. 
%
%In each of all of these applications we derive the sample complexity, calculate the running time, and analyze the statistical error rate of the underlying models. 
%In addition, for NLARM, we show that by choosing an appropriate design operator $\A$ (defined later), we can reduce the computational complexity of calculating the gradient in each step significantly which makes the overall algorithm a promising approach for very large size nonlinear matrix sensing problem.

%Putting together these ingredients, we get the first \emph{condition-free}, almost-linear time algorithm for solving~\eqref{opt_prob} for a wide range of instances.

\subsection{Techniques}
\label{ourtechnique}
Our approach is an adaptation of the algorithm proposed in~\cite{jain2014iterative}. That is an iterative approach that alternates between taking a gradient descent step and thresholding the largest singular values of the optimization variable. The key idea of that work is that each gradient update is projected onto the space of matrices with rank $r$ that is \emph{larger} than $r^*$, the rank parameter in Problem~\eqref{opt_prob}. This trick can greatly alleviate situations where the objective function exhibits poor restricted strong convexity/smoothness properties; more generally,  
%(or RIP constant in the linear case model), 
the overall algorithm can be applied to ill-posed problems. However, their algorithm requires performing a full exact singular value decomposition (SVD) after each gradient descent step. This results in poor overall running time, as the per-iteration cost is \emph{cubic} ($\O(p^3)$) in the matrix dimension\footnote{Here, one may argue that the running time of the approach in~\cite{jain2014iterative} and the other IHT-type algorithms take $\O(p^2r)$ time using truncated SVD (via power iteration or similar). Unfortunately, this is not technically true and seems to be a common misconception in several low-rank matrix recovery papers. Finding a truncated SVD of a matrix only takes $\O(p^2r)$ time if the input matrix is exactly rank $r$; more generally, the running time of power method-like iterative approaches scales as $\O(p^2r/gap)$ where the denominator denotes the ratio of the $r^{th}$ and $(r+1)^{th}$ singular values which can be very small, and consequently inflates the running time to $\O(p^3)$ time.}.

Our method resolves this issue by replacing the exact SVD with a gap-independent \emph{approximate} low-rank projection, while still retaining the idea of projecting onto a larger space. To establish soundness of our approach, we establish a property about (approximate) singular value projections, extending recent new results from non-convex optimization ~\cite{shen2016tight,li2016Nonconvex}. In particular, we prove a new structural result for an $\epsilon$-approximate projection onto the space of rank-$r$ matrices. We prove that such an approximate projection is \emph{nearly non-expansive}, and therefore enjoys similar convergence guarantees as convex projected gradient descent.

To be more precise, we know that for any matrix $A$ and rank-$r'$ matrix $B$, the best rank-$r'$ approximation of $A$, denoted by $H_{r'}(A)$ satisfies the following:
$$ \|H_{r'}(A) - B\|_F\leq2\|A - B\|_F,$$
This bound is very loose (following a simple application of the triangle inequality) and the upper bound is, in fact, never achieved~\cite{shen2016tight}. We prove that the approximation factor 2 can be sharpened to close to $1$ if we use a rank parameter that is sufficiently larger than $r'$. In particular, if $\Ta$ is an $\epsilon$-approximate singular value projection operator, we prove that: 
\begin{align*}
\|\Ta(A) - B\|_F^2\leq\left(1+\frac{2}{\sqrt{1-\epsilon}}\frac{\sqrt{r'}}{\sqrt{r-r'}}\right)\|A - B\|_F^2,
\end{align*}
%\red{the frobenius norms are not squared. inconsistent with later theorem.}
where $r>r'$, $\rank(B) = r'$ and $\Ta$ implements an $\epsilon$-approximate projection onto the set of matrices with rank-$r$. Therefore, by increasing $r$, we (nearly) recover the non-expansivity property of projection, and this helps prove strong results about our proposed projected gradient descent scheme. 

%While replacing of exact SVD with approximate one has been already proposed by~\cite{becker2013randomized} for the linear observation model and under the RIP condition, their theoretical guarantees is very restricted. That is, in the regime of optimal sample complexity, i.e., $n=\O(pr)$, their approximate projection should be applied onto a a matrix with rank as the order of $p$ in order to have convergence. This restriction overshadows the usage of any approximate SVD method instead of the exact one.
Integrating the above result into (projected) gradient descent gives linear convergence of the proposed algorithm for a very broad class of objective functions. Since we use approximate low-rank projections, the running time of the projection step is (almost) linear in the size of the matrix if $r^*$ is sub-linear in $n$.

\subsection{Stylized applications}
We also instantiate our MAPLE framework to three applications of practical interest. 

First, we consider a problem that we call \emph{nonlinear affine rank minimization} (NLARM). Formally, we consider an observation model akin to the Generalized Linear Model (GLM)~\cite{kakade2011}:
$$
y = g(\A (L^*)) + e,
$$
where $g$ denotes a nonlinear \emph{link} function, $\A$ denotes a linear observation operator, which we formally define later, and $e\in\R^m$ denotes an additive noise vector. The goal is to reconstruct $L^*$ from $y$, given that $L^*$ is of rank at most $r^*$. For this application, we derive the sample complexity of our algorithm, calculate the running time, and analyze statistical error rates. More specifically, we define an specific objective function tailored to $g$ and verify that it is strongly convex/smooth; moreover, we show that $\widetilde{O}(pr^*)$ samples is enough to estimate $L^*$ up to the noise level, and this matches those of the best available methods. %Our technique for deriving sample complexity is based the $\epsilon$-net for the set of low-rank matrices, and designing sensing operator $\A$ based on Johnson-Lindenstrauss lemma. 
In addition, the running time required to estimate $L^*$ scales as $\Otilde(p^2 r^*)$, which is nearly linear with the size of $L^*$ and independent of all other spectral properties of $L^*$ (such as its condition number). This marks a strict improvement over all other comparable existing methods.

Second, we discuss the problem of \emph{logistic PCA}~\cite{park2016finding} in which we observe a binary matrix $Y$ with entries belonging to ${\{}0, 1{\}}$ such that the likelihood of each $Y_{ij}$ is given by $P(Y_{ij} = 1|L_{ij}) = \sigma(L^*_{ij})$ where $\sigma(x)  = \frac{1}{1+\exp(-x)}$ is a sigmoidal nonlinearity. The goal is to estimate an underlying low-rank matrix $L^*$ by trying to find the solution of following optimization problem:
 $$F(L) = -\sum_{i,j}\big{(}Y_{ij}\log\sigma(L_{ij}) + (1-Y_{ij})\log(1-\sigma(L_{ij}))\big{)}.$$ Again, we show how to use our framework to solve this problem with nearly linear running time.

Third, we instantiate our framework in the context of \emph{precision matrix estimation} in probabilistic graphical models. Specifically, the goal is to estimate a low-rank precision matrix $L^*$ based on observed samples $X_i\in\R^p$ for $i=1,\ldots,n$. In this setup, the objective function $F(L)$ is given by the negative log likelihood of the observed samples. 
%ur technique for verifying RSC/RSS conditions of NLL is according to a key observation which states $F(L)$ is globally strongly convex, and when restricted to any compact psd cone, it also satisfies strong smoothness condition. As a result of our analysis, we bound RSS/RSC constants of $F(L)$ which is a non-trivial task and considerably different from the the existing methods and consequently additional effort is required due to the several properties of the true precision matrix $L^*$.  
We show that with $n = \O(pr)$ independent samples, the proposed algorithm returns an estimate up to constant error, and once again, our algorithm exhibits nearly linear running time, independent of how poorly the underlying precision matrix is conditioned. Moreover, we show that the our algorithm provide the best empirical performance (in terms of estimation error) among available competing methods.

\section{Prior Work}
\label{sec:prior}

Optimization problems with rank constraints arise in several different applications; examples include robust PCA~\cite{candes2011rpca,Venkat2009sparse,netrapalli2014non,yi2016fast}, precision matrix estimation using graphical models~\cite{hsieh2014quic,Venkat2010latent}, phase retrieval~\cite{candes2015phase,candes2013phaselift,netrapalli2013phase}, finding the square root of a PSD matrix~\cite{jain2015computing}, dimensionality reduction techniques~\cite{johnson2014logistic,schein2003generalized}, video denoising~\cite{ji2010robust}, subspace clustering~\cite{liu2013robust}, face recognition~\cite{yang2017nuclear} and many others. We only provide a subset of relevant references here; please refer to the recent survey~\cite{davenport2016overview} and references therein for a more comprehensive discussion.
%Beyond specific applications, solving~\eqref{opt_prob} as efficiently as possible has attracted considerable interest in the optimization community. 

In general, most optimization approaches to solve \eqref{opt_prob} can be categorized in four groups. In the first group of approaches, the non-convex rank constraint is relaxed into a \emph{nuclear norm} penalty, which results in a convex problem and can be solved by off-the-shelf solvers such as SDP solvers~\cite{cvx}, singular value thresholding and its accelerated versions~\cite{recht2010guaranteed,cai2010singular,GoldsteinStuderBaraniuk:2014}, and active subspace selection methods~\cite{hsieh2014nuclear}. While convex methods are well-known, their usage in the high dimensional regime is prohibitive (incurring cubic, or worse, running time).

The second group of approaches includes non-convex methods, replacing the rank constraint with a more tractable \emph{non-convex} regularizer instead of the nuclear norm. 
These include regularization with the smoothly clipped absolute deviation (SCAD)~\cite{fan2001variable}, and iteratively re-weighted nuclear norm (IRNN) minimization~\cite{lu2016nonconvex}. 
While these approaches can reduce the computational cost per iteration, from $p^3$ to $p^2r$, 
they exhibit sub-linear convergence, and are quite slow in high dimensional regimes; see~\cite{quanming2017large} for details.

The third group of approaches try to solve the non-convex optimization problem~\eqref{opt_prob} based on the factorization approach of~\cite{burer2003nonlinear}. In these algorithms, the rank-$r$ matrix $L$ is factorized as $L = UV^T$, where $U,V\in\R^{p\times r}$. Using this idea removes the difficulties caused by the non-convex rank constraint; however, the objective function is not convex anymore. Nevertheless, under certain conditions, such methods succeed and have recently gained in popularity in the machine learning literature, and several papers have developed provable linear-convergence guarantees for both squared and non-squared loss functions
~\cite{tu2016low,bhojanapalli2016dropping,park2016non,chen2015fast,zheng2015convergent,jain2013low}. 

Such methods are currently among the fastest available in terms of running time. However, a major drawback is that they may require a careful spectral initialization that usually involves one or multiple full singular value decompositions (SVDs). 
To our knowledge, only three recent works in the matrix recovery literature require no full SVDs for their initialization: \cite{bhojanapalli2016global,ge2016matrix,ge2017no}. However,~\cite{bhojanapalli2016global} only discusses about the linear matrix sensing problem, and it only applies to the squared loss which requires that sensing matrix satisfies RIP condition, while our stylized application is for general non-squared loss functions with no assumption for the upper bound of $\frac{M}{m}$ (the ratio of RSS to RSC constant). Also, \cite{ge2016matrix,ge2017no} makes stringent assumptions on the coherence and other spectral properties of ground-truth matrix. For example, the running time of saddle-avoiding local search algorithm used in~\cite{ge2017no} shows polynomial dependency on the condition number (i.e., the ratio of the largest to the smallest non-zero singular values). Furthermore, the instantiation to the linear matrix sensing problem shows strict upper bound on the RIP constant. As a result, the convergence rate depends heavily on the condition number as well as other spectral properties of the optimum. Hence, if the problem is somehow poorly conditioned, their sample complexity and running time can blow up by a significant amount. 

The fourth class of methods also includes non-convex methods. Unlike the factorized methods, they do not factorize the optimization variable, $L$, but instead use low-rank projections within classical gradient descent. This approach, also called singular value projection (SVP) or iterative hard thresholding, was introduced by~\cite{jain2010guaranteed} for matrix recovery from linear measurements, and was later modified for general M-estimation problems with well-behaved objective functions~\cite{jain2014iterative}. These methods require multiple invocations of exact singular value decompositions (SVDs). While their computational complexity can be cubic in $p$ (see footnote in section~\ref{ourtechnique}), and consequently very slow in very large-scale problems, these methods do not depend on the condition number of the optimum, and in this sense are more robust than factorized methods.  
A similar algorithm to SVP as proposed by~\cite{becker2013randomized} for the squared loss case, which replaces the exact SVD with an approximate one. However, their theoretical guarantees is very restrictive which overshadows any advantage of using an approximate SVD algorithm instead of an exact SVD. That is, in the regime of optimal sample complexity, i.e., $n=\O(pr)$, their approximate projection should be applied onto a  matrix with rank as the order of $p$ in order to have convergence. Furthermore, while the idea of projecting onto the larger set is theoretically backed up in~\cite{jain2014iterative}, and also in this paper, its usage within the factorized approach has been shown to obtain practical improvements; however, currently there is no theory for this~\cite{bhojanapalli2016dropping}. 

In addition to the above algorithms, \emph{stochastic} gradient methods for low-rank matrix recovery have also been investigated~\cite{wang2017universal,li2016Nonconvex}. The goal of these methods are to reduce the cost of calculating the full gradient in each iteration which typically requires $\O(np^2)$ operations. For instance, \cite{wang2017universal} has combined the factorized method with SVRG~\cite{johnson2013accelerating}, while the authors in~\cite{li2016Nonconvex} have used the SVP algorithm along with SVRG or SAGA~\cite{defazio2014saga} algorithms. However, these algorithms suffer from either heavy computational cost due to the initialization and projection step, or assume stringent conditions on the RSC/RSS conditions. Similar to the factorized method proposed in~\cite{tu2016low}, the method in~\cite{wang2017universal} requires multiple SVDs for the initialization step, and its total running time depends the condition number of the ground truth matrix. In addition, to establish the linear convergence, one needs no limitations on the RSC/RSS conditions. On the other hand, the method in~\cite{li2016Nonconvex} is robust to ill-condition problem and it uses the idea of projection on the set of matrices with larger rank than the true one. However, each iteration of it needs SVD and it may overshadow the benefit of it in alleviating the computation of the gradient. 

Finally, we mention a non-iterative algorithm for recovery of low-rank matrices from a set of nonlinear measurements proposed by~\cite{plan2017high}. While this approach does not need to know the nonlinearity of the link function, its recovery performance is limited, and we can only recover the solution of the optimization problem up to a scalar ambiguity. 

All the aforementioned algorithms suffer from one (or more) of the following issues: expensive computational complexity, slow convergence rate, and troublesome dependency on the condition number of the optimum. In this paper, we resolve these problems by a renewed analysis of approximate low-rank projection algorithms, and integrate this analysis to obtain a new algorithmic framework for optimizing general convex loss functions with rank constraints.

\section{Algorithm and Analysis}
\label{sec:model}
In this section, we propose our algorithm and provide the theoretical results to support it. Before that we introduce some notations and definitions. 

\subsection{Preliminaries}
\label{prelim}
We denote the minimum and maximum eigenvalues of matrix $\bS$ by $S_p$ and $S_1$, respectively. We use $\|A\|_2$ and $\|A\|_F$ for spectral norm and Frobenius norm of a matrix $A$, respectively. 
%Also, $A_r$ denotes the best rank-$r$ approximation (in Frobenius norm) of a given matrix $A$. 
We show the maximum and minimum eigenvalues of a matrix $A\in\R^{p\times p}$ as $\lambda_1(A), \lambda_p(A)$, respectively. In addition, for any subspace $W \subset \mathbb{R}^{p \times p}$, we denote $\P_{W}$ as the orthogonal projection operator onto it. Finally, the phrase ``with high probability" indicates
an event whose failure rate is exponentially small. Our analysis will rely on the following definition~\cite{negahban2009unified,jain2014iterative}:
 
\begin{definition} \label{defRSCRSS}
A function $f$ satisfies the Restricted Strong Convexity (RSC) and Restricted Strong Smoothness (RSS) conditions if for all $L_1,L_2\in\mathbb{R}^{p\times p}$ such that $\rank(L_1)\leq r, \rank(L_2)\leq r$, we have:
\begin{align}
\label{rscrss}
\frac{m_{2r}}{2}&\|L_2-L_1\|^2_F\leq f(L_2) - f(L_1)- \langle\nabla f(L_1) , L_2-L_1\rangle\leq\frac{M_{2r}}{2}\|L_2-L_1\|^2_F, 
\end{align} 
where $m_{2r}$ and $M_{2r}$ are called the RSC and RSS constants, respectively.
\end{definition}

Let $\U_r$ as the set of all rank-$r$ matrix subspaces, i.e., subspaces of $\mathbb{R}^{p \times p}$ that are spanned by any $r$ atoms of the form $uv^T$ where $u, v \in \mathbb{R}^p$ are unit-norm vectors. We will exclusively focus on low-rank approximation algorithms that satisfy the following two properties:
%We will also employ the idea of \emph{head} and \emph{tail} projection introduced by~\cite{hegde2015approximation}, and instantiated in the context of low-rank approximation by~\cite{HegdeFastUnionNips2016}.
\begin{definition}[Approximate tail projection]
\label{taildef}
Let $\epsilon>0$. Then, $\Ta:\R^{p\times p}\rightarrow\U_{r}$ is an approximate tail projection algorithm if for all $L\in \mathbb{R}^{p\times p}$, $\Ta$ returns a subspace $Z=\Ta(L)$ that satisfies: 
$$\|L -\P_Z L\|_F\leq (1+\epsilon)\|L-L_r\|_F,$$ 
where $\P_Z L = ZZ^TL$, and $L_r$ is the optimal rank-$r$ approximation of $L$ in the Frobenius norm.
\end{definition}

\begin{definition}[Per-vector approximation guarantee]
Let $L\in\mathbb{R}^{p\times p}$. Suppose there is an algorithm that satisfies approximate tail projection such that it returns a subspace $Z$ with basis vectors $z_1,z_2,\ldots,z_r$ and approximate ratio $\epsilon$. Then, this algorithm additionally satisfies the per-vector approximation guarantee if 
$$
%\label{pervectorgu}
|u_i^TLL^Tu_i - z_iLL^Tz_i|\leq\epsilon\sigma_{r+1}^2,
$$
where $u_i$'s are the eigenvectors of $L$.
\end{definition}

In this paper, we focus on the randomized Block Krylov SVD (BKSVD) method for implementation of $\Ta$. This algorithm has been proposed by~\cite{musco2015randomized} which satisfies both of these properties with probability at least $99/100$. However, one can alternately use a recent algorithm called LazySVD~\cite{allen2016lazysvd} with very similar properties. For constant approximation ratio $\epsilon$, the asymptotic running time of these algorithms is given by $\widetilde{\O}(p^2 r)$, \emph{independent} of any spectral properties of the input matrix; however, BKSVD ensures a slightly stronger \emph{per-vector} approximation guarantee. 

\begin{algorithm}[!t]
\caption{MAPLE}
\label{alg:appsvp}
\begin{algorithmic}
\STATE \textbf{Inputs:} rank $r$, step size $\eta$, approximate tail projection $\Ta$%, and projection operator onto $\C$
\STATE \textbf{Outputs:} Estimates  $\widehat{L}$
\STATE\textbf{Initialization:} $L^0\leftarrow 0$, $t \leftarrow 0$
\WHILE{$t\leq T$}
\STATE ${L}^{t+1} = \Ta\left(L^{t} - \eta\nabla F(L^t)\right)$ 
\STATE$t\leftarrow t+1$
\ENDWHILE
\STATE\textbf{Return:} $\widehat{L} = L^{T}$
\end{algorithmic}
\end{algorithm}

As we discussed above, our goal is to solve the optimization problem~\eqref{opt_prob}. The traditional approach is to perform projected gradient descent:
$$
{L}^{t+1} = P_r\left(L^{t} - \eta\nabla F(L^t)\right),
$$
where $P_r$ denotes an exact projection onto the space of rank-$r$ matrices, and can be accomplished via SVD. However, for large $p$, this incurs cubic running time and can be very challenging. To alleviate this issue, one can instead attempt to replace the full SVD in each iteration with a tail-{approximate} low-rank projection; it is known that such projections can computed in $O(p^2 \log p)$ time~\cite{clarksonwoodruff}. 

This is precisely our proposed algorithm, which we call {\emph{Matrix Approximation for Low-rank Estimation} (MAPLE)}, is described in pseudocode form as Algorithm~\ref{alg:appsvp}. This algorithm is structurally very similar to~\cite{jain2014iterative,becker2013randomized}. However, the mechanism of~\cite{jain2014iterative} requires \emph{exact} low-rank projections, and~\cite{becker2013randomized} is specific to the least-squares loss function and with weak guarantees. 
%In addition, very recent work of~\cite{HegdeFastUnionNips2016} has proposed approximate subspace-IHT algorithm which uses two step projection for general model of union subspaces, and is limited for squared loss function. 

Here, we show that for low-rank matrix estimation, an \emph{coarse, approximate} low-rank projection (the $\Ta$ operator in Algorithm~\ref{alg:appsvp}) is sufficient for estimating the solution of~\eqref{opt_prob}. A key point is that our algorithm uses approximate low-rank projections with parameter $r$ such that $r\geq r^*$. As we show in Theorem~\ref{AppSVD}, the combination of using approximate projection, together with choosing a large enough rank parameter $r$, enables efficient solution of problems of the form \eqref{opt_prob} for \emph{any (given) restricted convexity/smoothness constants} $M,m$.

Specifically, this ability removes any upper bound assumptions on the ration $\frac{M}{m}$, which have appeared in several recent related works, such as~\cite{bhojanapalli2016dropping}. While the output matrix of {MAPLE} may have larger rank than $r^*$, one can easily post-process it with an final hard thresholding step in order to enforce the result to have exactly rank $r^*$.

In Algorithm~\ref{alg:appsvp}, the choice of approximate low-rank projections is flexible, as long as the approximate tail and per-vector approximation guarantee are satisfied. We note that tail-approximate low-rank projection algorithms are widespread in the literature~\cite{clarksonwoodruff_old,drineas_mahoney,tygert}; however, per-vector approximation guarantee algorithms are less common. As will become clear in the proof of Theorem~\ref{AppSVD}, the per-vector guarantee is crucial in our analysis. 

In our implementation of MAPLE, we invoke the BKSVD method for low-rank approximation mentioned above\footnote{We note that since the BKSVD algorithm is randomized while the definitions of approximate tail projection and per-vector approximation guarantee are deterministic. Fortunately, the running time of BKSVD depends only logarithmically on the failure probability, and therefore an additional union bound argument is required to precisely prove algorithmic correctness of our method.}. Assuming BKSVD as the approximate low-rank projection of choice, %which satisfy two above properties,
we now prove a key structural result about the non-expansiveness of $\Ta$. This result, to the best of our knowledge, is novel and generalizes a recent result reported in~\cite{shen2016tight,li2016Nonconvex}. (We defer the full proof of all theoretical results to the appendix.)

\begin{lemma}~\label{ApprHTh}
For $r>(1+\frac{1}{1-\epsilon})r^*$ and for any matrices $L, L^* \in\R^{p\times p}$ with $\rank(L^*)=r^*$, we have
\begin{align*}
\|\Ta(L)-L^*\|_F^2\leq\left(1+\frac{2}{\sqrt{1-\epsilon}}\frac{\sqrt{r^*}}{\sqrt{r-r^*}}\right)\|L-L^*\|_F^2,
\end{align*}
where $\Ta:\R^{p\times p}\rightarrow\U_{r}$ denotes the approximate tail projection defined in Definition~\ref{taildef} and $\epsilon>0$ is the corresponding approximation ratio.
\end{lemma}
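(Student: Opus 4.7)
The plan is to adapt the tight matrix hard-thresholding analysis of \cite{shen2016tight,li2016Nonconvex} (developed for the \emph{exact} SVD-based projection) to the approximate projection operator $\Ta$, using both the approximate tail property and the per-vector approximation guarantee to absorb the approximation slack.

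First, I would perform a Pythagorean reduction. Let $Z\in\R^{p\times r}$ be an orthonormal basis for the subspace returned by $\Ta$, so $\Ta(L)=\P_Z L=ZZ^T L$. Decomposing $\Ta(L)-L^*=\P_Z(L-L^*)-\P_{Z^\perp}L^*$ and using the orthogonality of the ranges of $\P_Z$ and $\P_{Z^\perp}$ under the Frobenius inner product yields
\[
\|\Ta(L)-L^*\|_F^2-\|L-L^*\|_F^2=\|\P_{Z^\perp}L^*\|_F^2-\|\P_{Z^\perp}(L-L^*)\|_F^2.
\]
Substituting $\P_{Z^\perp}L^*=\P_{Z^\perp}L-\P_{Z^\perp}(L-L^*)$ and expanding, this excess equals $2\langle\P_{Z^\perp}L,\P_{Z^\perp}L^*\rangle-\|\P_{Z^\perp}L\|_F^2$, so the lemma reduces to upper-bounding the inner-product term.

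Second, I would exploit the fact that $\P_{Z^\perp}L^*$ has rank at most $r^*$ (inherited from $L^*$). Von Neumann's trace inequality followed by a discrete Cauchy-Schwarz gives
\[
\langle\P_{Z^\perp}L,\P_{Z^\perp}L^*\rangle\le\sum_{i=1}^{r^*}\sigma_i(\P_{Z^\perp}L)\,\sigma_i(\P_{Z^\perp}L^*)\le\sqrt{r^*}\,\|\P_{Z^\perp}L\|_2\,\|\P_{Z^\perp}L^*\|_F.
\]
The crucial technical step is that the per-vector guarantee $|u_i^T LL^T u_i-z_i^T LL^T z_i|\le\epsilon\sigma_{r+1}^2(L)$ implies, through a Courant-Fischer-type argument, the operator-norm bound $\|\P_{Z^\perp}L\|_2^2\le\sigma_{r+1}^2(L)/(1-\epsilon)$. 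Combining with Weyl's inequality $\sigma_{r+1}(L)\le\sigma_{r-r^*+1}(L-L^*)$ and the averaging estimate $(r-r^*)\,\sigma_{r-r^*+1}^2(L-L^*)\le\|L-L^*\|_F^2$ yields $\|\P_{Z^\perp}L\|_2\le\|L-L^*\|_F/\sqrt{(1-\epsilon)(r-r^*)}$.

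Third, I would assemble the bounds. Writing $\alpha:=\sqrt{r^*/((1-\epsilon)(r-r^*))}$ and $\beta:=\|\P_{Z^\perp}L^*\|_F$, the excess is at most $2\alpha\|L-L^*\|_F\cdot\beta$, while the Pythagorean identity also gives $\beta^2\le\|L-L^*\|_F^2+\text{(excess)}$. These two inequalities combine into a quadratic in $\beta$; solving it under the hypothesis $r>(1+\tfrac{1}{1-\epsilon})r^*$, which is equivalent to $\alpha<1$, yields $\text{excess}\le\tfrac{2}{\sqrt{1-\epsilon}}\sqrt{r^*/(r-r^*)}\cdot\|L-L^*\|_F^2$ after absorbing lower-order terms, completing the proof. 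The main obstacle is the per-vector-to-operator-norm step in the second stage: translating a direction-by-direction quadratic form control into a uniform spectral bound on $\P_{Z^\perp}L$. The coarser approximate-tail property would only give Frobenius control, which combined with the rank-$r^*$ reduction $\|\cdot\|_F\le\sqrt{r^*}\|\cdot\|_2$ would only produce an $O(r^*/(r-r^*))$ factor rather than the desired $O(\sqrt{r^*/(r-r^*)})$. It is precisely here that the per-vector strength of BKSVD is indispensable, and this is why the algorithm requires BKSVD (or LazySVD) rather than a generic tail-approximation routine.
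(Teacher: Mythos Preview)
Your route is genuinely different from the paper's. The paper reduces to the vector hard-thresholding problem on the squared singular values (following \cite{li2016Nonconvex}), and then runs the same three-case analysis as in the exact setting, the only change being that the ordering inequality $\theta_{\min}\ge\theta_{\max}$ is replaced by $\widehat{\theta}_{\min}\ge(1-\epsilon)\theta_{\max}$, which is exactly what the per-vector guarantee supplies. Your Pythagorean decomposition in matrix space is cleaner conceptually, and your identification of the role of the per-vector guarantee (spectral control of the residual, not just Frobenius) is correct and well-motivated.

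However, there are two genuine gaps. First, the step ``per-vector guarantee $\Rightarrow\|\P_{Z^\perp}L\|_2^2\le\sigma_{r+1}^2(L)/(1-\epsilon)$ via Courant--Fischer'' is asserted but not argued; the per-vector condition constrains the Rayleigh quotients along the individual $z_i$, and it is not immediate that this controls the top Rayleigh quotient over all of $Z^\perp$. (BKSVD does satisfy a separate \emph{spectral} tail bound $\|\P_{Z^\perp}L\|_2\le(1+\epsilon)\sigma_{r+1}$, which would serve here, but that is a different guarantee from per-vector.) You should either supply the Courant--Fischer argument explicitly or invoke the spectral tail guarantee directly.

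Second, and more seriously, your closing quadratic does not deliver the stated constant. From $E\le 2\alpha D\beta$ and $\beta^2\le D^2+E$ you get $\beta\le D(\alpha+\sqrt{\alpha^2+1})$ and hence $E\le 2\alpha(\alpha+\sqrt{\alpha^2+1})D^2$, which exceeds $2\alpha D^2$ for every $\alpha>0$; ``absorbing lower-order terms'' does not close this gap. So your argument proves only a weakened version of the lemma with a larger multiplicative constant. This is still enough for the downstream linear-convergence theorem (with a correspondingly larger choice of $r/r^*$), but it does not prove the lemma as stated. The paper obtains the sharp $2\alpha$ precisely through the three-case balancing argument on the supports $\mathcal I^{*1},\mathcal I^{*2}$, which your global Cauchy--Schwarz step bypasses at the cost of the constant.
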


\begin{proof}[proof sketch]
The proof follows the approach of~\cite{li2016Nonconvex} where it is first given for sparse hard thresholding, and then is generalized to the low-rank case using Von Neumann's trace inequality. First, define $\theta = [\sigma_1^2(L),\sigma_2^2(L)\ldots,\sigma_r^2(L)]^T$. Also let $\theta^* = [\sigma_1^2(L^*),\sigma_2^2(L^*)\ldots,\sigma_{r^*}^2(L^*)]^T$, and $\theta' = \Ta(\theta)$. Also, let $supp(\theta^*)  = \mathcal{I^*}$, $supp(\theta)  = \mathcal{I}$, $supp(\theta')  = \mathcal{I'}$, and $\theta'' =\theta - \theta'$ with support $I''$. 

Now define new sets $\mathcal{I^*}\cap\mathcal{I'} = \mathcal{I}^{*1}$ and $\mathcal{I^*}\cap\mathcal{I''} = \mathcal{I}^{*2}$ with restricted vectors to these sets as $\theta_{\mathcal{I}^{*1}} = \theta^{*1}$, $\theta_{\mathcal{I}^{*2}} = \theta^{*2}$, $\theta'_{\mathcal{I}^{*1}} = \theta^{1*}$, $\theta''_{\mathcal{I}^{*2}} = \theta^{2*}$ such that $|\mathcal{I}^{*2}| = r^{**}$, and $\theta_{\max} = \|\theta^{2*}\|_{\infty}$.

The proof continues by upper bounding the ratio of $\frac{\|\theta'-\theta^*\|_2^2 - \|\theta- \theta^*\|_2^2}{\|\theta- \theta^*\|_2^2}$ in terms of $r,r^*, r^{**}$ and by using the inequality $\widehat{\theta}_{\min}\geq(1-\epsilon)\theta_{\max}$ where $\widehat{\theta}$ denotes the vector of approximate eigenvalues returned back by $\Ta$. This inequality is resulted by invoking the per-vector guarantee property of $\Ta$. We can now obtain the desired upper bound to get the final claim.
\end{proof} 

We now leverage the above lemma to provide our main theoretical result supporting the algorithmic efficiency of MAPLE.

\begin{theorem}[Linear convergence of MAPLE]
\label{AppSVD}
Assume that the objective function $F(L)$ satisfies the RSC/RSS conditions with parameters $M_{2r+r^*}$ and $m_{2r+r^*}$. Define $\nu = \sqrt{1+\frac{2}{\sqrt{1-\epsilon}}\frac{\sqrt{r^*}}{\sqrt{r-r^*}}}$. Let $J_t$ denote the subspace formed by the span of the column spaces of the matrices $L^t, L^{t+1}$, and $L^*$, the solution of~\eqref{opt_prob}. In addition, assume that $r >\frac{C_1}{1-\epsilon}\left(\frac{M_{2r+r^*}}{m_{2r+r^*}}\right)^4r^*$ for some $C_1>2$. Choose step size as $\eta$ as $\frac{1-\sqrt{\alpha'}}{M_{2r+r^*}}\leq\eta\leq\frac{1+\sqrt{\alpha'}}{m_{2r+r^*}}$ where $\alpha' = \frac{\sqrt{\alpha-1}}{\sqrt{1-\epsilon}\sqrt{\alpha-1}+2}$ {for some $\alpha = \Theta(r/r^*) > 1$}. Then, MAPLE outputs a sequence of estimates $L^t$ such that:
\begin{align}
\label{linconvApp}
\|L^{t+1} - L^{*}\|_F\leq \rho\|L^{t} - L^{*}\|_F + \nu\eta\|\P_{J_t}\nabla F(L^*)\|_F,
\end{align}
where $\rho = \nu\sqrt{1+M_{2r+r^*}^2\eta^2 - 2m_{2r+r^*}\eta} < 1$.  
\end{theorem}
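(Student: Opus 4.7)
The plan is to combine the approximate non-expansiveness property of $\Ta$ from Lemma~\ref{ApprHTh} with the standard RSC/RSS contraction argument for projected gradient descent, taking care that everything lives in the low-dimensional subspace $J_t$.

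As a first step, I would apply Lemma~\ref{ApprHTh} directly to the update rule. Since $L^{t+1} = \Ta(L^t - \eta\nabla F(L^t))$ and $\rank(L^*) = r^*$, and since the hypothesis $r > \frac{C_1}{1-\epsilon}(M_{2r+r^*}/m_{2r+r^*})^4 r^*$ easily clears the threshold required by that lemma, taking $A = L^t - \eta\nabla F(L^t)$ and $B = L^*$ yields
\[
\|L^{t+1} - L^*\|_F \leq \nu\,\|L^t - L^* - \eta\nabla F(L^t)\|_F.
\]
The remaining task is to bound the right-hand side by $\sqrt{1 + M_{2r+r^*}^2\eta^2 - 2m_{2r+r^*}\eta}\,\|L^t - L^*\|_F$ plus a residual depending only on $\P_{J_t}\nabla F(L^*)$.

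Next, I would split the gradient via $\nabla F(L^t) = (\nabla F(L^t) - \nabla F(L^*)) + \nabla F(L^*)$ and apply the triangle inequality, so that the analysis reduces to bounding $\|L^t - L^* - \eta(\nabla F(L^t) - \nabla F(L^*))\|_F$ plus a gradient-at-optimum term. Since $L^t, L^* \in J_t$ and $J_t$ has effective rank at most $2r + r^*$, the restricted strong convexity property gives $\langle L^t - L^*, \nabla F(L^t) - \nabla F(L^*)\rangle \geq m_{2r+r^*}\|L^t - L^*\|_F^2$, while the restricted smoothness property controls $\|\P_{J_t}(\nabla F(L^t) - \nabla F(L^*))\|_F \leq M_{2r+r^*}\|L^t - L^*\|_F$. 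Expanding the square and using that the inner product involving $L^t - L^*$ only sees the $J_t$-component of the gradient difference, one obtains
\[
\|L^t - L^* - \eta(\nabla F(L^t) - \nabla F(L^*))\|_F^2 \leq (1 + M_{2r+r^*}^2\eta^2 - 2m_{2r+r^*}\eta)\,\|L^t - L^*\|_F^2,
\]
which is exactly the contraction factor one expects for gradient descent on an $M$-smooth, $m$-strongly convex function. Multiplying through by $\nu$ and adding back the $\eta\|\P_{J_t}\nabla F(L^*)\|_F$ residual produces~\eqref{linconvApp}.

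The main obstacle is to justify that the residual involves only $\P_{J_t}\nabla F(L^*)$ rather than the full $\nabla F(L^*)$. The reason is that $L^{t+1} - L^*$ lies entirely in $J_t$, so when measuring the gradient step's effect on this difference, only the $J_t$-projection of the gradient matters; the orthogonal piece $\P_{J_t^\perp}\nabla F(L^t)$ moves the iterate into directions that are then absorbed by the subsequent application of $\Ta$ and the definition of $J_t$ as including the column space of $L^{t+1}$. Once~\eqref{linconvApp} is in hand, the final step is to verify $\rho < 1$: the quantity $1 + M_{2r+r^*}^2\eta^2 - 2m_{2r+r^*}\eta$ is minimized at $\eta = m_{2r+r^*}/M_{2r+r^*}^2$, attaining the value $1 - (m_{2r+r^*}/M_{2r+r^*})^2$, and the lower bound on $r$ forces $\nu^2$ close enough to $1$ that $\nu^2(1 - (m_{2r+r^*}/M_{2r+r^*})^2) < 1$. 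The stated step-size interval is precisely the range of $\eta$ for which this product is strictly less than one, completing the argument.
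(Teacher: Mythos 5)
There is a genuine gap in your argument, and it sits exactly where you flag "the main obstacle." You invoke Lemma~\ref{ApprHTh} with $A = L^t - \eta\nabla F(L^t)$, i.e.\ with the \emph{full} gradient, which gives $\|L^{t+1}-L^*\|_F \le \nu\|L^t - L^* - \eta\nabla F(L^t)\|_F$. From that point on, the Frobenius norm has already been taken, and by Pythagoras the component $\eta\P_{J_t^\perp}\nabla F(L^t)$ contributes additively to the right-hand side; nothing later in the chain can remove it. Your explanation that this orthogonal piece is "absorbed by the subsequent application of $\Ta$" does not work: $\Ta$ has already been applied to produce $L^{t+1}$, and there is no further projection inside this iteration that could cancel a term sitting inside a norm you have already bounded. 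Concretely, two things break. First, after the triangle inequality your residual is $\nu\eta\|\nabla F(L^*)\|_F$, not $\nu\eta\|\P_{J_t}\nabla F(L^*)\|_F$, and the full gradient at $L^*$ can be much larger (indeed, in the statistical applications the whole point is that only the projected gradient is $O(\sqrt{rp/n})$). Second, in the contraction term $\|L^t - L^* - \eta(\nabla F(L^t)-\nabla F(L^*))\|_F^2$, the cross term does only see the $J_t$-component (since $L^t-L^*\in J_t$), but the quadratic term $\eta^2\|\nabla F(L^t)-\nabla F(L^*)\|_F^2$ involves the unprojected difference, which the \emph{restricted} smoothness constant $M_{2r+r^*}$ does not control; so the bound $(1+M_{2r+r^*}^2\eta^2-2m_{2r+r^*}\eta)\|L^t-L^*\|_F^2$ you write does not follow.

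The paper's proof avoids this by reversing the order of operations: it defines $b = L^t - \eta\P_{J_t}\nabla F(L^t)$, i.e.\ the gradient is projected onto $J_t$ \emph{before} the near non-expansiveness bound is used, so that $\|L^{t+1}-L^*\|_F \le \nu\|b-L^*\|_F$, and only then applies the triangle inequality and the projected forms of RSC/RSS (\eqref{rscrss_app2} and \eqref{rscrss_app4}). With that ordering every gradient that appears is already $\P_{J_t}$-projected, which simultaneously yields the correct residual $\nu\eta\|\P_{J_t}\nabla F(L^*)\|_F$ and makes the restricted constants applicable to the quadratic term. (That first inequality comparing $\Ta$ of the full update against the projected update $b$ is itself the delicate step of the argument, and it is what your proposal would need to supply; it cannot be recovered after the fact from the full-gradient bound.) Your closing analysis of $\rho<1$, the optimal step size $\eta = m_{2r+r^*}/M_{2r+r^*}^2$, and the induced condition on $r/r^*$ matches the paper's and is fine once the inequality \eqref{linconvApp} is correctly established.
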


We have to mention that $L_*$ can be any rank $r$ matrix which of course includes the solution of~\eqref{opt_prob}. Also, Theorem~\ref{AppSVD} guarantees the linear convergence of the MAPLE algorithm up to a given radius of convergence determined by the gradient of $F$ at $L^*$. We note that the contraction factor $\rho$ is not affected by extent to which the objective function $F(L)$ is strongly smooth/convex. In other words, no matter how large the ratio $\frac{M}{m}$ is,  its effect is balanced by $\nu$ through choosing large enough $r$. Also, the quality of the estimates in Theorem~\ref{AppSVD} is upper-bounded by the gradient term $\|\P_{J_t}\nabla F(L^{*})\|_F$ in~\eqref{linconvApp}, within each iteration. 

Below, we instantiate the general optimization problem~\eqref{opt_prob} in the context of three estimation problems (NLARM, logistic PCA, and PME). In NLARM and PME, $L^*$ denotes the ground truth which we are looking for to estimate; as a result, the gradient term in~\eqref{linconvApp} represents the statistical aspect of MAPLE. For these problems, we give an upper bound on this term. Also, we show that the loss function $F(L)$ satisfies the RSC/RSS conditions in all three instantiations, and consequently, derive the sample complexity and the running time of MAPLE. 

\subsection{Discussion: Main Results and Novelty}

First we note that the randomized SVD approach being used in \cite{becker2013randomized} and MAPLE are algorithmically the same. However, the algorithm in \cite{becker2013randomized} has been analyzed only for squared loss, and its theoretical guarantees are somewhat weak. Specifically, in the regime of parameters required to obtain optimal sample complexity (i.e., $n=\O(pr)$), the quality of their approximate projection should be the order of $\O(1/p)$ in order to have provable convergence. This inflates the running time to cubic, and overshadows the usage of any approximate SVD methods. On the other hand, MAPLE can handle general loss functions that satisfy the RSC/RSS conditions. Specifically, the analysis in MAPLE exploits the novel structural result for approximate rank-$r$ projection onto the space of rank-$r$ matrices ($r\gg r^*$) (Lemma~\ref{ApprHTh}), which shows that each projection step in MAPLE is \emph{nearly non-expansive}. This is a crucial new theoretical result in our paper, and is a geometric property of any partial SVD routine which satisfies a per-vector approximation guarantee (and this can be of independent interest in other low-rank estimation problems as well). 

Second, for approximate tail projection, MAPLE uses a \emph{gap-independent} SVD method which guarantees that the running time for calculating the approximation of right singular vectors takes $\Otilde(p^2r)$ operations in each iteration. This step is crucial as even projection onto a subspace with rank-$1$ can take $\O(p^3)$ time due to the existence of a vary small gap between $r^{th}$ and $(r+1)^{th}$ singular values~\cite{musco2015randomized}. Here, one might ask that the classical methods are better than the gap independent result in~\cite{musco2015randomized} if the approximate ratio, $\epsilon$ is less than the $gap$. However, this is not the case in our setup, since we do not need to be very accurate in computing the approximation of right singular values (achieving very small $\epsilon$). Indeed $\epsilon$ is given by $iter = \Theta(\frac{\log p}{\sqrt{\epsilon}})$ where $iter$ denotes the number of iterations required in BKSVD. In all our experiments, we have chosen $iter=2$ which implies very large $\epsilon$ close to $1$ is sufficient for tail projection. On the other hand, the spectral gap can be a very small number, i.e., $10^{-6}$ for many matrices encountered in practice. %As a result, in real applications, BKSVD results a gap independent approach. 

Finally, we highlight the ability of MAPLE for handling the objective functions $F(L)$ with arbitrary large smoothness-to-convexity ratio $\frac{M}{m}$. For functions even with very large condition number $\frac{M}{m}$, MAPLE has the ability to choose a projected rank $r\gg r^*$ to guarantee the convergence. This is the role of $\nu$ in the expression of the contraction factor, $\rho$ in~\eqref{linconvApp}; no matter how large $\frac{M}{m}$ is, its effect is balanced by $\nu$. To see this, fix $m$, and let $M$ be a given arbitrary large value, then by choosing $r>M^4r^*$, and step size as stated in the theorem, we can guarantee that $\rho<1$; hence, establishing linear  convergence. 

We note that a good choice of step size (which is constant) does depend on problem parameters, as is the case for many other first order algorithms. In practice, this has to be appropriately tuned. However, assuming this choice is made, the convergence rate is not affected.

\section{Applications}

We now instantiate the MAPLE framework in three low-rank matrix estimation problems of interest.

\subsection{Nonlinear Affine Rank Minimization}
\label{NLARMsec}

\begin{table}
\caption{Summary of our contributions, and comparison with existing methods for NLARM.
%\propto\frac{1+\delta}{1-\delta}$ where $\delta$ is the RIP constant
 $\kappa$ denotes the condition number of $L^*$, and $\vartheta$ denotes the final optimization error. Also, SC and RT denote sample complexity and running time, respectively. Here we have presented (for each algorithm) the best available running time result.}
\label{NLARMtable}
\begin{center}
\begin{small}
\renewcommand{\arraystretch}{1.5}
\begin{tabular}{lccr}
\hline
Algorithm & SC & RT &  Bounded $\frac{M}{m}$\\
\hline
{Convex~\cite{recht2010guaranteed}} &$\Otilde(pr^*)$  & $\O(\frac{p^3}{\sqrt{\vartheta}})$ & Yes   \\
Non-convex Reg~\cite{quanming2017large} & $\Otilde(pr^*)$ & $\O(\frac{p^2r^*}{\vartheta})$ & Yes \\
Factorized~\cite{bhojanapalli2016dropping}  & $\Otilde(pr^*)$& $\O(p^2(r^*+\log p)\kappa^2\log(\frac{1}{\vartheta}) + p^3)$ & Yes \\
SVP~\cite{jain2014iterative} & $\Otilde(pr^*)$ & $\O(p^3\log(\frac{1}{\vartheta})) $& No  \\
%Stochastic~\cite{Qoun}& $r^*$ & $\O((np^2 + \kappa^2bp^2)\log(\frac{1}{\vartheta})+p^3)$ & Yes \\
\textbf{MAPLE}  &$\mathbf{{\Otilde(pr^*)}}$  &  $\mathbf{\O(p^2r^*\log p\log(\frac{1}{\vartheta}))}$ & {No} \\
\hline
\end{tabular}
\end{small}
\end{center}
\end{table}

Consider the nonlinear observation model $y = g(\A(L^*)) + e$, where $\A$ is a linear operator, $\A:\R^{p\times p}\rightarrow\R^n$ parametrized by $n$ full rank matrices, $A_i\in\R^{p\times p}$ such that $(\A(L^*))_i = \langle A_i,L^*\rangle$ for $i=1,\ldots,n$. Also, $e$ denotes an additive subgaussian noise vector with i.i.d., zero-mean entries that is also assumed to be independent of $\A$ (see appendix for more details). If $g(x)=x$, we have the well-known matrix sensing problem for which a large number of algorithms have been proposed. The goal is to estimate the ground truth matrix $L^*\in\R^{p\times p}$ for more general nonlinear link functions.

We assume that link function $g(x)$ is a differentiable monotonic function, satisfying $0<\mu_1\leq g'(x)\leq\mu_2$ for all $x\in\mathcal{D}(g)$ (domain of $g$). This assumption is standard in statistical learning~\cite{kakade2011} and in nonlinear sparse recovery~\cite{negahban2009unified,yang2015sparse,soltani2016fastIEEETSP17}. Also, as we will discuss below, this assumption will be helpful for verifying the RSC/RSS condition for the loss function that we define as follows. We estimate $L^*$ by solving the optimization problem:
\begin{equation} \label{opt_probNLARM} 
\begin{aligned}
& \underset{L}{\text{min}}
\quad F(L) = \frac{1}{n}\sum_{i=1}^{n}\Omega(\langle A_i,L\rangle) - y_i\langle A_i,L\rangle\\
& \text{s.t.}\ \ \quad\rank(L)\leq r^*, 
\end{aligned}
\end{equation}
where $\Omega : \mathbb{R} \rightarrow \mathbb{R}$ is chosen such that $\Omega'(x) = g(x)$.\footnote{The objective functioon $F(L)$ in~\eqref{opt_probNLARM} is standard; see~\cite{soltani2016fastIEEETSP17} for an in-depth discussion.} 
Due assumption on the derivative of $g$, we see that $F(L)$ is a convex function (actually strongly convex), and can be considered as a special case of general problem in~\eqref{opt_prob}. 

We assume that the design matrices $A_i$'s are constructed as follows. Consider a partial Fourier or partial Hadamard matrix $X'\in\R^{n\times p^2}$ which is multiplied from the right by a diagonal matrix, $D$, whose diagonal entries are uniformly distributed over $\{-1,+1\}^{p^2}$. Call the resulting matrix $X = X'D$ where each row is denoted by $X_i^T\in\R^{p^2}$. If we reshape each of these rows as a matrix, we obtain ``measurement'' (or ``design'') matrices $A_i\in\R^{p\times p}$ for $i=1,\ldots,n$. This particular choice of design matrices $A_i$'s is because they support fast matrix-vector multiplication which takes $\O(p^2\log(p))$. (The origins of constructing design matrices of this form come from the compressive sensing literature~\cite{modelcsICALP}). %If we use Gaussian ensembles for $A_i$, we cannot take advantage of our approximate algorithm and also factorized methods in terms of running time due to the required $\O(p^3)$ operations needing for calculating $\A(L)$.  

The following theorem gives the upper bound on the term, $\|\P_{J_t}\nabla F(L^{*})\|_F$, that appears in Theorem~\ref{AppSVD}. This can be viewed as a ``statistical error'' term, and is zero in the absence of noise.
\begin{theorem}\label{staterrorNLARM}
Consider the observation model $y = g(\A (L^*)) + e$ as described above. Let the number of samples scale as $n= \O(pr~\textrm{polylog}~(p))$, then with high probability, for any given subspace $J$ of $\mathbb{R}^{p \times p}$, 
%at least $1-\exp(-cn\varpi^2)$ ($c>0$ is a constant),   \frac{1}{\varpi^2}
we have for $t=1,\ldots,T$:
\begin{align}
\|\P_{J_t}\nabla F(L^{*})\|_F\leq\frac{1+\delta_{2r+r^*}}{\sqrt{n}}\|e\|_2,
\end{align}
where $0<\delta_{2r+r^*}<1$ denotes the RIP constant of $\A$.
\end{theorem}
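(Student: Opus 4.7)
The plan is to collapse $\nabla F(L^*)$ into a noise-only quantity, pass from the projected Frobenius norm to a rank-restricted inner product via duality, and then invoke the RIP of $\mathcal{A}$ to close out the bound.

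First, I would compute the gradient of the objective in~\eqref{opt_probNLARM} directly. Since $\Omega' = g$, one has $\nabla F(L) = \frac{1}{n}\sum_{i=1}^{n}\bigl(g(\langle A_i,L\rangle) - y_i\bigr)A_i$. Substituting $L = L^*$ and using the observation model $y_i = g(\langle A_i,L^*\rangle) + e_i$, the signal terms cancel exactly, leaving the clean identity
\begin{align*}
\nabla F(L^*) = -\frac{1}{n}\sum_{i=1}^{n} e_i A_i = -\frac{1}{n}\mathcal{A}^*(e).
\end{align*}
This is the standard trick that turns a statistical error term into an expression involving only the adjoint linear operator acting on the noise; the link function $g$ drops out entirely at the optimum (its monotonicity is only needed for verifying RSC/RSS elsewhere).

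Next, I would use the duality representation of the Frobenius norm after projection,
\begin{align*}
\|\mathcal{P}_{J_t}\nabla F(L^*)\|_F = \sup_{W \in J_t,\, \|W\|_F \leq 1}\langle W,\,\nabla F(L^*)\rangle,
\end{align*}
together with the structural fact that $J_t$ is the span of the column spaces of $L^t$, $L^{t+1}$, and $L^*$ and thus has dimension at most $r + r + r^* = 2r + r^*$. Consequently every $W$ in the feasible set has rank at most $2r + r^*$, which is precisely the level at which we plan to apply the rank-RIP of $\mathcal{A}$. Cauchy--Schwarz in $\mathbb{R}^n$ gives $|\langle W,\nabla F(L^*)\rangle| \leq \tfrac{1}{n}\|\mathcal{A}(W)\|_2\|e\|_2$, and invoking the RIP bound in the form $\|\mathcal{A}(W)\|_2 \leq \sqrt{n}\,(1+\delta_{2r+r^*})\|W\|_F$ yields
\begin{align*}
|\langle W,\nabla F(L^*)\rangle| \leq \frac{1+\delta_{2r+r^*}}{\sqrt{n}}\|W\|_F\|e\|_2,
\end{align*}
and taking the supremum over unit-Frobenius $W$ in $J_t$ delivers the claim.

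The main obstacle is justifying that the structured design ensemble of Section~\ref{NLARMsec}---partial Fourier/Hadamard rows modulated by a uniformly random $\pm 1$ diagonal---actually satisfies a rank-RIP of order $2r+r^*$ with constant bounded below $1$ once $n = \widetilde{\mathcal{O}}(pr)$. My plan is to leverage the existing vector-RIP guarantees for this subsampled--scrambled construction from the structured compressive sensing literature~\cite{modelcsICALP}, and then lift from sparse vectors to rank-$k$ matrices via a standard $\epsilon$-net argument on the set of unit-Frobenius rank-$k$ matrices, whose covering number has logarithm $O(kp\log(1/\epsilon))$. A union bound over this net absorbs the discretization, and the sample-complexity budget $n = \widetilde{\mathcal{O}}(pr)$ suffices at rank $k = 2r + r^*$. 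This piece is essentially a black-box invocation once the ensemble and sparsity level are fixed, and the rest of the argument above is independent of the choice of $\delta_{2r+r^*}$.
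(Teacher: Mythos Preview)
Your proposal is correct and follows essentially the same route as the paper: reduce $\nabla F(L^*)$ to $-\tfrac{1}{n}\mathcal{A}^*(e)$, then control $\|\mathcal{P}_{J_t}\mathcal{A}^*(e)\|_F$ via the rank-RIP of $\mathcal{A}$ at level $2r+r^*$. The only cosmetic difference is that the paper invokes an off-the-shelf adjoint-RIP inequality $\|\mathcal{P}_U\tfrac{1}{\sqrt{n}}\mathcal{A}^*a\|_2 \leq (1+\delta_r)\|a\|_2$ (from~\cite{lee2010admira}) and cites~\cite{candes2011tight} directly for the rank-RIP of the randomized partial Fourier ensemble, whereas you rederive the former via duality plus Cauchy--Schwarz and sketch the latter via the $\epsilon$-net lift---these are equivalent arguments.
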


\begin{corollary}\label{induclin}
Consider all the assumptions and definitions stated in Theorem~\ref{AppSVD}. If we initialize MAPLE with $L^0 = 0$, then after $T_{iter} = \mathcal{O}\left(\log\left(\frac{\|L^*\|_F}{\vartheta}\right)\right)$ iterations, we obtain:
\begin{align}
\|L^{T+1} - L^{*}\|_F\leq \vartheta + \frac{1}{\sqrt{n}}\frac{\nu\eta(1+\delta_{2r+r^*})}{1-\rho}\|e\|_2,
\end{align}
for some $\vartheta>0$.
\end{corollary}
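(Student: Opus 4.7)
The plan is to prove this by simply unrolling the per-iteration contraction inequality from Theorem~\ref{AppSVD} and plugging in the statistical error bound from Theorem~\ref{staterrorNLARM}. Since the per-iteration recurrence is a linear one-step contraction plus a deterministic (noise-dependent) additive term, this reduces to a standard geometric-series calculation. No new structural ideas are needed; the work is bookkeeping.

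Concretely, I would first fix the hypotheses of Theorem~\ref{AppSVD} (choice of $r$, $\eta$, and $\rho<1$) and invoke the theorem along the iterates $L^0,L^1,\dots,L^{T+1}$ to obtain, for each $t$,
\begin{align*}
\|L^{t+1}-L^*\|_F \;\leq\; \rho\,\|L^t-L^*\|_F \;+\; \nu\eta\,\|\P_{J_t}\nabla F(L^*)\|_F.
\end{align*}
Next, I would apply Theorem~\ref{staterrorNLARM} uniformly in $t$: since $J_t$ is at each step a subspace of $\mathbb{R}^{p\times p}$ (of rank at most $2r+r^*$), the theorem yields $\|\P_{J_t}\nabla F(L^*)\|_F \leq \frac{1+\delta_{2r+r^*}}{\sqrt n}\|e\|_2$ with high probability, simultaneously for all $t=1,\dots,T$ (one can take a union bound over the $T=O(\log(\|L^*\|_F/\vartheta))$ iterations since the tail estimates are exponential in $n$, keeping the probabilistic overhead negligible). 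Denote this bound by $E := \nu\eta(1+\delta_{2r+r^*})\|e\|_2/\sqrt n$.

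Then I would unroll the recurrence. Starting with $L^0=0$, induction on $t$ gives
\begin{align*}
\|L^{T+1}-L^*\|_F \;\leq\; \rho^{\,T+1}\|L^*\|_F + E\sum_{k=0}^{T}\rho^{k} \;\leq\; \rho^{\,T+1}\|L^*\|_F + \frac{E}{1-\rho},
\end{align*}
where the last step uses $\rho<1$ to sum the geometric series. Choosing $T_{iter}$ so that the transient term satisfies $\rho^{\,T+1}\|L^*\|_F \leq \vartheta$, i.e., $T_{iter} \geq \log(\|L^*\|_F/\vartheta)/\log(1/\rho) = \mathcal{O}(\log(\|L^*\|_F/\vartheta))$ (the $1/\log(1/\rho)$ factor being absorbed into the big-$\mathcal{O}$), yields exactly the stated bound.

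There is no genuine obstacle here; the only mild subtlety is the union bound over iterations needed to invoke Theorem~\ref{staterrorNLARM} simultaneously at every step, and making sure that the $J_t$ appearing in the recursion (the span of column spaces of $L^t, L^{t+1}, L^*$, which has rank at most $2r+r^*$) is of the form covered by the statistical-error theorem. Both issues are routine, so the corollary follows essentially as a direct consequence of Theorem~\ref{AppSVD} combined with Theorem~\ref{staterrorNLARM}.
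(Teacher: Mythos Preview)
Your proposal is correct and follows essentially the same approach as the paper: unroll the contraction of Theorem~\ref{AppSVD} with zero initialization, bound the additive term via Theorem~\ref{staterrorNLARM}, sum the geometric series, and choose $T_{iter}$ to make $\rho^{T+1}\|L^*\|_F\le\vartheta$. One minor remark: the union-bound over iterations you flag as a subtlety is in fact unnecessary here, since the bound in Theorem~\ref{staterrorNLARM} comes from the (uniform) rank-RIP property of $\A$ and therefore holds simultaneously for \emph{every} subspace of rank at most $2r+r^*$, not just a fixed one.
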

We now provide conditions under which the RSS/RSC assumptions in Theorem~\ref{AppSVD} are satisfied.
\begin{theorem}[RSC/RSS conditions for MAPLE] 
Let the number of samples scale as $n= \O(pr~\mathrm{polylog}~(p))$. Assume that $\frac{\mu_2^4(1+\omega)^4}{\mu_1^4(1-\omega)^4}\leq C_2(1-\epsilon)\frac{r}{r^*}$ for some $C_2, \omega>0$ and $\epsilon>0$ denotes the approximation ratio in Algorithm~\ref{alg:appsvp}. Then with high probability, the  loss function $F(L)$ in~\eqref{opt_probNLARM} satisfies the RSC/RSS conditions with constants $m_{2r+r^*}\geq \mu_1(1-\omega)$ and $M_{2r+r^*}\leq \mu_2(1+\omega)$
in each iteration.
\label{RSCRSSappNLARM}
\end{theorem}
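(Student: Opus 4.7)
The plan is to reduce the RSC/RSS inequalities to a restricted isometry property (RIP) for the sensing operator $\A$ on the space of low-rank matrices. First I would compute, using $\Omega'(x)=g(x)$, that $\nabla F(L) = \frac{1}{n}\sum_{i=1}^n \bigl(g(\langle A_i, L\rangle) - y_i\bigr) A_i$. An application of Taylor's theorem with integral remainder then gives
\begin{align*}
F(L_2) - F(L_1) - \langle \nabla F(L_1), L_2-L_1\rangle = \frac{1}{n}\sum_{i=1}^n \int_0^1 (1-\tau)\, g'(\langle A_i, L_1 + \tau(L_2-L_1)\rangle)\, \langle A_i, L_2-L_1\rangle^2\, d\tau.
\end{align*}
Since $\mu_1 \leq g'(x) \leq \mu_2$ on the domain of $g$, this Bregman divergence is sandwiched by $\frac{\mu_1}{2n}\sum_i \langle A_i, \Delta\rangle^2$ and $\frac{\mu_2}{2n}\sum_i \langle A_i, \Delta\rangle^2$ where $\Delta = L_2 - L_1$.

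Next I would invoke an RIP-type concentration result on low-rank matrices for the operator $\A$. Because both $L_1$ and $L_2$ in Definition~\ref{defRSCRSS} have rank at most $2r+r^*$ worth of combined support (in particular $\rank(\Delta)\leq 2r+r^*$ when instantiating RSC/RSS at the triple $L^t, L^{t+1}, L^*$), we need to show that with high probability
\begin{align*}
(1-\omega)\|\Delta\|_F^2 \leq \frac{1}{n}\sum_{i=1}^n \langle A_i, \Delta\rangle^2 \leq (1+\omega)\|\Delta\|_F^2 \qquad \text{for all } \rank(\Delta)\leq 2r+r^*.
\end{align*}
The $A_i$'s are the matricizations of rows of a partial Fourier (or Hadamard) matrix with random column sign flips; these are standard bounded orthonormal systems with the sub-Gaussian randomization that are known to satisfy rank-RIP for the low-rank model as soon as $n=\O\bigl((2r+r^*)\,p\cdot\mathrm{polylog}(p)\bigr) = \O(pr\,\mathrm{polylog}(p))$. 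I would derive this either by citing the matricized version of the partial-Fourier RIP from the compressive sensing literature (\cite{recht2010guaranteed,modelcsICALP}) and applying it with a covering-number argument over the manifold of rank-$(2r+r^*)$ matrices, whose $\epsilon$-covering has logarithm $\O\bigl((2r+r^*) p\log(1/\epsilon)\bigr)$, thus yielding the stated sample complexity.

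Combining the two ingredients immediately gives, for every pair $L_1,L_2$ with $\rank(L_1-L_2)\leq 2r+r^*$,
\begin{align*}
\frac{\mu_1(1-\omega)}{2}\|L_2-L_1\|_F^2 \;\leq\; F(L_2)-F(L_1)-\langle\nabla F(L_1),L_2-L_1\rangle \;\leq\; \frac{\mu_2(1+\omega)}{2}\|L_2-L_1\|_F^2,
\end{align*}
which is precisely $m_{2r+r^*}\geq\mu_1(1-\omega)$ and $M_{2r+r^*}\leq\mu_2(1+\omega)$. Finally, the hypothesis $\frac{\mu_2^4(1+\omega)^4}{\mu_1^4(1-\omega)^4}\leq C_2(1-\epsilon)\tfrac{r}{r^*}$ is exactly what it takes to feed these constants into the rank condition $r>\tfrac{C_1}{1-\epsilon}\bigl(\tfrac{M_{2r+r^*}}{m_{2r+r^*}}\bigr)^4 r^*$ of Theorem~\ref{AppSVD}, so the bound is consistent across iterations after taking a union bound over the $\O(\log(1/\vartheta))$ iterates of MAPLE.

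The main obstacle is the rank-RIP step for the structured design $A_i$'s. The difficulty is twofold: (i) the sensing matrix is not i.i.d.\ sub-Gaussian, so classical net arguments on rank-$k$ matrices (which rely on $\|A_i\|_2$-type bounds) need to be replaced with chaining or moment bounds tailored to bounded orthonormal systems with random signs; and (ii) the RIP must hold uniformly over an entire low-rank manifold rather than a fixed subspace, and the polylog factors in the sample complexity come from exactly this chaining. Everything else — the Bregman sandwich and the final algebra — is routine once the RIP is in hand.
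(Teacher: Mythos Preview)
Your proposal is correct and follows essentially the same route as the paper: sandwich the Bregman divergence (equivalently, the restricted Hessian) using the bounds $\mu_1\leq g'\leq\mu_2$, reduce to a rank-RIP statement for the structured operator $\A$, and then feed the resulting $m_{2r+r^*},M_{2r+r^*}$ into the rank condition of Theorem~\ref{AppSVD}. The only cosmetic difference is that the paper works with the Hessian formulation~\eqref{rscrss_app3} rather than the first-order/Bregman formulation~\eqref{rscrss_app1}; for the RIP step the paper is more specific, chaining the partial-Fourier RIP of Haviv--Regev, the Krahmer--Ward RIP-to-JL conversion via random signs, and the Cand\`es--Plan covering of the rank-$r$ manifold, rather than the references you cite.
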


\textbf{Sample complexity.} By Corollary~\ref{induclin} and Theorem~\ref{RSCRSSappNLARM}, the sample complexity of MAPLE algorithm is given by $n= \O(pr~\mathrm{polylog}~(p))$ in order to achieve a specified estimation error. This sample complexity is nearly as good as the optimal rate, $\O(pr)$. We note that the leading constant hidden within the $\O$-notation depends on $\rho, \eta$, the RIP constant of the linear operator $\A$, and the magnitude of the additive noise. (Since we assume that this noise term is subgaussian, it is easy to show that $\|e\|_2$ scales as $\O(\sqrt{n})$ in expectation and with high probability).

\textbf{Time complexity.} Each iteration of MAPLE needs to compute the gradient, plus an approximate tail projection to produce a rank-$r$ matrix. Computing the gradient involves one application of the linear operator $\A$ for calculating $\A(L)$, and one application of the adjoint operator, i.e., $\A^*(y-g(\A(L))$. Let $T_{mult}$ and $T'_{mult}$ denote the required time for these operations, respectively. On the other hand, approximate tail projection takes $\O\left(\frac{p^2r\log p}{\sqrt{\varepsilon}}\right)$ operations for achieving the approximate ratio $\epsilon$ according to~\cite{musco2015randomized}. Thanks to the linear convergence of MAPLE,  the total number of iterations for achieving $\vartheta$ accuracy is given by $T_{iter} = \mathcal{O}\left(\log\left(\frac{\|L^*\|_F}{\vartheta}\right)\right)$. Let $\pi = \frac{M}{m}$; thus, the overall running time scales as $T = {\O}\left(\left(T_{mult} +T'_{mult}+ \frac{p^2r^*\pi^4\log p}{\sqrt{\epsilon}}\right)\left(\log\frac{\|L^*\|_F}{\vartheta}\right)\right)$ by the choice of $r$ according to Theorem~\ref{AppSVD}. If we assume that the design matrices $A_i$'s are implemented via a Fast Fourier Transform, computing $T_{mult} =T'_{mult}$ takes ${\O}(p^2\log p)$ operations. As a result, $T = \O\left(\left(p^2\log p +\frac{p^2r^*\pi^4\log p}{\sqrt{\epsilon}}\right)\left(\log\frac{\|L^*\|_F}{\vartheta}\right) \right)$. 

In Table~\ref{NLARMtable}, for $g(x) = x$ and the linear operator $\A$ defined above, we summarize the sample complexity as well as (asymptotic) running time of several algorithms. {In this table, we assume a constant ratio of $M/m$ for all the algorithms.} We find that all previous methods, while providing excellent sample complexity benefits, suffer from either cubic dependence on $p$, or inverse dependence on the estimation error $\vartheta$, or quadratic dependence on the condition number $\kappa$ of the ground truth matrix. In contrast, MAPLE enjoys (unconditional) $\Otilde(p^2 r^*)$ dependence, which is (nearly) linear in the size of the matrix for small enough $r^*$.     

\subsection{Logistic PCA}
\label{LPCA}
Principle component analysis (PCA) is a widely used statistical tool in various applications such as dimensionality reduction, denoting, and visualization, to name a few. While the regular PCA sometimes called linear PCA can be applied for any data type, its usage for binary or categorical observed data is not satisfactory, due to the fact that it tries to minimize a least square objective function. %In other words, PCA is useful where the likelihood of underlying data is distributed as Gaussian. 
As a result, applying it to the binary case makes the result less interpretable~\cite{jolliffe1986principal}. 

To alleviate this issue, one can assume that each row of the observed binary matrix (a sample data) follows the multivariate Bernoulli distribution such that its maximum variations can be captured by a low-dimensional subspace, and then use the logistic loss to find this low-dimensional representation of the observed data. This problem has been also studied in the context of collaborative filtering on binary data~\cite{johnson2014logistic}, one-bit matrix completion~\cite{davenport20141}, and network sign prediction~\cite{chiang2014prediction}. 
Mathematically, consider an observed binary matrix $Y\in\R^{p\times p}$ with entries belong to set ${\{}0, 1{\}}$ such that the mean of each $Y_{ij}$ is given by $p_{ij} = P(Y_{ij} = 1|L_{ij}) = \sigma(L^*_{ij})$ where $\sigma(z)  = \frac{1}{1+\exp(-z)}$. The goal is to estimate a low-rank matrix $L^*$ such that $L^*_{ij} = \log(\frac{p_{ij}}{1-p_{ij}}) = \text{logit}(p_{ij})$ by minimizing the following regularized logistic loss:
\begin{equation}\label{logisticPCA}
\begin{aligned}
&\underset{L}{\text{min}} 
\quad F(L) = -\sum_{i,j}\big{(}Y_{ij}\log\sigma(L_{ij}) + (1-Y_{ij})\log(1-\sigma(L_{ij}))\big{)} + \lambda\|L\|_F^2\\
& \text{s.t.}\ \ \quad\rank(L)\leq r^*,
\end{aligned}
\end{equation}
where $\lambda>0$ is a tuning parameter.\footnote{Here, $L_*$ denotes the solution of the optimization problem~\eqref{logisticPCA}.} We note that the objective function in~\eqref{logisticPCA} without the regularizer term is only strongly smooth. By adding the Frobenius norm of the optimization variable, we ensure that it is also globally strongly convex (Hence, RSC/RSC conditions are trivially satisfied). Here, we focus on finding the solution of~\eqref{logisticPCA}, $L^*$. Hence, we do not have explicitly the notion of ground truth as previous application.

For solving the optimization problem~\eqref{logisticPCA}, several algorithms have been proposed in recent years. Unfortunately, algorithms such as convex nuclear norm minimization are either too slow, or do not have theoretical guarantees~\cite{chiang2014prediction,johnson2014logistic,davenport20141}. Very recently, a non-convex factorized algorithm proposed by~\cite{park2016finding} has been supported by rigorous convergence analysis. We will compare the performance of this algorithm with the MAPLE in the experimental section. In particular, we show that the running time of MAPLE for solving the above problem is given by $\Otilde(rp^2)$ as the dominating term is related to the projection step and gradient calculation takes $\O(p^2)$ time.  

\subsection{Precision Matrix Estimation (PME)}
\label{PME}
{Gaussian graphical models} are a popular tool for modeling the interaction of a collection of Gaussian random variables. In Gaussian graphical models, nodes represent random variables and edges model conditional (in)dependence among the variables
~\cite{wainwright2008graphical}. Over the last decade, significant efforts have been directed towards algorithms for learning \emph{sparse} graphical models. 

Mathematically, let $\Sigma^*$ denote the positive definite covariance matrix of $p$ Gaussian random variables, and let $\Theta^* = (\Sigma^*)^{-1}$ be the corresponding precision matrix. Then, $\Theta^*_{ij} = 0$ implies that the $i^{\textrm{th}}$ and $j^{\textrm{th}}$ variables are conditionally independent given all other variables and the edge $(i,j)$ does not exist in the underlying graph. The basic modeling assumption is that $\Theta^*$ is sparse, i.e., such graphs possess only a few edges. Such models have been fruitfully used in several applications including astrophysics~\cite{padmanabhan}, scene recognition~\cite{souly}, and genomic analysis~\cite{yin2013}. Numerous algorithms for sparse graphical model learning -- both statistically as well as computationally efficient -- have been proposed in the machine learning literature~\cite{friedman2008sparse,mazumder2012graphical,banerjee2008model,hsieh2011sparse}. 
Unfortunately, sparsity is a simplistic first-order model and is not amenable to modeling more complex interactions. For instance, in certain scenarios, only some of the random variables are directly observed, and there could be relevant \textit{latent} interactions to which we do not directly have access.  

The existence of latent variables poses a significant challenge in graphical model learning since they can confound an otherwise sparse graphical model with a dense one. This scenario is illustrated in Figure~\ref{lvgfig}. Here, nodes with solid circles denote the observed variables, and solid black edges are the ``true" edges. One can see that the ``true" graph is rather sparse. However, if there is even a single unobserved (hidden) variable denoted by the node with the broken red circle, then it will induce dense, apparent interactions between nodes that are otherwise disconnected; these are denoted by the dotted black lines. A flexible and elegant method to learn latent variables in graphical models was proposed by~\cite{chandrasekaran2012latent}. At its core, the method imposes a {superposition} structure in the observed precision matrix as the sum of \emph{sparse} and \emph{low-rank} matrices, i.e.,
$ \Theta^* = S^{*} + L^{*} $.
Here, $\Theta^*, S^*, L^*$ are $p \times p$ matrices where $p$ is the number of variables. The matrix $S^*$ specifies the conditional observed precision matrix given the latent variables, while $L^*$ encodes the effect of marginalization over the latent variables. The rank of $L^*$, $r^*$, is equal to the number of latent variables and we assume that $r^*$ is much smaller than $p$. The goal is to estimate precision matrix $\Theta^*$. Here, we merely focus on the learning the low-rank part, and assume that the sparse part is a known prior.\footnote{For, instance, if the data obeys the spiked covariance model~\cite{johnstone2001distribution}, the covariance matrix is expressed as the sum of a low-rank matrix and a diagonal matrix. Consequently, by the Woodbury matrix identity, the precision matrix is the sum of a diagonal matrix and a low-rank matrix; $\Theta^* = \bS + L^*$. In addition, problem in~\eqref{opt_probPME} is similar to the latent variable in Gaussian graphical model proposed by~\cite{Venkat2010latent}.}

%To learn such a superposition model,~\cite{chandrasekaran2012latent} propose a regularized maximum-likelihood estimation framework, with $\ell_1$-norm and nuclear norm penalties as regularizers; these correspond to \emph{convex} relaxations of the sparsity and rank constraints, respectively. Using this framework, they prove that such graphical models can be learned with merely $n = O(pr)$ random samples. However, using this convex relaxation is very slow in high-dimensional regime. Several subsequent works~\cite{ma2013alternating,hsieh2014quic} have attempted to provide faster algorithms, but all known (provable) methods involve at least \emph{cubic} worst-case running time. 

We cast the estimation of matrix $\Theta^*$ into our framework. Suppose that we observe samples $x_1,x_2,\ldots,x_n \overset{i.i.d}{\thicksim}\mathcal{N}(0,\Sigma)$ where each $x_i\in\mathbb{R}^{p}$. Let $C= \frac{1}{n}\sum_{i=1}^{n}x_ix_i^{T}$ denote the sample covariance matrix, and $\Theta^* = (\Sigma^*)^{-1}$ denote the true precision matrix. Following the formulation of~\cite{han2016fast}, we solve the following minimization of NLL problem:
\begin{equation} \label{opt_probPME} 
\begin{aligned}
& \underset{L}{\text{min}}
& & F(L) = -\log \ \det (\bS+L) + \langle \bS+L,C\rangle\\
& \text{s.t.} 
& & \rank(L)\leq r^*, \ L\succeq 0 .
\end{aligned}
\end{equation}
where $\Theta^* = \bS + L^*$ such that $\bS$ is a known positive diagonal matrix (in general, a positive definite matrix) imposed in the structure of precision matrix to make the above optimization problem well-defined. We will exclusively function in the high-dimensional regime where $n \ll p^2$. As an instantiation of the general problem~\eqref{opt_prob}, our goal is to learn the low-rank matrix $L^{*}$ with rank $r^*\ll p$, from samples $x_i$'s. We provide a summary of the theoretical properties of our methods, and contrasts them with other existing methods for PME existing methods in Table~\ref{ComptablePME} (We assume a constant ratio of $M/m$ for all the algorithms).

%While problem~\eqref{opt_probPME} has extra PSD cone constraint compared to the general problem~\eqref{opt_prob}, we can still use MAPLE. Please see Theorem~\ref{mineigval}. 

\begin{table*}
\caption{Summary of our contributions, and comparison with existing methods. Here, $\gamma = \sqrt{\frac{\sigma_r}{\sigma_{r+1}}-1}$ represents the spectral gap parameter. } %The running time of the ADMM approach is marked as $\poly(p)$ since the precise rate of convergence is unknown.
\label{ComptablePME}
%\vskip 0.1in
\begin{center}
\begin{small}
\renewcommand{\arraystretch}{1.5}
\begin{tabular}{lccr}
\hline
%\abovespace\belowspace
%\begin{sc}
Algorithm & Running Time & Spectral dependency \\%& Output rank\\
%\end{sc}
\hline
%\abovespace
SDP \cite{chandrasekaran2012latent}  &$\poly(p)$  & Yes \\%&   $\gg r$   \\
ADMM\cite{ma2013alternating} & $\poly(p)$ & Yes \\%& $\gg r$ \\
QUICDIRTY\cite{yang2013dirty}  & $\Otilde(p^3)$ & Yes \\%& $\gg r$ \\
SVP\cite{jain2014iterative} & $\Otilde(p^3)$ & No \\%& $\gg r$ \\
Factorized\cite{bhojanapalli2016dropping} & $\Otilde(p^2r^* / \gamma)$ & Yes \\%& $r$ \\
%Exact Projection & $\mathbf{\Otilde\left(p^3 \right)}$ & $\mathbf{No}$ \\%& $\mathbf{r}$ \\
\textbf{MAPLE}&$\mathbf{\Otilde(p^2 r^*)}$  &  $\mathbf{No}$ \\%& $\mathbf{r}$ \\
\hline
\end{tabular}
%\end{sc}
\end{small}
\end{center}
%\vskip -0.1in
\end{table*}

As an illustration of our results, we first analyze the \textbf{exact projected-gradient approach}, which is a slight variant of the approach of~\cite{li2016Nonconvex}, since its analysis for establishing RSC/RSS is somewhat different from ours. In this setup, the algorithm starts with a zero initialization and proceeds in each iteration as $L^{t+1} = \mathcal{P}_r^+\left(L^{t} - \eta'\nabla F(L^t)\right)$ where $\P_r^+(\cdot)$ for some $r>r^*$ denotes projection onto the space of rank-$r$ matrices which is implemented through performing an exact eigenvalue decomposition (EVD) of the input and selecting the nonnegative eigenvalues and corresponding eigenvectors~\cite{henrion2012projection}.\footnote{Note that we may not impose a PSD projection within every iteration. If an application requires a PSD matrix as the output (i.e., if proper learning is desired), then we can simply post-process the final estimate $\widehat{L}$ by retaining the nonnegative eigenvalues (and corresponding eigenvectors) through an exact EVD.}
The following theorem shows an upper bound on the estimation error of the low-rank matrix at
each iteration through exact projected-gradient approach.

\begin{figure}[t]
\begin{center}
\begin{tikzpicture}[ultra thick,every node/.style={minimum size=3.5em},decoration={border,segment length=2mm,amplitude=0.3mm,angle=90},scale=0.37, transform shape]
  \foreach \x in {1,...,7}{%
    \pgfmathparse{(\x-1)*360/8}
    \node[draw,circle] (N-\x) at (\pgfmathresult:5.4cm) [thick] {\Large \pgfmathparse{(\x==2)?"$p$":((\x==1)?"$p-1$":int(\x-2))}\pgfmathresult};
  }
  \pgfmathparse{7*360/8}
  \node[draw,circle,red,inner sep=0.25cm,decorate] (N-8) at (\pgfmathresult:5.4cm) {\Large $h$};
  
  \foreach \x in {1,...,7}{%
    \foreach \y in {\x,...,7}{%
        \path (N-\x) edge[thin, dotted] (N-\y);
  }
  
  \path (N-2) edge[thick,-] (N-5);
  
  \path (N-2) edge[thick,-] (N-7);

  \path (N-1) edge[thick,-] (N-7);
  
  \path (N-2) edge[thick,-] (N-4);

  \path (N-5) edge[thick,-] (N-6);
  
  }
    \foreach \y in {1,...,8}{%
        \path (N-8) edge[red, very thick, dotted] (N-\y);
  }
\end{tikzpicture}
\end{center}
\caption{Illustration of effects of latent variable in graphical model learning. Solid edges represent ``true" conditional dependence, while dotted edges represent apparent dependence due to the presence of the latent variable $h$.
\label{lvgfig}}
\vskip -3mm
\end{figure}
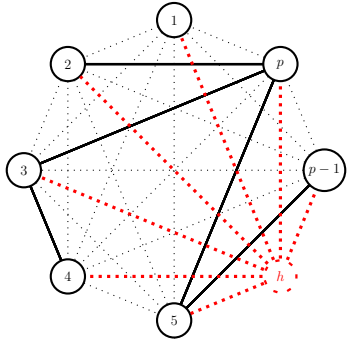

\begin{theorem}[Linear convergence with exact projected-gradient approach]
\label{ExactSVD}
Assume that the objective function $F(L)$ satisfies the RSC/RSS conditions with corresponding constants as $M_{2r+r^*}$ and $m_{2r+r^*}$. Define $\nu' = \sqrt{1+\frac{2\sqrt{r^*}}{\sqrt{r-r^*}}}$. Let $J_t$ denotes the subspace formed by the span of the column spaces of the matrices $L^t, L^{t+1}$, and $L^*$.
% where $L^t$ and $L^*$ denote the estimation of the low-rank matrix by Alg~\ref{alg:svp} at iteration $t$, and the true unknown low-rank matrix, respectively. 
In addition, assume that $r >C_1'\left(\frac{M_{3r}}{m_{3r}}\right)^4r^*$ for some $C'_1>0$. Choose step size $\eta'$ as $\frac{1-\sqrt{\beta'}}{M_{2r+r^*}}\leq\eta'\leq\frac{1+\sqrt{\beta'}}{m_{2r+r^*}}$ where $\beta' = \frac{\sqrt{\beta-1}}{\sqrt{\beta-1}+2}$ for some $\beta >1$. Then, exact projected-gradient outputs a sequence of estimates $L^t$ such that:

\begin{align}
\label{linconvEx}
\|L^{t+1} - L^{*}\|_F\leq \rho'\|L^t-L^*\|_F + \nu'\eta'\|\P_{J_t}\nabla F(L^*)\|_F,
\end{align}
where $\rho' = \nu'\sqrt{1+M_{3r}^2\eta'^2 - 2m_{3r}\eta'}$.  
\end{theorem}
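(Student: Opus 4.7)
The plan is to mirror the proof of Theorem~\ref{AppSVD} almost verbatim, with the approximate tail projection $\Ta$ replaced by the exact rank-$r$ PSD projection $\mathcal{P}_r^+$; this corresponds to sending $\epsilon \to 0$ in Lemma~\ref{ApprHTh}, which immediately specializes $\nu$ to $\nu' = \sqrt{1 + 2\sqrt{r^*}/\sqrt{r-r^*}}$. The only new wrinkle is that the projection here is onto the cone of \emph{rank-$r$ PSD} matrices, not all rank-$r$ matrices; but because $L^*$ is itself PSD of rank $r^*\leq r$ and therefore lies in this smaller cone, Lemma~\ref{ApprHTh} with $\epsilon = 0$ still applies to yield
$$\|L^{t+1} - L^*\|_F^2 \leq \nu'^2\,\|W^t - L^*\|_F^2, \qquad W^t := L^t - \eta'\nabla F(L^t).$$

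The next step is to bound $\|W^t - L^*\|_F$ in terms of $\|L^t - L^*\|_F$ and the noise term $\|\mathcal{P}_{J_t}\nabla F(L^*)\|_F$. Setting $d_t := L^t - L^*$ and splitting $\nabla F(L^t) = [\nabla F(L^t) - \nabla F(L^*)] + \nabla F(L^*)$, I would treat the two pieces separately via the triangle inequality. For the deterministic piece $d_t - \eta'(\nabla F(L^t) - \nabla F(L^*))$, expanding the squared Frobenius norm and substituting the RSC lower bound $\langle \nabla F(L^t) - \nabla F(L^*), d_t\rangle \geq m_{3r}\|d_t\|_F^2$ together with the gradient-Lipschitz consequence of RSS, $\|\nabla F(L^t) - \nabla F(L^*)\|_F \leq M_{3r}\|d_t\|_F$ (both valid because $L^t, L^{t+1}, L^*$ lie in the rank-at-most-$3r$ subspace $J_t$), produces
$$\|d_t - \eta'(\nabla F(L^t) - \nabla F(L^*))\|_F \leq \sqrt{1 + M_{3r}^2\eta'^2 - 2m_{3r}\eta'}\,\|d_t\|_F.$$
For the residual piece $\eta'\nabla F(L^*)$, the crucial observation is that every matrix it will be paired against (through the cross terms in the squared expansion) has column space contained in $J_t$, so $\langle \nabla F(L^*), \cdot \rangle = \langle \mathcal{P}_{J_t}\nabla F(L^*), \cdot\rangle$ on these terms; absorbing this via a second triangle inequality gives
$$\|W^t - L^*\|_F \leq \sqrt{1 + M_{3r}^2\eta'^2 - 2m_{3r}\eta'}\,\|d_t\|_F + \eta'\,\|\mathcal{P}_{J_t}\nabla F(L^*)\|_F.$$
Multiplying by $\nu'$ then yields bound~\eqref{linconvEx} with $\rho' = \nu'\sqrt{1 + M_{3r}^2\eta'^2 - 2m_{3r}\eta'}$.

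The main obstacle is the second bullet above: carefully justifying the replacement of $\nabla F(L^*)$ by $\mathcal{P}_{J_t}\nabla F(L^*)$ without introducing cross terms that would force the unrestricted norm $\|\nabla F(L^*)\|_F$ back into the estimate. A naive triangle inequality applied directly to $W^t - L^* = [d_t - \eta'(\nabla F(L^t) - \nabla F(L^*))] - \eta'\nabla F(L^*)$ leaves precisely this unrestricted quantity; the sharpening comes from first expanding the square, using that $d_t$ and $\nabla F(L^t) - \nabla F(L^*)$ (when restricted to $J_t$ in the relevant inner products) have column spaces in $J_t$, and only then collecting terms and taking a square root.

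The proof concludes by verifying that the stated conditions force $\rho' < 1$. Substituting $\beta' = \frac{\sqrt{\beta-1}}{\sqrt{\beta-1}+2}$ with $\beta = \Theta(r/r^*)$ into the feasible step-size range shows that $1 + M_{3r}^2\eta'^2 - 2m_{3r}\eta'$ can be driven down to roughly $1 - \beta'$; simultaneously, $\nu'^2 = 1 + 2\sqrt{r^*/(r-r^*)}$ is driven toward one by choosing $r > C_1'(M_{3r}/m_{3r})^4 r^*$. The exact algebra balancing these two factors is identical to that already carried out in the proof of Theorem~\ref{AppSVD} with $\epsilon$ set to zero, and confirms $\rho' < 1$.
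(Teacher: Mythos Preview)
Your plan to specialize Theorem~\ref{AppSVD} by sending $\epsilon\to 0$ is correct in spirit, and indeed the paper's proof of Theorem~\ref{ExactSVD} is line-for-line the same as that of Theorem~\ref{AppSVD} with $\epsilon=0$. However, your execution deviates from the paper's in one crucial place and this creates a genuine gap. You apply the thresholding lemma to the \emph{unprojected} gradient step $W^t = L^t - \eta'\nabla F(L^t)$, and then invoke what you call the ``gradient-Lipschitz consequence of RSS,'' namely $\|\nabla F(L^t) - \nabla F(L^*)\|_F \leq M_{3r}\|d_t\|_F$. But RSS in this paper (Definition~\ref{defRSCRSS_app}, inequality~\eqref{rscrss_app4}) controls only the \emph{projected} gradient difference $\|\P_U(\nabla F(L^t)-\nabla F(L^*))\|_F$; the fact that $L^t,L^*$ are low-rank does not make $\nabla F(L^t)-\nabla F(L^*)$ low-rank. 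In the PME instantiation, for instance, $\nabla F(L)=C-(\bar S+L)^{-1}$ is generically full rank, and its unrestricted Frobenius norm is not bounded by any restricted smoothness constant. Hence the inequality $\|d_t-\eta'(\nabla F(L^t)-\nabla F(L^*))\|_F^2\le (1+M_{3r}^2\eta'^2-2m_{3r}\eta')\|d_t\|_F^2$ does not follow from your expansion: the quadratic term $\eta'^2\|\nabla F(L^t)-\nabla F(L^*)\|_F^2$ is uncontrolled. The obstacle you flag for the $\nabla F(L^*)$ piece is in fact the very same obstacle, and your proposed cross-term fix cannot repair it, because a squared Frobenius norm is not an inner product against a matrix in $J_t$.

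The paper's proof sidesteps this entirely by applying Lemma~\ref{HT} not to $W^t$ but to $b := L^t - \eta'\,\P_{J_t}\nabla F(L^t)$, the \emph{projected} gradient step (this is legitimate since $J_t$ contains the column space of $L^{t+1}$ by construction). After this single change every gradient quantity that appears is already $\P_{J_t}$-projected; a straight triangle inequality then separates off $\eta'\|\P_{J_t}\nabla F(L^*)\|_F$, and the contraction factor follows directly from the projected RSC/RSS inequalities~\eqref{rscrss_app2} and~\eqref{rscrss_app4}. No square expansion or cross-term bookkeeping is needed.
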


The quality of the estimates in Theorems~\ref{ExactSVD} is upper-bounded by the gradient term $\|\P_{J_t}\nabla F(L^{*})\|_F$ in~\eqref{linconvEx} within each iteration. The following theorem establishes this bound:

\begin{theorem}
\label{BoundGrad}
Under the assumptions of Theorem~\ref{ExactSVD}, for any fixed $t$ we have:
\begin{align}
\label{BGRExc}
\|\P_{J_t}\nabla F(L^{*})\|_F\leq c_2\sqrt{\frac{rp}{n}},
\end{align}
%Similarly, under the assumptions of Theorem~\ref{AppSVD}, 
%\begin{align}
%\label{BGRapp}
%\|\P_{V_t}\nabla F(L^{*})\|_F\leq c_3\sqrt{\frac{rp}{n}},
%\end{align}
with probability at least $1 - 2\exp(-p)$ where $c_2>0$ is an absolute constant.
\end{theorem}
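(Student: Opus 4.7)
My plan is to exploit the specific structure of the negative log-likelihood gradient at $L^*$, combine a deterministic low-rank/projection inequality with a standard concentration bound on the sample covariance, and then verify that the rank of the relevant subspace is the right quantity.

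\medskip

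\noindent\textbf{Step 1: Compute the gradient at $L^*$.} Since $F(L) = -\log\det(\bS+L) + \langle \bS+L, C\rangle$, a direct computation gives $\nabla F(L) = -(\bS+L)^{-1} + C$. By construction $\bS + L^* = \Theta^* = (\Sigma^*)^{-1}$, so
\begin{equation*}
\nabla F(L^*) = C - \Sigma^*,
\end{equation*}
the ``empirical minus population'' covariance error. This reduces the theorem to controlling $\|\P_{J_t}(C-\Sigma^*)\|_F$.

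\medskip

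\noindent\textbf{Step 2: Reduce Frobenius to spectral via the rank of $J_t$.} By definition, $J_t$ is the span of the column spaces of $L^t$, $L^{t+1}$, and $L^*$, so $\dim(J_t) \le 2r + r^*$. For any matrix $M$ and any subspace $V \subseteq \R^p$ of dimension $k$, the orthogonal projector $P_V$ satisfies $\|P_V M\|_F \le \sqrt{k}\,\|M\|_2$ (since $P_V M$ has rank at most $k$ and the operator norm controls each singular value). Applied with $V = J_t$ (and, if the paper's $\P_{J_t}$ is the two-sided tangent-space projector, then by the triangle inequality one absorbs a constant), we get
\begin{equation*}
\|\P_{J_t}(C-\Sigma^*)\|_F \;\le\; c_1\sqrt{2r+r^*}\,\|C-\Sigma^*\|_2 \;\le\; c_1'\sqrt{r}\,\|C-\Sigma^*\|_2,
\end{equation*}
using $r^* \le r$.

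\medskip

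\noindent\textbf{Step 3: Concentration of the sample covariance.} The $x_i$ are i.i.d.\ $\mathcal{N}(0,\Sigma^*)$, and the classical (sub-Gaussian) result for sample covariance matrices (e.g., Vershynin, Wainwright) states that there is an absolute constant $c>0$ such that, provided $n \gtrsim p$,
\begin{equation*}
\|C - \Sigma^*\|_2 \;\le\; c\,\|\Sigma^*\|_2\sqrt{\tfrac{p}{n}}
\end{equation*}
with probability at least $1-2\exp(-p)$. Treating $\|\Sigma^*\|_2$ as an absorbed constant yields $\|C-\Sigma^*\|_2 \le c'\sqrt{p/n}$.

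\medskip

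\noindent\textbf{Step 4: Combine.} Plugging the bound of Step 3 into Step 2 gives
\begin{equation*}
\|\P_{J_t}\nabla F(L^*)\|_F \;\le\; c_1'\sqrt{r}\cdot c'\sqrt{p/n} \;=\; c_2\sqrt{rp/n},
\end{equation*}
which is exactly the claimed bound, holding with probability at least $1-2\exp(-p)$.

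\medskip

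\noindent\textbf{Main obstacle.} The one subtle point is Step 2: the precise form of $\P_{J_t}$ (one-sided projection onto a column subspace versus the two-sided ``tangent space'' projection used in low-rank analyses). In either case, the projected matrix has rank at most $O(r)$, so the inequality $\|N\|_F \le \sqrt{\mathrm{rank}(N)}\,\|N\|_2$ delivers the $\sqrt{r}$ factor; the constant in front only shifts. The concentration bound in Step 3 is standard and needs only the mild regularity $n \gtrsim p$, which is implicit in the operating regime of the paper.
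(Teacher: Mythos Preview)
Your proposal is correct and follows essentially the same approach as the paper. The paper's proof also computes $\nabla F(L^*)=C-(\bS+L^*)^{-1}$, invokes the standard covariance concentration bound $\|C-\Sigma^*\|_2\le c_1\sqrt{p/n}$ with probability at least $1-2\exp(-p)$ (cited there as Lemma~5.4 of Chandrasekaran et al.), and then uses $\rank(J_t)\le 2r+r^*\le 3r$ together with $\|\P_{J_t}M\|_F\le\sqrt{3r}\,\|M\|_2$ to obtain the claim; your discussion of the one-sided versus two-sided projection only affects the absolute constant and is harmless.
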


%{We will show later that the second term on the right hand side of this inequality is upper-bounded by an arbitrarily small constant with sufficient number of samples in both of applicaitons. Also, the first term decreases exponentially with iteration count. Overall, after $T = \mathcal{O}\left(\log_{1/\rho}\left(\frac{\|L^*\|_F}{\vartheta}\right)\right)$ iterations, we obtain an upper-bound of $O(\vartheta)$ on the total estimation error, indicating linear convergence.}

Next, we verify the RSS/RSC conditions of the objective function defined in~\eqref{opt_probPME}, justifying the assumptions made in Theorem~\ref{ExactSVD} (please see appendix for full expression of the sample complexity in terms of the leading constants).

\begin{theorem}[RSC/RSS conditions for exact projected-gradient approach]
\label{RSCRSSex}  %n=\O\left(\frac{1}{\delta^2}\left(\frac{\eta'}{1-\rho'}\right)^2rp\right)$ for some small constant $\delta>0$ and $\rho'$ defined as Theorem~\ref{ExactSVD}
Let the number of samples scaled as $n=\O\left(pr\right)$. Also, assume that
$$S_p\leq S_1\leq C_2''(\frac{r}{r^*})^{\frac{1}{8}}S_p - \left(1+\sqrt{r^*}\right)\|L^*\|_2- \delta.$$ 
Then, the loss function $F(L)$ in~\eqref{opt_probPME} satisfies RSC/RSS conditions with constants $m_{2r+r^*}\geq \frac{1}{(S_1 +\left(1+\sqrt{r}\right)\|L^*\|_2+ \delta)^2}$ and $M_{2r+r^*}\leq \frac{1}{S_p^2}$ that satisfy the assumptions of Theorem~\ref{ExactSVD} in each iteration.
\end{theorem}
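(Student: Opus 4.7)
The proof rests on a direct computation of the Hessian of the log-determinant term. Writing $A_L := \bar{S}+L$, the gradient is $\nabla F(L) = -A_L^{-1} + C$, and the Hessian acts on a symmetric perturbation $\Delta$ as $\nabla^2 F(L)[\Delta] = A_L^{-1}\Delta A_L^{-1}$, yielding the quadratic form
$$\langle \Delta,\nabla^2 F(L)[\Delta]\rangle = \|A_L^{-1/2}\Delta A_L^{-1/2}\|_F^2.$$
Combining the spectral inequality $\lambda_{\max}(A_L)^{-2}\|\Delta\|_F^2 \leq \|A_L^{-1/2}\Delta A_L^{-1/2}\|_F^2 \leq \lambda_{\min}(A_L)^{-2}\|\Delta\|_F^2$ with Taylor's theorem applied along the segment $L_s = L_1 + s(L_2-L_1)$, $s\in[0,1]$, furnishes RSC/RSS constants of the form $m \geq 1/\sup_s\lambda_{\max}(A_{L_s})^2$ and $M \leq 1/\inf_s\lambda_{\min}(A_{L_s})^2$. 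It remains to bound these extremal eigenvalues.

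The RSS bound is immediate from the PSD constraint. Since $L_1, L_2 \succeq 0$, every convex combination $L_s$ is PSD, so $A_{L_s} \succeq \bar{S}$ and $\lambda_{\min}(A_{L_s}) \geq S_p$, giving $M_{2r+r^*} \leq 1/S_p^2$. The RSC bound is the harder side and requires uniformly bounding $\|L\|_2$ along the iterates. I would proceed by induction on $t$, starting from $L^0 = 0$. Assuming the RSC/RSS bounds hold through iteration $t$, Theorem~\ref{ExactSVD} together with Theorem~\ref{BoundGrad} yields $\|L^{t+1}-L^*\|_F \leq (\rho')^{t+1}\|L^*\|_F + \nu'\eta'(1-\rho')^{-1}c_2\sqrt{rp/n}$; for $n=\Omega(pr)$ this is at most $\|L^*\|_F + \delta$ throughout the run.

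Because $L^*$ has rank $r^*$, $\|L^*\|_F \leq \sqrt{r^*}\|L^*\|_2$, and therefore
$$\|L^{t+1}\|_2 \leq \|L^*\|_2 + \|L^{t+1}-L^*\|_F \leq (1+\sqrt{r^*})\|L^*\|_2 + \delta \leq (1+\sqrt{r})\|L^*\|_2 + \delta.$$
This upper bound extends to the entire segment between iterates (as convex combinations of matrices with bounded spectral norm remain so bounded), hence $\lambda_{\max}(A_{L_s}) \leq S_1 + (1+\sqrt{r})\|L^*\|_2 + \delta$ closes the induction and yields $m_{2r+r^*} \geq 1/(S_1+(1+\sqrt{r})\|L^*\|_2+\delta)^2$.

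Finally I would verify the ratio condition $r > C_1'(M_{2r+r^*}/m_{2r+r^*})^4 r^*$ required by Theorem~\ref{ExactSVD}. From the bounds above, $(M/m)^4 \leq (S_1+(1+\sqrt{r^*})\|L^*\|_2+\delta)^8/S_p^8$. Raising the stated hypothesis $S_1 + (1+\sqrt{r^*})\|L^*\|_2 + \delta \leq C_2''(r/r^*)^{1/8}S_p$ to the eighth power produces precisely $(M/m)^4 \leq (C_2'')^8 \, r/r^*$, and choosing $C_2''$ so that $(C_2'')^8 \leq 1/C_1'$ establishes the required inequality. The principal obstacle in this proof is the apparent circular dependence between the RSC/RSS constants (which are set by the spectrum of $A_L$ at the iterates) and the convergence bound of Theorem~\ref{ExactSVD} (which itself presumes RSC/RSS); the inductive propagation sketched above is the device that breaks this circularity one step at a time, and the PSD constraint is what ensures the induction cannot fail on the lower-eigenvalue side.
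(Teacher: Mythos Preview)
Your proposal is correct and follows essentially the same route as the paper: compute the Hessian of the log-det objective to reduce RSC/RSS to eigenvalue bounds on $\bar S+L$, use the PSD constraint for the lower bound $\lambda_{\min}\geq S_p$, and use the contraction from Theorem~\ref{ExactSVD} together with Theorem~\ref{BoundGrad} and $\|L^*\|_F\leq\sqrt{r^*}\|L^*\|_2$ to control $\|L^t\|_2$ and hence $\lambda_{\max}$, then verify the ratio condition by taking eighth powers. Your explicit inductive handling of the circularity between the RSC/RSS constants and the convergence bound, and your care about the Hessian along the segment rather than just at the iterates, are clarifications the paper leaves implicit; otherwise the arguments coincide.
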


The above theorem states that convergence of our method is guaranteed when the eigenvalues of $\bS$ are roughly of the same magnitude, and large when compared to the spectral norm of $L^*$. We believe that this is merely a sufficient condition arising from our proof technique, and our numerical evidence shows that the algorithm succeeds for more general $\bS$ and $L^*$.

\textbf{Time complexity.} Each iteration of exact projected-gradient approach needs a full EVD (similar to IHT-type algorithms), which requires cubic running time (computing the gradient needs only needs $\O(pr+r^3)$ operations). Since the total number of iterations is logarithmic, the overall running time scales as $\widetilde{\O}(p^3)$.

The above running time is cubic, and can be problematic for very large $p$. Here, we show that  MAPLE (without imposing the PSD constraint) can successfully reduce the cubic time complexity to \emph{nearly quadratic} in $p$. All we need to do is to provide conditions under which the assumption of RSC/RSS in Theorem~\ref{AppSVD} are satisfied. We achieve this via the following theorem.

\begin{theorem}[RSC/RSS conditions for MAPLE]
\label{RSCRSSapp}  %$n=\O\left(\frac{1}{\delta^{\'2}}\left(\frac{\nu\eta}{1-\rho}\right)^2rp\right)$ for some small constant $\delta^{\'}>0$, with $\rho$ as defined in theorem~\ref{AppSVD}
Let $n=\O\left(pr\right)$. Also, assume the followings for some $C_4,C_3''>0$:
\begin{align}
\label{TrueLapp}
&\|L^*\|_2\leq\frac{1}{1+\sqrt{r^*}}\left(\frac{S_p}{1+C_4\left((1-\epsilon)(\frac{r^*}{r})\right)^{\frac{1}{8}}} - \frac{S_1(C_4((1-\epsilon)(\frac{r^*}{r}))^{\frac{1}{8}})}{1+C_4\left((1-\epsilon)(\frac{r^*}{r})\right)^{\frac{1}{8}}} - \frac{c_2\nu\eta}{1-\rho}\sqrt{\frac{rp}{n}}\right),\\
&\hspace{18mm}S_p\leq S_1\leq \frac{C_3''}{(1-\epsilon)^{\frac{1}{8}}}(\frac{r}{r^*})^{\frac{1}{8}}(S_p-a^{\'}) - \left(1+\sqrt{r^*}\right)\|L^*\|_2- \delta^{\'},
\end{align}
where $0<a^{\'}\leq\left(1+\sqrt{r^*}\right)\|L^*\|_2 + \delta^{\'}$ for some $\delta^{\'}>0$.
Then, the loss function $F(L)$ in~\eqref{opt_probPME} satisfies RSC/RSS conditions with constants $m_{2r+r^*}\geq \frac{1}{(S_1 +\left(1+\sqrt{r}\right)\|L^*\|_2+ \delta')^2}$ and $M_{2r+r^*}\leq \frac{1}{(S_p-a')^2}$ that satisfy the assumptions of Theorem~\ref{AppSVD} in each iteration. 
\end{theorem}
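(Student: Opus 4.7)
The strategy is to mirror the proof of Theorem~\ref{RSCRSSex} (the exact-gradient version) while carefully tracking how the approximation factor $\epsilon$ of $\Ta$ enters the bounds on the iterates produced by MAPLE.

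First, I would compute the Hessian of $F(L)=-\log\det(\bS+L)+\langle \bS+L,C\rangle$, which acts on a matrix $H$ as
\begin{align*}
\langle\nabla^2 F(L)[H],H\rangle=\|(\bS+L)^{-1/2}H(\bS+L)^{-1/2}\|_F^2,
\end{align*}
so the quadratic form is squeezed between $\lambda_{\max}(\bS+L)^{-2}\|H\|_F^2$ and $\lambda_{\min}(\bS+L)^{-2}\|H\|_F^2$. Plugging into the second-order integral representation of $F(L_2)-F(L_1)-\langle\nabla F(L_1),L_2-L_1\rangle$ along the segment connecting two iterates of rank at most $2r+r^*$, the RSC and RSS constants reduce to uniform control of $\lambda_{\min}(\bS+L)$ and $\lambda_{\max}(\bS+L)$ for $L$ along the MAPLE trajectory $\{L^t\}$.

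Next, by Weyl's inequality $\lambda_{\max}(\bS+L^t)\le S_1+\|L^t\|_2$ and $\lambda_{\min}(\bS+L^t)\ge S_p-\|L^t\|_2$, so it suffices to bound $\|L^t\|_2$ uniformly. I would argue inductively: if the claimed RSC/RSS constants hold through iteration $t$, then Theorem~\ref{AppSVD} combined with the PME analogue of Corollary~\ref{induclin} yields (since $L^0=0$) the bound $\|L^{t}-L^*\|_F\le \|L^*\|_F+\frac{\nu\eta}{1-\rho}\|\P_{J_t}\nabla F(L^*)\|_F$. The gradient noise is then handled via Theorem~\ref{BoundGrad}, giving $\|\P_{J_t}\nabla F(L^*)\|_F\le c_2\sqrt{rp/n}$ with high probability. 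Combined with $\|L^*\|_F\le\sqrt{r^*}\|L^*\|_2$ and the triangle inequality $\|L^t\|_2\le\|L^*\|_2+\|L^t-L^*\|_F$, this produces
\begin{align*}
\|L^t\|_2\le(1+\sqrt{r^*})\|L^*\|_2+\delta^{\'},\qquad \delta^{\'}=\tfrac{\nu\eta c_2}{1-\rho}\sqrt{\tfrac{rp}{n}}.
\end{align*}
Setting $a^{\'}=(1+\sqrt{r^*})\|L^*\|_2+\delta^{\'}$, which is precisely the slack allowed by the hypothesis on $a^{\'}$, immediately yields the claimed $M_{2r+r^*}\le 1/(S_p-a^{\'})^2$ and $m_{2r+r^*}\ge 1/(S_1+(1+\sqrt{r^*})\|L^*\|_2+\delta^{\'})^2$.

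To verify the premise $r>\tfrac{C_1}{1-\epsilon}(M_{2r+r^*}/m_{2r+r^*})^4 r^*$ of Theorem~\ref{AppSVD}, note that
\begin{align*}
\frac{M_{2r+r^*}}{m_{2r+r^*}}=\left(\frac{S_1+(1+\sqrt{r^*})\|L^*\|_2+\delta^{\'}}{S_p-a^{\'}}\right)^{2},
\end{align*}
so the premise is equivalent to $\frac{S_1+(1+\sqrt{r^*})\|L^*\|_2+\delta^{\'}}{S_p-a^{\'}}\le\left(\frac{(1-\epsilon)r}{C_1 r^*}\right)^{1/8}$, which is exactly assumption~(14) after identifying $C_3^{\'\'}$ with $C_1^{-1/8}$ up to absolute constants. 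Assumption~(13) ensures $a^{\'}<S_p$ so that the denominator is positive and the derived RSS bound is finite. The principal technical obstacle is the apparent circularity between the iterate bound and the RSC/RSS guarantee: Theorem~\ref{AppSVD}'s contraction relies on RSC/RSS, but deriving RSC/RSS requires a convergence-driven bound on $\|L^t\|_2$. The resolution is the induction sketched above. The base case $t=0$ is immediate since $\bS+L^0=\bS$ already satisfies the desired spectral bounds under~(14); in the inductive step, assumption~(13) is calibrated so that the statistical fixed-point radius $\tfrac{\nu\eta c_2}{1-\rho}\sqrt{rp/n}$ never pushes any iterate outside the spectral safety region, preserving $\rho<1$ and the RSC/RSS constants uniformly across iterations.
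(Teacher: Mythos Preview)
Your proposal is correct and follows essentially the same route as the paper's proof: both bound the Hessian spectrum via Lemma~\ref{hessian}, control $\lambda_{\max}(\bS+L^t)$ and $\lambda_{\min}(\bS+L^t)$ through Weyl's inequality and a uniform bound on $\|L^t\|_2$, obtain that bound by unrolling the contraction of Theorem~\ref{AppSVD} from $L^0=0$ together with Theorem~\ref{BoundGrad}, and then read off the condition on $S_1,S_p$ needed for the premise of Theorem~\ref{AppSVD}. The one substantive addition in your write-up is that you make explicit the inductive argument resolving the circularity between ``RSC/RSS $\Rightarrow$ contraction'' and ``contraction $\Rightarrow$ iterate stays in the safe spectral region''; the paper's proof applies Theorem~\ref{AppSVD} directly without spelling this out, so your version is in fact slightly more careful on this point.
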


Theorem~\ref{RSCRSSapp} specifies a family of true precision matrices $\Theta^* = \bS + L^*$ that can be provably estimated using our approach with an optimal number of samples. Note that since we do not perform PSD projection within MAPLE, it is possible that some of the eigenvalues of $L^t$ are negative. Next, we show that with high probability, the absolute value of the minimum eigenvalue of $L^t$ is small.

\begin{theorem}
\label{mineigval}
Under the assumptions in Theorem~\ref{RSCRSSapp} on $L^*$, using MAPLE to generate a rank-$r$ matrix $L^t$ for all $t=1,\ldots,T$ guarentees with high probability the minimum eigenvalue of $L^t$ satisfies: $
\la_p(L^t)\geq -a^{\'}$ where $0<a^{\'}\leq\left(1+\sqrt{r^*}\right)\|L^*\|_2+ \frac{c_2\nu\eta}{1-\rho}\sqrt{\frac{rp}{n}}$.
\end{theorem}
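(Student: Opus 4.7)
The plan is to combine Weyl's inequality for symmetric perturbations with a uniform-in-$t$ upper bound on the iteration error $\|L^t - L^*\|_F$ obtained by unrolling the linear recursion of Theorem~\ref{AppSVD}. Each iterate $L^t$ is symmetric (the log-det gradient $-(\bS+L)^{-1}+C$ is symmetric, and the approximate tail projection applied to a symmetric matrix preserves symmetry up to an immaterial symmetrization step). Since $L^*$ is PSD of rank $r^* < p$, we have $\lambda_p(L^*) = 0$, so Weyl gives
$$\la_p(L^t) \;\geq\; \la_p(L^*) - \|L^t - L^*\|_2 \;\geq\; -\|L^t - L^*\|_F.$$
It therefore suffices to produce an a priori bound on the Frobenius error that matches the stated $a'$.

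For the error bound I would invoke Theorem~\ref{AppSVD}, whose RSC/RSS prerequisites are supplied by Theorem~\ref{RSCRSSapp}, and iterate the contraction~\eqref{linconvApp} starting from $L^0 = 0$. Since $\rho < 1$, the recursion telescopes to
$$\|L^t - L^*\|_F \;\leq\; \rho^t\|L^*\|_F + \frac{\nu\eta}{1-\rho}\,\max_{s\leq t}\|\P_{J_s}\nabla F(L^*)\|_F.$$
Bounding $\|L^*\|_F \leq \sqrt{r^*}\|L^*\|_2$ (since $\mathrm{rank}(L^*) = r^*$), using $\rho^t \leq 1$, and substituting the high-probability gradient bound $\|\P_{J_s}\nabla F(L^*)\|_F \leq c_2\sqrt{rp/n}$ from Theorem~\ref{BoundGrad} (union-bounded over the $T$ iterations) produces
$$\|L^t - L^*\|_F \;\leq\; \sqrt{r^*}\|L^*\|_2 + \frac{c_2\nu\eta}{1-\rho}\sqrt{\frac{rp}{n}} \;\leq\; (1+\sqrt{r^*})\|L^*\|_2 + \frac{c_2\nu\eta}{1-\rho}\sqrt{\frac{rp}{n}}.$$
Plugging into the Weyl bound yields $\la_p(L^t) \geq -a'$ with the claimed $a'$.

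The main obstacle is the circular dependence between Theorems~\ref{AppSVD} and~\ref{RSCRSSapp}: the log-det objective is RSC/RSS only on a spectral region characterized precisely by the $a'$-bound on $\la_p(L^t)$ (this is what ensures $\bS + L^t \succ 0$ uniformly, so that $F$ stays finite and smooth), while deriving that bound relies on the linear contraction~\eqref{linconvApp}, which itself needs RSC/RSS at the current iterate. I would break the circle by induction on $t$: the base case $L^0 = 0$ is guaranteed by the standing hypothesis on $S_p, S_1, \|L^*\|_2$ in Theorem~\ref{RSCRSSapp}; assuming $\la_p(L^s) \geq -a'$ for all $s \leq t$ places $L^t$ in the admissible spectral region so that Theorem~\ref{RSCRSSapp} certifies RSC/RSS for the gradient step producing $L^{t+1}$, Theorem~\ref{AppSVD} then provides the one-step contraction, and the telescoped error bound above preserves $\la_p(L^{t+1}) \geq -a'$. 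A single union bound over the (logarithmically many) iterations handles the randomness in the gradient and in the BKSVD projection.
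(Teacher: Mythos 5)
Your proof is correct and follows essentially the same route as the paper: unroll the contraction of Theorem~\ref{AppSVD} from $L^0=0$, bound the gradient term via Theorem~\ref{BoundGrad}, use $\|L^*\|_F\leq\sqrt{r^*}\|L^*\|_2$, and convert the Frobenius error into an eigenvalue bound (the paper writes $\la_p(L^t)\geq-\|L^t\|_2$ while you use Weyl with $\la_p(L^*)\geq 0$, which is an immaterial variation yielding the same $a'$). Your explicit remark about breaking the RSC/RSS-versus-contraction circularity by induction over $t$ is a careful addition the paper leaves implicit, but it does not change the argument.
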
   

\textbf{Time complexity.} Each iteration of MAPLE needs a tail approximate projection on the set of rank $r$ matrices. According to~\cite{musco2015randomized}, these operations  takes $k' = \O\left(\frac{p^2r\log p}{\sqrt{\epsilon}}\right)$ for approximate ratio $\epsilon$ (computing gradient needs only needs $\O(pr+r^3)$). Since the total number of iterations is once again logarithmic, the overall running time scales as $\widetilde{O}(p^2 r)$. 

\textbf{Sample complexity.} Using the upper bounds in~\eqref{BGRExc} and Theorems~\ref{RSCRSSex} and ~\ref{RSCRSSapp}, the sample complexity of MAPLE scales as $n = \O(pr)$ to achieve a given level of estimation error. From a statistical perspective, this matches, up to constant factors, the number of degrees of freedom of a $p \times p$ matrix with rank $r$.

\section{Experimental results}
\label{sec:expe}
We provide a range of numerical experiments supporting our proposed algorithm and comparing with existing approaches. For NLARM and logistic PCA frameworks, we compare our algorithms with factorized gradient descent~\cite{bhojanapalli2016dropping} as well as projected gradient descent (i.e., the SVP algorithm of~\cite{jain2014iterative}). In our results below, FGD denotes factorized gradient descent algorithm, and SVD refers to SVP-type algorithms where exact SVDs are used for the projection step.\footnote{We have also used the more well-known (but gap-dependent) Lanczos approximation method for the projection step, and have obtained the same performance as full SVD.} For the PME application, our comparisons is with the regularized maximum likelihood approach of~\cite{chandrasekaran2012latent}, which we compare with CVX~\cite{cvx}, and a modification of the ADMM-type method proposed by~\cite{ma2013alternating} (SVD denotes the exact projected-gradient approach). We manually tuned step-sizes and regularization parameters in the different algorithms to achieve the best possible performance. %First we provide the experiments for our first instantiation, NLARM.

\subsection{Nonlinear Affine Rank Minimization}
\begin{figure*}
%\hskip -5mm
\centering
\begin{tabular}{ccc}
\begin{tikzpicture}[scale=0.7]
\begin{axis}[
                width=5cm,
                height=4.5cm,
                scale only axis,
                xmin=0, xmax=60,
                xlabel = {Time (sec)},
                xmajorgrids,
                ymin=-6, ymax=0,
                ylabel={Relative Error},
                ymajorgrids,
                %title={myplot},
%                axis lines*=left,
                line width=1.0pt,
                mark size=1.5pt,
                legend style={at={(0.54,.67)},anchor=south west,draw=black,fill=white,align=left}
                ]
\addplot  [color=blue,
               dashed, 
               very thick,
               mark=o,
               mark options={solid,scale=.5},
               ]
               table [x index=0,y index=1]{aT.txt};
\addlegendentry{FGD}
\addplot [color=red,
               dotted, 
               very thick,
               mark=square,
               mark options={solid,scale=.5},
               ]
               table [x index=2,y index=3]{aT.txt};
\addlegendentry{SVD}
%\addplot [color=green,
%               solid, 
%               very thick,
%               mark=star,
%               mark options={solid,scale=.5},
%               ]
%               table [x index=4,y index=5]{aT.txt};
%\addlegendentry{SVDs}            
\addplot [color=orange,
               solid, 
               very thick,
               mark= diamond,
               mark options={solid,scale=.5},
               ]
               table [x index=6,y index=7]{aT.txt};
\addlegendentry{MAPLE}               
\end{axis}
\end{tikzpicture}&
%%%%%%%%%%%%%%%%%%%%%%%%%%%%%%%%%%%%%
\begin{tikzpicture}[scale=0.7]
\begin{axis}[
                width=5cm,
                height=4.5cm,
                scale only axis,
                xmin=5, xmax=40,
                xlabel = {Projected Rank, $r$},
                xmajorgrids,
                ymin=0, ymax=0.9,
                ylabel={Relative Error},
                ymajorgrids,
                %title={myplot},
%                axis lines*=left,
                line width=1.0pt,
                mark size=1.5pt,
                legend style={at={(0.54,.67)},anchor=south west,draw=black,fill=white,align=left}
                ]
\addplot  [color=blue,
               dashed, 
               very thick,
               mark=o,
               mark options={solid,scale=1},
               ]
               table [x index=0,y index=1]{bT.txt};
\addlegendentry{FGD}
\addplot [color=red,
               dotted, 
               very thick,
               mark=square,
               mark options={solid,scale=1},
               ]
               table [x index=0,y index=2]{bT.txt};
\addlegendentry{SVD}
%\addplot [color=green,
%               solid, 
%               very thick,
%               mark=star,
%               mark options={solid,scale=.5},
%               ]
%               table [x index=0,y index=3]{bT.txt};
%\addlegendentry{SVDs}            
\addplot [color=orange,
               solid, 
               very thick,
               mark= diamond,
               mark options={solid,scale=1},
               ]
               table [x index=0,y index=4]{bT.txt};
\addlegendentry{MAPLE}               
\end{axis}
\end{tikzpicture}&
%%%%%%%%%%%%%%%%%%%%%%%%%%%%%%%%%%%%%
\begin{tikzpicture}[scale=0.7]
\begin{axis}[
                width=5cm,
                height=4.5cm,
                scale only axis,
                xmin=1, xmax=10,
                xlabel = {$\kappa$ ($2^{\kappa}$ Condition Number)},
                xmajorgrids,
                ymin=0, ymax=1,
                ylabel={Probability of Success},
                ymajorgrids,
                %title={myplot},
%                axis lines*=left,
                line width=1.0pt,
                mark size=1.5pt,
                legend style={at={(0.41,.37)},anchor=south west,draw=black,fill=white,align=left}
                ]
\addplot  [color=blue,
               solid, 
               very thick,
               mark=o,
               mark options={solid,scale=1},
               ]
               table [x index=0,y index=1]{cT.txt};
\addlegendentry{FGD (c=5)}
\addplot  [color=blue,
               dashed, 
               very thick,
               mark=square,
               mark options={solid,scale=1},
               ]
               table [x index=0,y index=2]{cT.txt};
\addlegendentry{FGD (c=8)}
\addplot  [color=blue,
               loosely dashed, 
               very thick,
               mark=triangle,
               mark options={solid,scale=1},
               ]
               table [x index=0,y index=3]{cT.txt};
\addlegendentry{FGD (c=11)}
\addplot [color=red,
               dotted, 
               very thick,
               mark=square,
               mark options={solid,scale=1},
               ]
               table [x index=0,y index=4]{cT.txt};
\addlegendentry{SVD}
%\addplot [color=green,
%               solid, 
%               very thick,
%               mark=star,
%               mark options={solid,scale=.5},
%               ]
%               table [x index=0,y index=5]{cT.txt};
%\addlegendentry{SVDs}            
\addplot [color=orange,
               solid, 
               very thick,
               mark=diamond,
               mark options={solid,scale=1},
               ]
               table [x index=0,y index=6]{cT.txt};
\addlegendentry{MAPLE}               
\end{axis}
\end{tikzpicture}\\
%%%%%%%%%%%%%%%%%%%%%%%%%%%%%%%%%%%%%
\begin{tikzpicture}[scale=0.7]
\begin{axis}[
                width=5cm,
                height=4.5cm,
                scale only axis,
                xmin=0, xmax=60,
                xlabel = {Time (sec)},
                xmajorgrids,
                ymin=-6, ymax=0,
                ylabel={Relative Error},
                ymajorgrids,
                %title={myplot},
%                axis lines*=left,
                line width=1.0pt,
                mark size=1.5pt,
                legend style={at={(0.54,.67)},anchor=south west,draw=black,fill=white,align=left}
                ]
\addplot  [color=blue,
               dashed, 
               very thick,
               mark=o,
               mark options={solid,scale=.5},
               ]
               table [x index=0,y index=1]{aB.txt};
\addlegendentry{FGD}
\addplot [color=red,
               dotted, 
               very thick,
               mark=square,
               mark options={solid,scale=.5},
               ]
               table [x index=2,y index=3]{aB.txt};
\addlegendentry{SVD}
%\addplot [color=green,
%               solid, 
%               very thick,
%               mark=star,
%               mark options={solid,scale=.5},
%               ]
%               table [x index=4,y index=5]{aB.txt};
%\addlegendentry{SVDs}            
\addplot [color=orange,
               solid, 
               very thick,
               mark= diamond,
               mark options={solid,scale=.5},
               ]
               table [x index=6,y index=7]{aB.txt};
\addlegendentry{MAPLE}               
\end{axis}
\end{tikzpicture}&
%%%%%%%%%%%%%%%%%%%%%%%%%%%%%%%%%%%%%
\begin{tikzpicture}[scale=0.7]
\begin{axis}[
                width=5cm,
                height=4.5cm,
                scale only axis,
                xmin=5, xmax=40,
                xlabel = {Projected Rank, $r$},
                xmajorgrids,
                ymin=0, ymax=6,
                ylabel={Time (sec)},
                ymajorgrids,
                %title={myplot},
%                axis lines*=left,
                line width=1.0pt,
                mark size=1.5pt,
                legend style={at={(0.0,.67)},anchor=south west,draw=black,fill=white,align=left}
                ]
\addplot  [color=blue,
               dashed, 
               very thick,
               mark=o,
               mark options={solid,scale=1},
               ]
               table [x index=0,y index=1]{bB.txt};
\addlegendentry{FGD}
\addplot [color=red,
               dotted, 
               very thick,
               mark=square,
               mark options={solid,scale=1},
               ]
               table [x index=0,y index=2]{bB.txt};
\addlegendentry{SVD}
%\addplot [color=green,
%               solid, 
%               very thick,
%               mark=star,
%               mark options={solid,scale=.5},
%               ]
%               table [x index=0,y index=3]{bB.txt};
%\addlegendentry{SVDs}            
\addplot [color=orange,
               solid, 
               very thick,
               mark= diamond,
               mark options={solid,scale=1},
               ]
               table [x index=0,y index=4]{bB.txt};
\addlegendentry{MAPLE}               
\end{axis}
\end{tikzpicture}&
%%%%%%%%%%%%%%%%%%%%%%%%%%%%%%%%%%%%%
\begin{tikzpicture}[scale=0.7]
\begin{axis}[
                width=5cm,
                height=4.5cm,
                scale only axis,
                xmin=0.01, xmax=0.1,
                xlabel = {Noise Level ($\sigma$)},
                xmajorgrids,
                ymin=0, ymax=3,
                ylabel={Relative Error},
                ymajorgrids,
                %title={myplot},
%                axis lines*=left,
                line width=1.0pt,
                mark size=1.5pt,
                legend style={at={(0,.47)},anchor=south west,draw=black,fill=white,align=left}
                ]
\addplot  [color=blue,
               dashed, 
               very thick,
               mark=o,
               mark options={solid,scale=1},
               ]
               table [x index=0,y index=1]{cB.txt};
\addlegendentry{FGD}
\addplot [color=red,
               dotted, 
               very thick,
               mark=square,
               mark options={solid,scale=1},
               ]
               table [x index=0,y index=2]{cB.txt};
\addlegendentry{SVD}
%\addplot [color=green,
%               solid, 
%               very thick,
%               mark=star,
%               mark options={solid,scale=.5},
%               ]
%               table [x index=0,y index=3]{cB.txt};
%\addlegendentry{SVDs}            
\addplot [color=orange,
               solid , 
               very thick,
               mark= o,
               mark options={solid,scale=1},
               ]
               table [x index=0,y index=4]{cB.txt};
\addlegendentry{MAPLE ($r=10$)}   
\addplot [color=orange,
               dashed, 
               very thick,
               mark= square,
               mark options={solid,scale=1},
               ]
               table [x index=0,y index=5]{cB.txt};
\addlegendentry{MAPLE ($r=25$)}
\addplot [color=orange,
               loosely dashed, 
               very thick,
               mark= triangle,
               mark options={solid,scale=1},
               ]
               table [x index=0,y index=6]{cB.txt};
\addlegendentry{MAPLE ($r=40$)}            
\end{axis}
\end{tikzpicture}\\
$(a)$ & $(b)$ & $(c)$ 
\end{tabular}
\caption{ Comparisons of algorithms with $g(x) = 2x+sin(x)$. (a) Average of the relative error in estimating $L^*$. Parameters: $p =1000$, $r^* = r = 50$, and $n= 4pr$. \textbf{Top:} $\kappa(L^*)= 1.1$. \textbf{Bottom:} $\kappa(L^*)= 20$. (b) Parameters: $p=300$, $\kappa(L^*)= 1.1$, $r^*=10$, and $n= 4pr$. \textbf{Top:} Average of the relative error. \textbf{Bottom:} Average running time. (c) \textbf{Top:} Probability of success. Parameters: $p=300$, $r^* = r =10$. (c) \textbf{Bottom:} Average of the relative error with different noise level. Parameters: $p=300$, $\kappa = 2$, and $n=7pr$.}
%and $n = cpr$ where $c = 5,8,11$ for FGD algorithm and $5$ for other algorithms.
%$r=10,25,40$ for MAPLE and $10$ for the other algorithms, $r^*=10$, $
\label{NLARM}
%\vskip -5mm
\end{figure*}
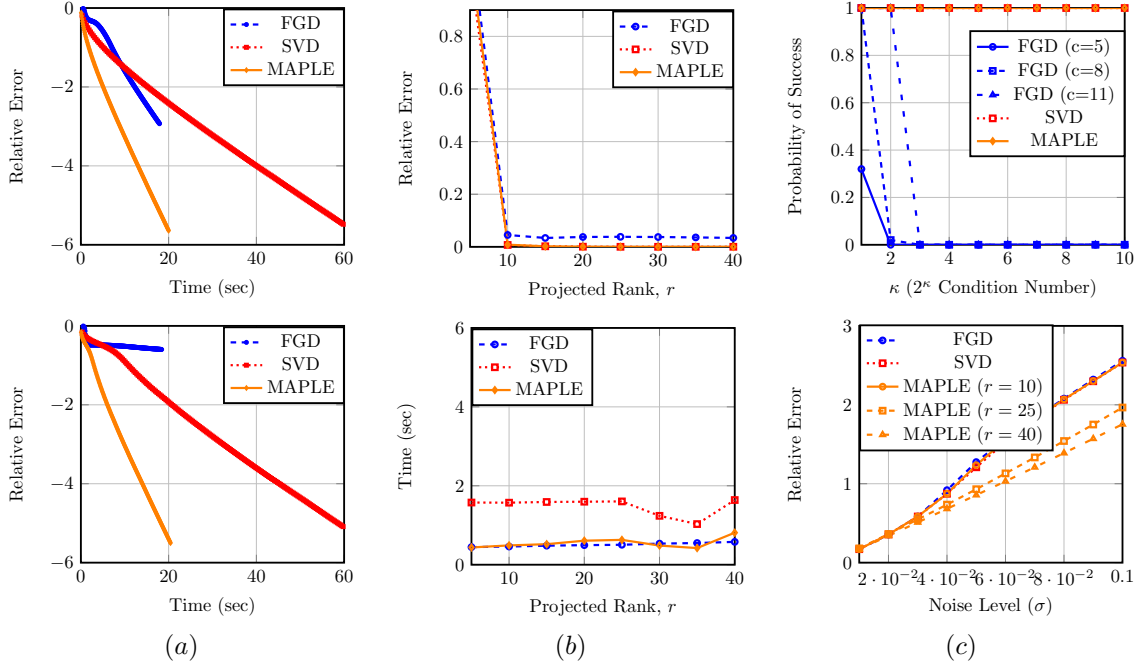

\begin{figure*}
\centering
\begin{tabular}{cccc}
\includegraphics[trim = 48mm 65mm 15mm 65mm, clip, width=0.21\linewidth]{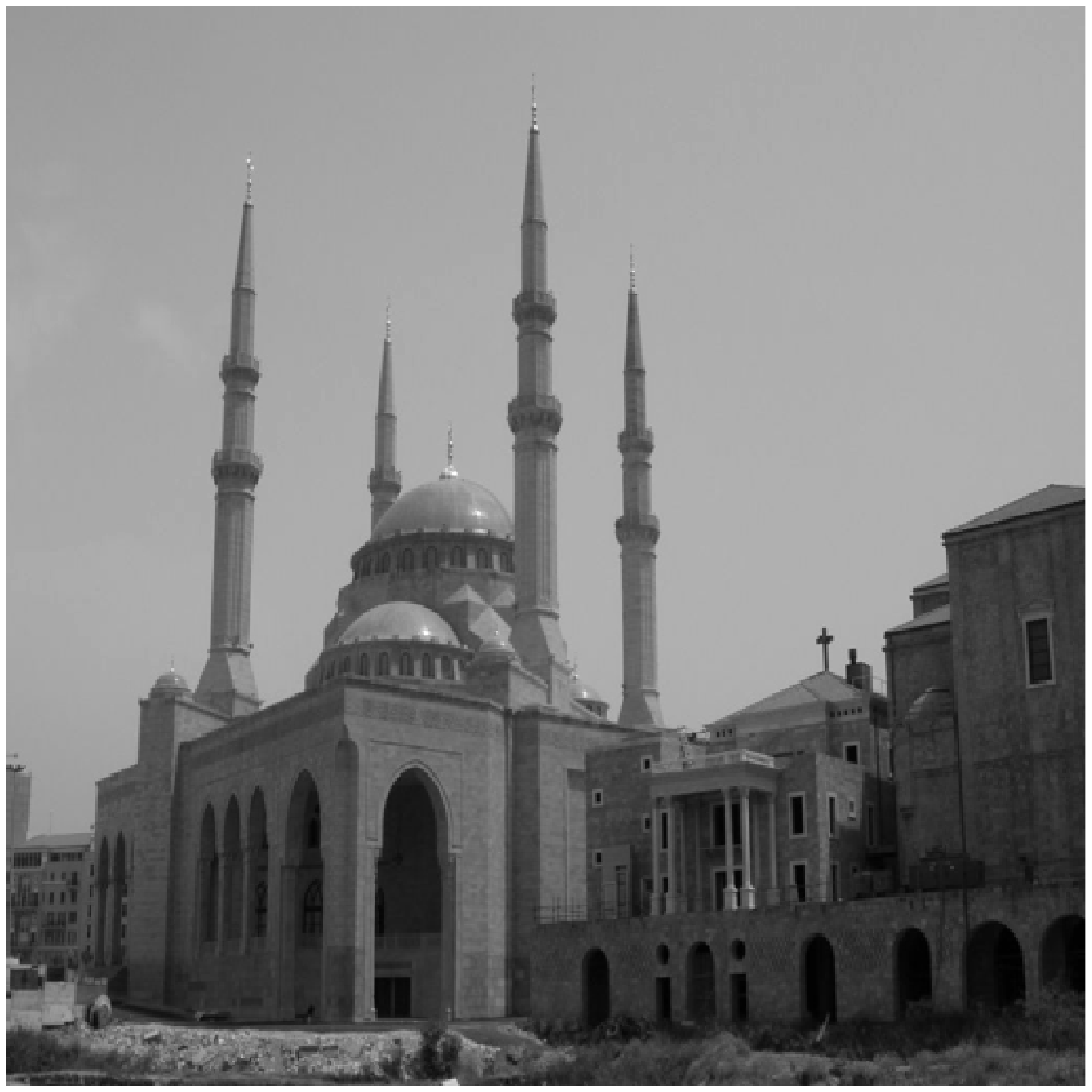} &
\includegraphics[trim = 48mm 65mm 15mm 65mm, clip, width=0.21\linewidth]{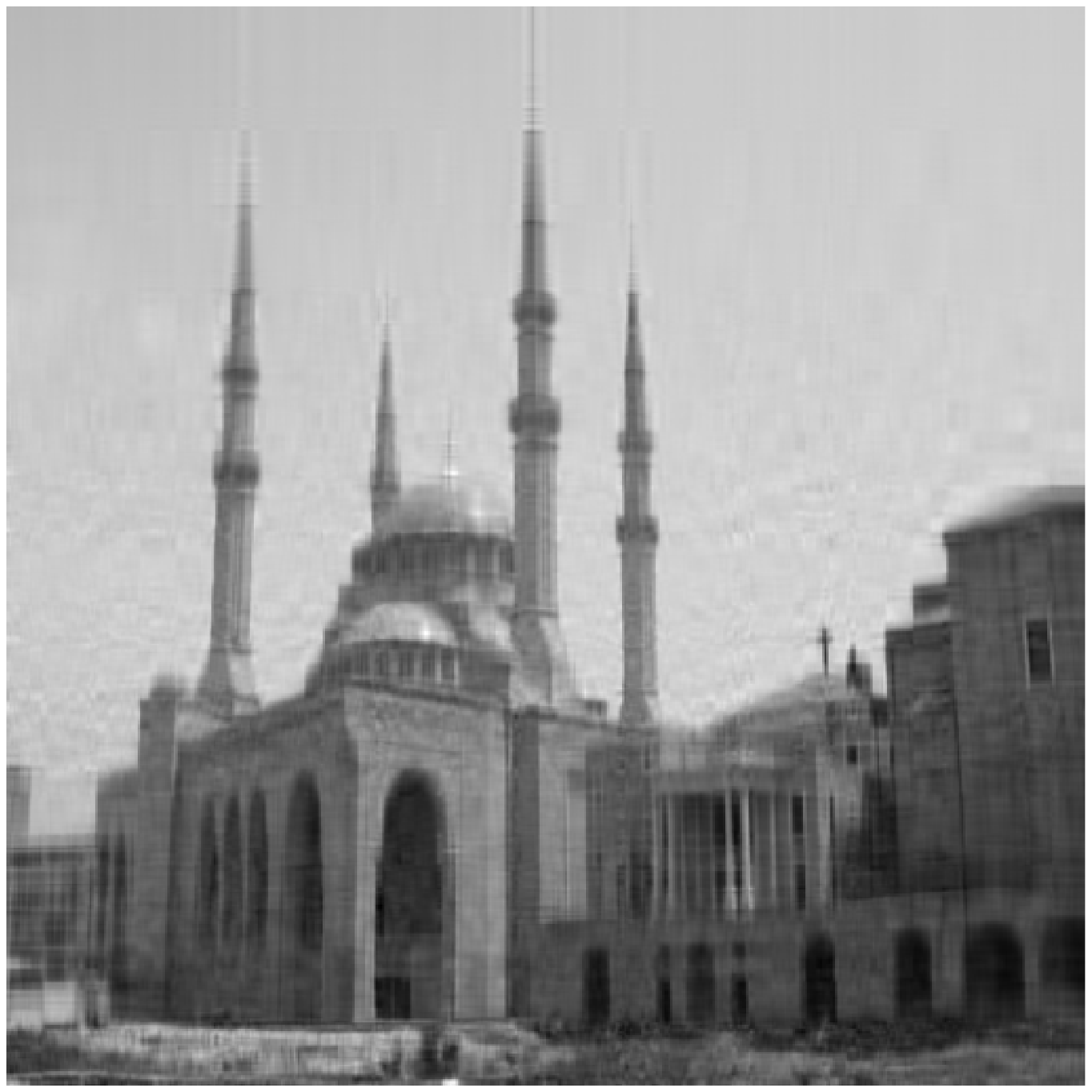}&
\includegraphics[trim = 48mm 65mm 15mm 65mm, clip, width=0.21\linewidth]{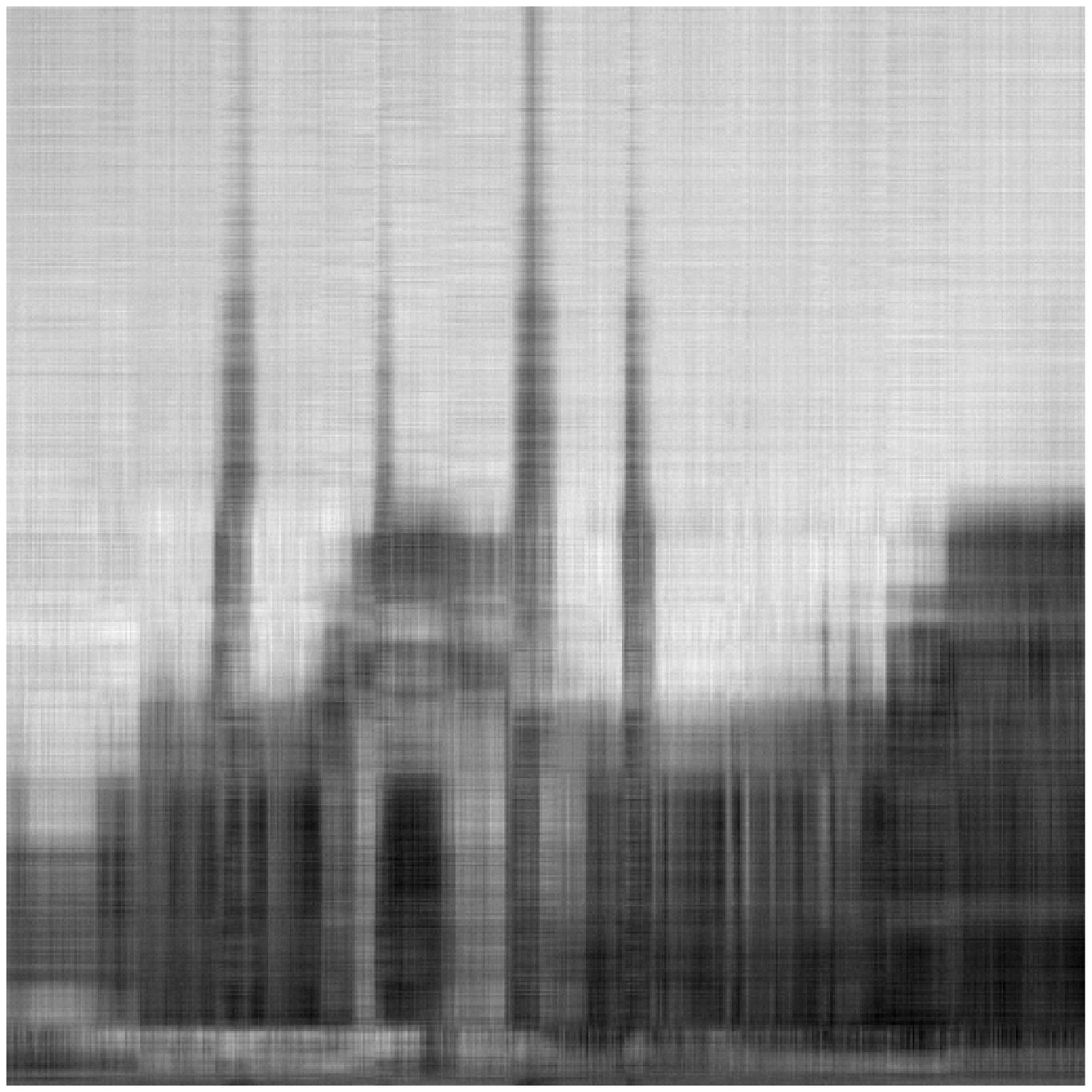}&
\includegraphics[trim = 48mm 65mm 15mm 65mm, clip, width=0.21\linewidth]{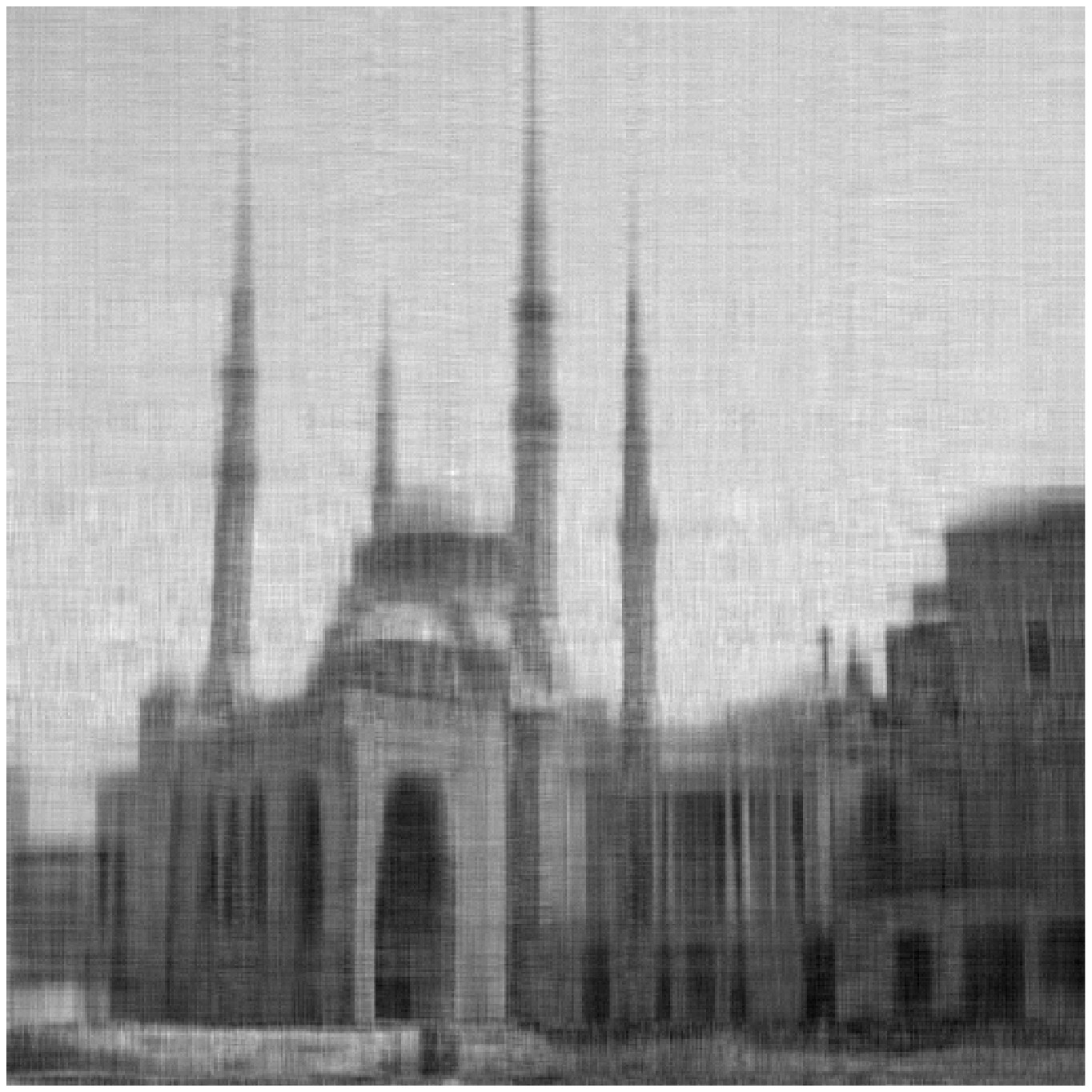}\\
$(a)$ & $(b)$ & $(c)$ & $(d)$\\
\includegraphics[trim = 48mm 65mm 15mm 65mm, clip, width=0.21\linewidth]{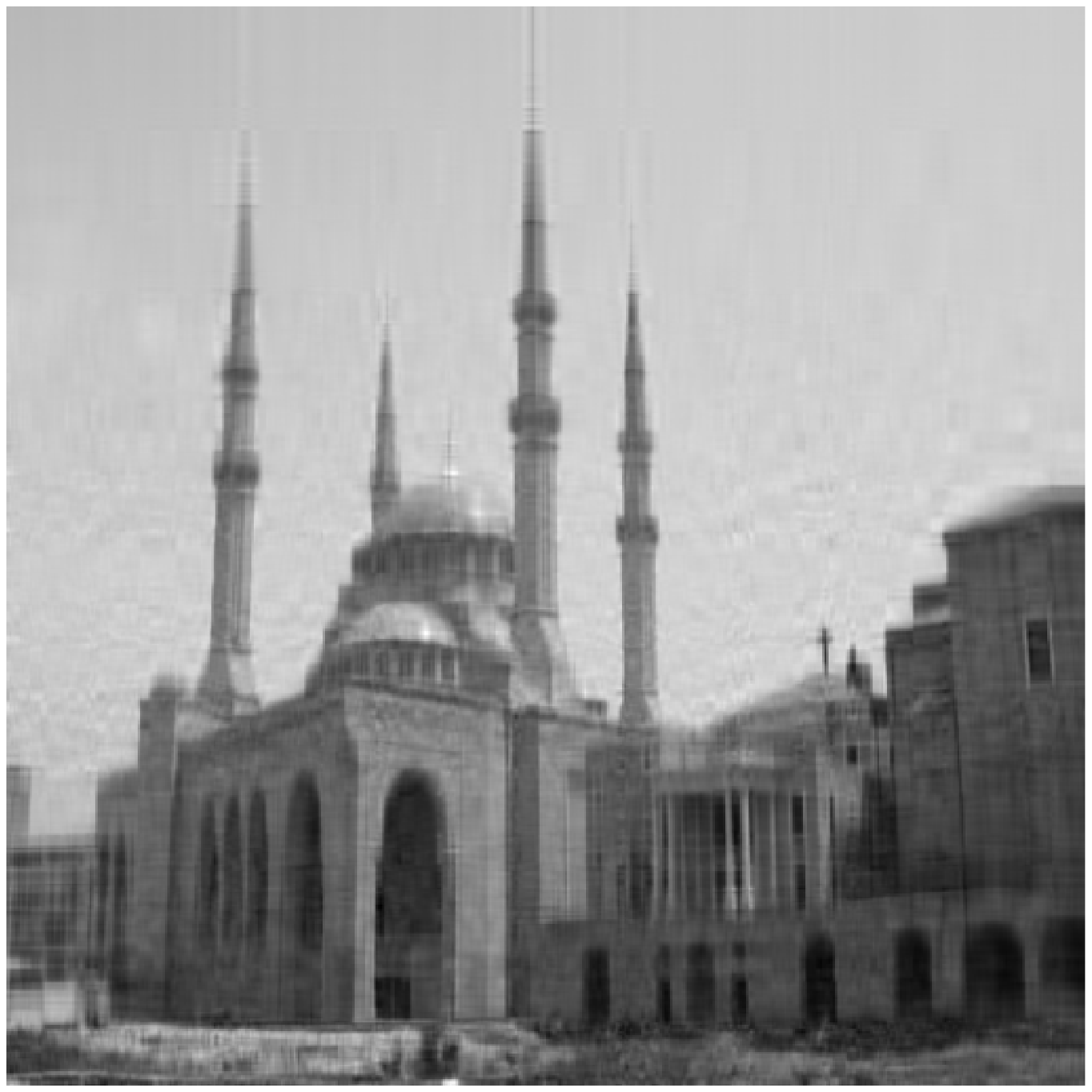} &
\includegraphics[trim = 48mm 65mm 15mm 65mm, clip, width=0.21\linewidth]{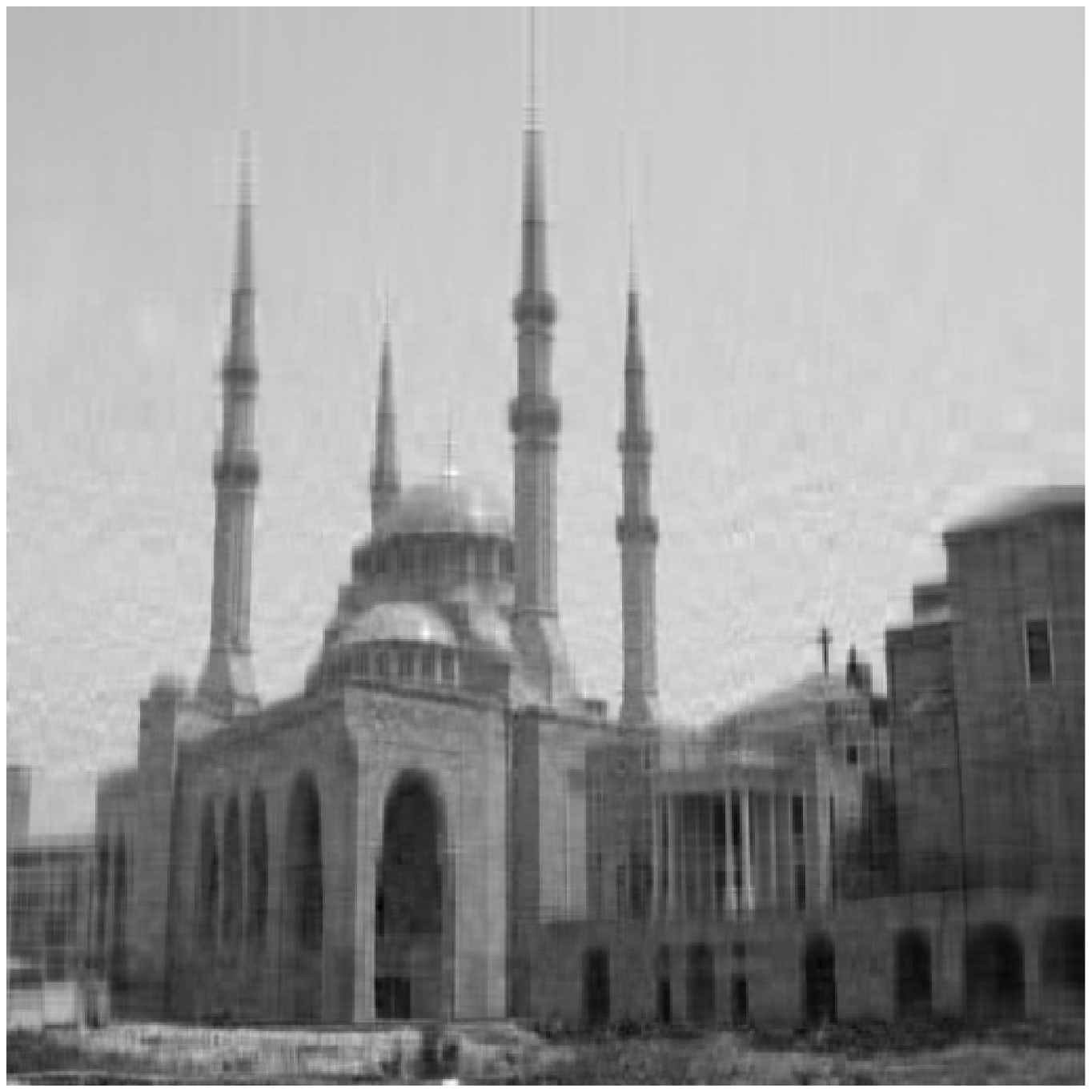}&
\includegraphics[trim = 48mm 65mm 15mm 65mm, clip, width=0.21\linewidth]{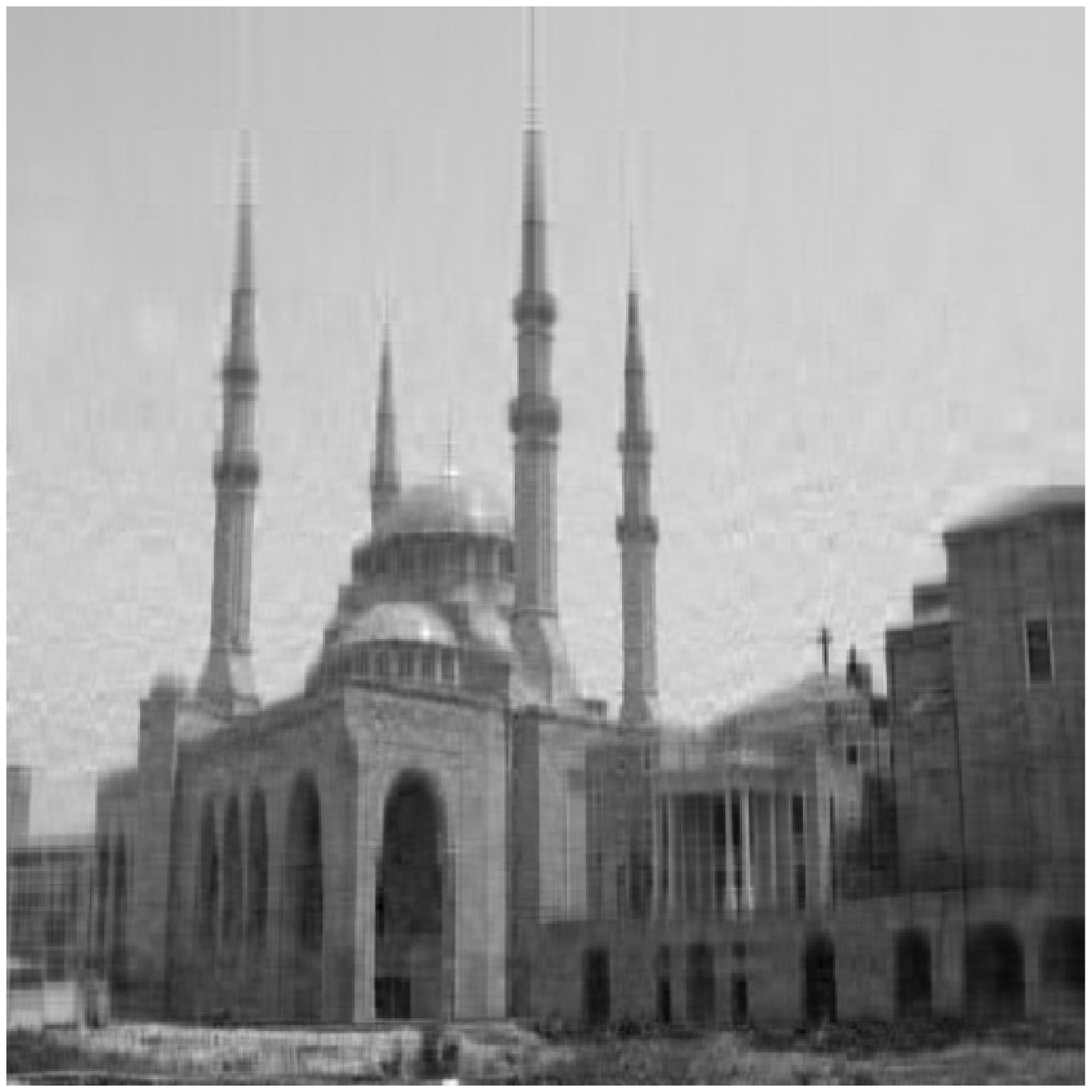}&
\includegraphics[trim = 48mm 65mm 15mm 65mm, clip, width=0.21\linewidth]{Reco_L_BKSVD_2}\\
$(e)$ & $(f)$ & $(g)$ & $(h)$
\end{tabular}
\caption{Comparison of algorithms for real 2D image with $g(x) = \frac{1-e^{-x}}{1+e^{-x}}$. (a) True $512\times 512$ image. (b) Truncated true image with $30$ top singular values. Reconstructed image using (c) FGD, (d) FGD with longer time, (e) SVD, (f) MAPLE ($r = 30$), (g) MAPLE ($r=40$), (h) MAPLE ($r = 50$). }
\label{Real2D}
%\vskip -5mm
\end{figure*}

\begin{table}[t]
\caption{Numerical results for the real data experiment illustrating in Figure~\ref{Real2D}. $T$ denotes the number of iterations.}
\label{TableRealData}
\begin{center}
\begin{small}
\renewcommand{\arraystretch}{1.5}
\begin{tabular}{lccr}
\hline
Algorithm & Relative Error & Running Time & Projected Rank\\
\hline
FGD ($T=300$) & $0.0879$ & $4.9816$ & $30 $ \\
FGD ($T=1000$) & $0.0602$ & $15.9472$ & $30 $ \\
SVD ($T=300$)& $4.4682e-04$ & $19.4700$ & $30$ \\
MAPLE  ($T=300$) & $9.7925e-05$ & $4.2375$ & $30$\\
MAPLE  ($T=300$) & $9.9541e-05$ & $5.5571$ & $40$\\
MAPLE ($T=300$) & $1.3286e-04$ & $7.1306$ & $50$\\
\hline
\end{tabular}
\end{small}
\end{center}
\end{table}

%\textbf{Synthetic data}. 
We report results for all algorithms in Figure~\ref{NLARM}. The link function is set to $g(x) = 2x + \sin(x)$; this function satisfies the derivative conditions discussed above. We  construct the ground truth low-rank matrix $L^*$ with rank $r^*$ by generating a random matrix $U\in\R^{p\times r^*}$ with entries drawn from the standard normal distribution. We ortho-normalize the columns of $U$, and set $L^* = UDU^T$ where $D\in\R^{r^*\times r^*}$ is a diagonal matrix with $D_{11} = \kappa(L^*)$, and $D_{jj} = 1$ for $j\neq1$. After this, we apply a linear operator $\A$ on $L^*$, i.e., $\A(L^*)_i = \langle A_i,L^*\rangle$ where the choice of $A_i$ has been discussed above. Finally, we obtain the measurements $y=g(\A(L^*))$. When reporting noise robustness, we add a Gaussian noise vector $e\in\R^{m}$ to $g(\A(L^*))$. 

In Figure~\ref{NLARM}(a), the running time of the four algorithms are compared. For this experiment, we have chosen $p=1000$, and the rank of the underlying matrix $L^*$ to be $50$. We also set the projected rank as $r = 50$. The number of measurements is set to $n= 4pr$. We consider a well-conditioned matrix $L^*$ with $\kappa(L^*)= 1.1$ for top plot  and $\kappa = 20$ for the bottom one. Then, we measure the relative error in estimating of $L^*$ in Frobenius norm in log scale versus the CPU time takes for $200$ iterations for all of the algorithms. We run the algorithms for $15$ Monte Carlo trials. As we can see, when $\kappa$ is small, FGD has comparable running time with MAPLE (top plot); on the other hand, when we have ill-posed $L^*$, FGD takes much longer to achieve the same relative error.        

Next, we show the performance of the algorithms when the projected rank is changed. The parameters are as $p=300$, $\kappa(L^*)= 1.1$, $r^*=10$, and $n= 4pr$. We set the number of Monte Carlo trials to $50$. In the top plot in Panel (b), we have plotted the relative error as before versus the various $r$ values by averaging over the trials. As we can see, projecting onto the larger space is an effective and practical strategy to achieve small relative error when we do not know the true rank. %However, this is just an empirical observation for FGD~\cite{bhojanapalli2016dropping}. 
Furthermore, the bottom plot of Panel (b) shows the the average running time for either achieving relative error less than $10^{-4}$, or $100$ iterations versus the projected rank. These results suggest that both FGD and MAPLE have the comparable running when we increase the projected rank, while the other SVP algorithms have much longer running time.   

Next, we consider the effect of increasing condition number of the underlying low-rank matrix $L^*$ on the performance of the different algorithms. To do this, we set $p=300$, and $r^* = r=10$. The number of measurements is set to $cpr$ where $c = 5,8,11$ for FGD and $5$ for others. Then we run all the algorithms $50$ times with different condition numbers ranging from $\kappa=1$ (well-posed) to i.e., $\kappa=1024$ (highly ill-posed). We define the probability of success as the number of times that the relative error is less than $0.001$. As illustrated in the top plot of panel (c), all SVP-type algorithms are always able to estimate $L^*$ even for large condition number, i.e., $\kappa = 1024$, whereas FGD fails. In our opinion, this feature is a key benefit of MAPLE over the current fastest existing methods for low-rank estimation (based on factorization approaches).
%But if we increase the number of measurements by different multiplicative coefficients ($c= 8,11$), we see that FGD shows better performance.   

Finally, we consider the noisy scenario in which the observation $y$ is corrupted by different Gaussian noise level. The parameters are set as $p=300$, $r=10,25,40$ for MAPLE and $10$ for the others, $r^*=10$, $n=7pr$, and $\kappa = 2$. The bottom plot in Panel (c) shows the averaged over $50$ trials of the relative error in $L^*$ versus the various standard deviations. From this plot, we see that MAPLE with $r=40$ is most robust, indicating that projection onto the larger subspace is beneficial when noise is present.

%\textbf{Real data}. 
We also run MAPLE on a real 2D $512\times 512$ image, assumed to be an approximately low-rank matrix. The choice of $\A$ is as before, but for the link function, we choose the sigmoid $g(x) = \frac{1-e^{-x}}{1+e^{-x}}$. Figure~\ref{Real2D} visualizes the reconstructed image by different algorithms. In Figure~\ref{Real2D}, (a) is the true image and (b) is the same image truncated to its $r^* = 30$ largest singular values. The result of FGD is shown in (c) and (d) where for (d) we let algorithm run for many more iterations. Reconstruction by SVD is shown in (e). Finally, (f), (g), and (h) illustrate the reconstructed image by using MAPLE with various rank parameters. The numerical reconstruction error is given in Table~\ref{TableRealData}. MAPLE is the fastest method among all methods, even when performing rank-$r$ projection with $r$ larger than $r^*$. 

%%%%%%%%%%%%%%%%%%%%%%%%%%%%%%%%%%%%%%%%%%%%%%%%%%%%
%%%%%%%%%%%%%%%%%%%%%%%%%%%%%%%%%%%%%%%%%%%%%%%%%%%%
\subsection{Logistic PCA}

\begin{figure*}[t]
\centering
\begin{tabular}{cc}
\begin{tikzpicture}[scale=0.8]
\begin{axis}[
                width=5cm,
                height=4.5cm,
                scale only axis,
                xmin=0, xmax=13.5,
                xlabel = {Time (sec)},
                xmajorgrids,
                ymin=13, ymax=13.45,
                ylabel={Logistic loss},
                ymajorgrids,
                %title={myplot},
%                axis lines*=left,
                line width=1.0pt,
                mark size=1.5pt,
                legend style={at={(0.54,.67)},anchor=south west,draw=black,fill=white,align=left}
                ]
\addplot  [color=blue,
               dashed, 
               very thick,
               mark=o,
               mark options={solid,scale=.4},
               ]
               table [x index=0,y index=1]{./RunTIme_well.txt};
\addlegendentry{FGD}
\addplot [color=red,
               dotted, 
               very thick,
               mark=square,
               mark options={solid,scale=.4},
               ]
               table [x index=2,y index=3]{RunTIme_well.txt};
\addlegendentry{SVD}
%\addplot [color=green,
%               solid, 
%               very thick,
%               mark=star,
%               mark options={solid,scale=.5},
%               ]
%               table [x index=4,y index=5]{RunTIme_well.txt};
%\addlegendentry{SVDs}            
\addplot [color=orange,
               solid, 
               very thick,
               mark= diamond,
               mark options={solid,scale=.4},
               ]
               table [x index=6,y index=7]{RunTIme_well.txt};
\addlegendentry{MAPLE}               
\end{axis}
\end{tikzpicture}&
%%%%%%%%%%%%%%%%%%%%%%%%%%%%%%%%%%%%%
\begin{tikzpicture}[scale=0.8]
\begin{axis}[
                width=5cm,
                height=4.5cm,
                scale only axis,
                xmin=5, xmax=30,
                xlabel = {Projected rank, $r$},
                xmajorgrids,
                ymin=9.1, ymax=9.8,
                ylabel={Logistic loss},
                ymajorgrids,
                %title={myplot},
%                axis lines*=left,
                line width=1.0pt,
                mark size=1.5pt,
                legend style={at={(0.54,.67)},anchor=south west,draw=black,fill=white,align=left}
                ]
\addplot  [color=blue,
               dashed, 
               very thick,
               mark=o,
               mark options={solid,scale=1},
               ]
               table [x index=0,y index=1]{MultiRank.txt};
\addlegendentry{FGD}
\addplot [color=red,
               dotted, 
               very thick,
               mark=square,
               mark options={solid,scale=1},
               ]
               table [x index=0,y index=2]{MultiRank.txt};
\addlegendentry{SVD}
%\addplot [color=green,
%               solid, 
%               very thick,
%               mark=star,
%               mark options={solid,scale=.5},
%               ]
%               table [x index=0,y index=3]{MultiRank.txt};
%\addlegendentry{SVDs}            
\addplot [color=orange,
               solid, 
               very thick,
               mark=diamond,
               mark options={solid,scale=1},
               ]
               table [x index=0,y index=4]{MultiRank.txt};
\addlegendentry{MAPLE}               
\end{axis}
\end{tikzpicture}\\
%%%%%%%%%%%%%%%%%%%%%%%%%%%%%%%%%%%%%
\begin{tikzpicture}[scale=0.8]
\begin{axis}[
                width=5cm,
                height=4.5cm,
                scale only axis,
                xmin=0, xmax=13,
                xlabel = {Time (sec)},
                xmajorgrids,
                ymin=11.7, ymax=13.5,
                ylabel={Logistic loss},
                ymajorgrids,
                %title={myplot},
%                axis lines*=left,
                line width=1.0pt,
                mark size=1.5pt,
                legend style={at={(0.54,.67)},anchor=south west,draw=black,fill=white,align=left}
                ]
\addplot  [color=blue,
               dashed, 
               very thick,
               mark=o,
               mark options={solid,scale=.4},
               ]
               table [x index=0,y index=1]{RunTIme_Bad.txt};
\addlegendentry{FGD}
\addplot [color=red,
               dotted, 
               very thick,
               mark=square,
               mark options={solid,scale=.4},
               ]
               table [x index=2,y index=3]{RunTIme_Bad.txt};
\addlegendentry{SVD}
%\addplot [color=green,
%               solid, 
%               very thick,
%               mark=star,
%               mark options={solid,scale=.5},
%               ]
%               table [x index=4,y index=5]{RunTIme_Bad.txt};
%\addlegendentry{SVDs}            
\addplot [color=orange,
               solid, 
               very thick,
               mark= diamond,
               mark options={solid,scale=.4},
               ]
               table [x index=6,y index=7]{RunTIme_Bad.txt};
\addlegendentry{MAPLE}               
\end{axis}
\end{tikzpicture}&
%%%%%%%%%%%%%%%%%%%%%%%%%%%%%%
\begin{tikzpicture}[scale=0.8]
\begin{axis}[
                width=5cm,
                height=4.5cm,
                scale only axis,
                xmin=1.3, xmax=1164,
                xlabel = {Condition Number, $\kappa$ },
                xmajorgrids,
                ymin=5, ymax=10,
                ylabel={Logistic loss},
                ymajorgrids,
                %title={myplot},
%                axis lines*=left,
                line width=1.0pt,
                mark size=1.5pt,
                legend style={at={(0.36,.47)},anchor=south west,draw=black,fill=white,align=left}
                ]
\addplot  [color=blue,
               solid, 
               very thick,
               mark=o,
               mark options={solid,scale=1},
               ]
               table [x index=0,y index=1]{ConNum.txt};
\addlegendentry{FGD (T=50)}
\addplot  [color=blue,
               dashed, 
               very thick,
               mark=square,
               mark options={solid,scale=1},
               ]
               table [x index=0,y index=2]{ConNum.txt};
\addlegendentry{FGD (T=200)}
\addplot  [color=blue,
               loosely dashed, 
               very thick,
               mark=triangle,
               mark options={solid,scale=1},
               ]
               table [x index=0,y index=3]{ConNum.txt};
\addlegendentry{FGD (T=400)}
\addplot [color=red,
               dotted, 
               very thick,
               mark=square,
               mark options={solid,scale=1},
               ]
               table [x index=0,y index=4]{ConNum.txt};
\addlegendentry{SVD}
%\addplot [color=green,
%               solid, 
%               very thick,
%               mark=star,
%               mark options={solid,scale=1},
%               ]
%               table [x index=0,y index=5]{ConNum.txt};
%\addlegendentry{SVDs}            
\addplot [color=orange,
               solid, 
               very thick,
               mark=diamond,
               mark options={solid,scale=1},
               ]
               table [x index=0,y index=6]{ConNum.txt};
\addlegendentry{MAPLE}               
\end{axis}
\end{tikzpicture}\\
$(a)$ & $(b)$
\end{tabular}
\caption{ Comparisons of the algorithms for the average of the logarithm of the logistic loss. (a) Parameters: $p =1000$, $r^* = r = 5$. \textbf{Top:} $\kappa(L^*)= 1.1699$. \textbf{Bottom:} $\kappa(L^*)=  21.4712$. (b) Parameter: $p=200$. \textbf{Top:} Effect of extending the projected space. \textbf{Bottom:} Effect of increasing the condition number for $r^*=r =5$. $T$ denotes the number of iterations. }
\label{LogisticOCAResults}
%\vskip -5mm
\end{figure*}
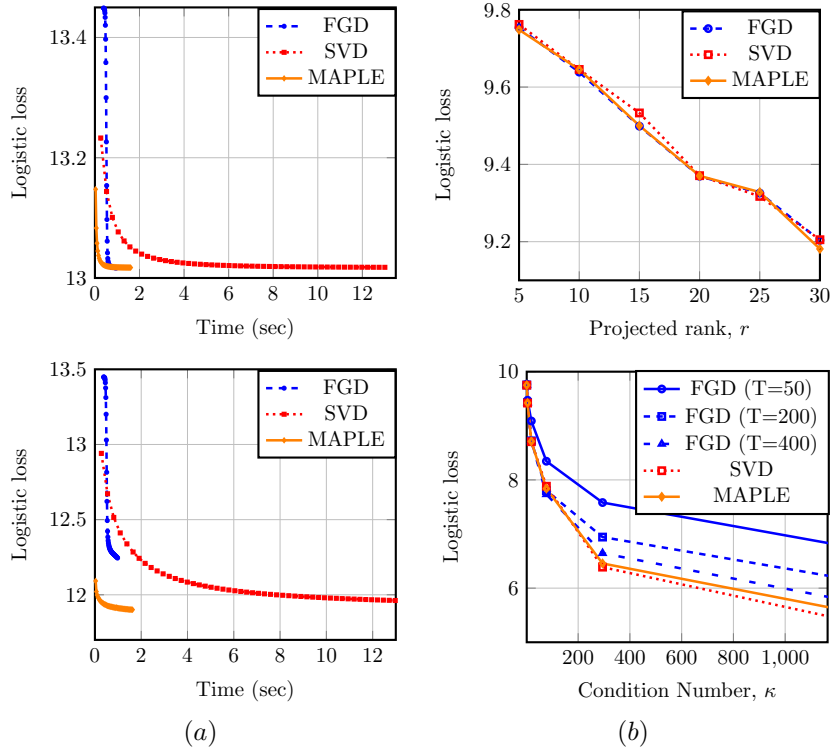

In this section, we provide some representative experimental results for our second application, logistic PCA. We report results for all algorithms in Figure~\ref{LogisticOCAResults}. We  construct the ground truth low-rank matrix $L^*$ with rank $r^*$ similar to NLARM case. 

In panel (a), the running time of all algorithms are compared. For this experiment, we have chosen $p=1000$, and the rank of the underlying matrix $L^*$ to be $5$. We also set the projected rank as $r = 5$. We consider a well-conditioned matrix $L^*$ with $\kappa(L^*)= 1.1531$ for top plot  and $\kappa = 21.4180$ for the bottom one. Then we measure the evolution of the logistic loss defined in~\eqref{logisticPCA} without any regularizer versus the CPU time takes for $50$ iterations for all of the algorithms. We run the algorithms for $20$ Monte Carlo trials, and illustrate the average result. As we can see, when $\kappa$ is small, FGD has comparable running time with MAPLE (top plot); on the other hand, when we have ill-posed $L^*$, FGD takes longer to achieve the same performance.

In panel (b), top plot, we consider the effect of increasing the dimension of the projected space. In this experiment, we set $p=200$, consider the well-posed case where $\kappa(L^*) = 1.4064$, and use $20$ Monte Carlo trials. As we can see all the algorithm show the same trend which verifies that projecting onto the larger space is an effective and practical strategy to achieve small relative error when we do not know the true rank (This is expected according to the theory of MAPLE, while it is not theoretically justified by factorized method).

Finally, the bottom plot in panel (b) shows the effect of increasing condition number of $L^*$. In this experiment, $p=200$, $r^*=r = 5$, and the number of trials equals to $20$. We first let all algorithms run for $50$ iterations, and also consider FGD for more number of iterations, $T=200$ and $T=400$. As it is illustrated, both MAPLE and SVD algorithms are more robust to the large condition number than FGD with $50$ number of iterations. But if we let FGD run longer, it shows the same performance as SVPs which again verifies the dependency of the running time of factorized method to the condition number.

%%%%%%%%%%%%%%%%%%%%%%%%%%%%%%%%%%%%%%%%%%%%%%%%%%%%%%%%%%%%%%%%%%%%%%%%%%%%%%%%%%%%%%%%%%%%%%%%%%%%%%%%

\subsection{Precision Matrix Estimation (PME)}
%We now represents some simulation results for our last application. 
We start first with synthetic datasets.
%\textbf{Synthetic data.} 
We use a diagonal matrix with positive values for the (known) sparse part, $\bS$. For a given number of observed variables $p$, we set $r= 5\%$ as the number of latent variables. We then follow the method proposed in~\cite{ma2013alternating} for generating the sparse and low-rank components $\bS$ and $L^*$. For simplicity, we impose the sparse component to be PSD by forcing it to be positive diagonal matrix. All reported results on synthetic data are the average of 5 independent Monte-Carlo trials.  
%Then, we randomly create a sparse matrix $K\in\R^{(p+r)\times (p+r)}$ with sparsity equal to $2.5\%$ {where the nonzero entries are randomly distributed with standard normal distribution.}  We calculate the matrix $K^TK$ as the true covariance matrix, $\Sigma$. We then choose the sub-matrix $\Theta_{1:p,1:p} = \Sigma^{-1}_{1:p,1:p}$, maintain the diagonal entries, and set the other entries to zero. We then perturb the diagonal entries with some uniform random numbers and make sure that the diagonal entries are positive. This process generates the known sparse part, $S^*$.} 
%For the known sparse part, $S^*$ we randomly generate a diagonal matrix with positive entries which are selected randomly according to the normal distribution. For the low-rank part, we first generate a matrix $K\in\R^{p\times r}$ with entries randomly selected from normal distribution. Then we choose $L^*=K^TK$.
%choose $L^* = \Theta_{(1: p, p + 1: p + r)}\Theta_{(p + 1: p +r, p + 1: p + r)}^{-1}\Theta_{(p + 1: p + r, 1: p)}$. 
Our observations comprise $n$ samples, $x_1,x_2,\ldots,x_n \overset{i.i.d}{\thicksim}\mathcal{N}(0,(\bS+L^*)^{-1})$. In our experiments, we used a full SVD as projection step for exact projected-gradient procedure, FGD, ADMM method and nuclear norm minimization. We used CVX to solve nuclear norm minimization; alternatively, one can use other convex approaches. % which might be faster than convex. 

Panels (a) and (b) in Figure~\ref{allfigsPME} illustrate the comparison of algorithms for PME in terms of the relative error of the estimated $L$ in Frobenius norm versus the ``oversampling" ratio $n/p$. In this experiment, we fixed $p=100$ in (a) and $p=1000$ in (b) and vary $n$. In addition, for both of these results, condition number is given by $\kappa(L^*) = 2.4349$ and  $\kappa(L^*) = 2.9666$, respectively. We observe that MAPLE, FGD, and exact projected gradient descent are able to estimate the low-rank matrix even for the regime where $n$ is very small, whereas both ADMM and CVX does not produce very meaningful results.
\begin{figure}
%\hskip -5.5mm
\centering
\begin{tabular}{ccc}
\begin{tikzpicture}[scale=.7]
\begin{axis}[
                width=5cm,
                height=4.5cm,
                scale only axis,
                xmin=25, xmax=400,
                xlabel = {$n/p$},
                xtick={25,50,100,200,400},
                xticklabels={25,50,100,200,400},
                xmajorgrids,
                ymin=0, ymax=3,
                ylabel={ Relative Frobenius Error},
                ymajorgrids,
                %title={myplot},
%                axis lines*=left,
                line width=1.0pt,
                mark size=1.5pt,
                legend style={ at={(0.54,.48)},anchor=south west,draw=black,fill=white,align=left}
%                legend style={font=\fontsize{.02}{2}\selectfont, at={(0.62,.68)},anchor=south   
%                west,draw=black,fill=white,align=left}
                ]
\addplot  [color=blue,
               dashed, 
               very thick,
               mark=o,
               mark options={solid,scale=1.2},
               ]
               table [x index=0,y index=1]{ReErrp100.txt};
\addlegendentry{FGD}
\addplot [color=red,
               dotted, 
               very thick,
               mark=square,
               mark options={solid,scale=1.2},
               ]
               table [x index=0,y index=2]{ReErrp100.txt};
\addlegendentry{SVD}
\addplot [color=orange,
               solid, 
               very thick,
               mark=diamond,
               mark options={solid,scale=1.2},
               ]
               table [x index=0,y index=3]{ReErrp100.txt};
\addlegendentry{MAPLE}            
\addplot [color=green,
               loosely dashed , 
               very thick,
               mark= star,
               mark options={solid,scale=1.2},
               ]
               table [x index=0,y index=4]{ReErrp100.txt};
\addlegendentry{ADMM}   
\addplot [color=black,
               dashdotted , 
               very thick,
               mark= triangle,
               mark options={solid,scale=1.2},
               ]
               table [x index=0,y index=4]{ReErrp100.txt};
\addlegendentry{Convex}   
\end{axis}
\end{tikzpicture}&
%%%%%%%%%%%%%%%%%%%%%%%%%%%%%%%%%%%%%%%%%
\begin{tikzpicture}[scale=.7]
\begin{axis}[
                width=5cm,
                height=4.5cm,
                scale only axis,
                 xmin=25, xmax=400,
                xlabel = {$n/p$},
                xtick={25,50,100,200,400},
                xmajorgrids,
                ymin=0, ymax=3,
                ylabel={ Relative Frobenius Error},
                ymajorgrids,
                %title={myplot},
%                axis lines*=left,
                line width=1.0pt,
                mark size=1.5pt,
                legend style={at={(0.54,.58)},anchor=south   
                west,draw=black,fill=white,align=left}
                ]
\addplot  [color=blue,
               dashed, 
                very thick,
               mark=o,
               mark options={solid,scale=1.2},
               ]
               table [x index=0,y index=1]{ReErrp1000.txt};
\addlegendentry{FGD}
\addplot [color=red,
               dotted, 
               very thick,
               mark=square,
               mark options={solid,scale=1.2},
               ]
               table [x index=0,y index=2]{ReErrp1000.txt};
\addlegendentry{SVD}
\addplot [color=orange,
               solid, 
                very thick,
               mark=diamond,
               mark options={solid,scale=1.2},
               ]
               table [x index=0,y index=3]{ReErrp1000.txt};
\addlegendentry{MAPLE}            
\addplot [color=green,
               loosely dashed , 
               very thick,
               mark= star,
               mark options={solid,scale=1.2},
               ]
               table [x index=0,y index=4]{ReErrp1000.txt};
\addlegendentry{ADMM}               
\end{axis}
\end{tikzpicture}&
%%%%%%%%%%%%%%%%%%%%%%%%%%%%%%%%%%%%%%%%
\begin{tikzpicture}[scale=.7]
\begin{axis}[
                width=5cm,
                height=4.5cm,
                scale only axis,
                xmin=0, xmax=20,
                xlabel = {Time (sec)},
                xmajorgrids,
                ylabel={ Negative Log-Likelihood},
                ymajorgrids,
                %title={myplot},
%                axis lines*=left,
                line width=1.0pt,
                mark size=1.5pt,
                legend style={ at={(0.55,.68)},anchor=south   
                west,draw=black,fill=white,align=left}
                ]
\addplot  [color=orange,
               solid, 
                thick,
               mark=o,
               mark options={solid,scale=1.2},
               ]
               table [x index=0,y index=1]{RuntimeRosetta.txt};
\addlegendentry{MAPLE}
\addplot [color=red,
               dashed, 
                thick,
               mark=square,
               mark options={solid,scale=1.2},
               ]
               table [x index=2,y index=3]{RuntimeRosetta.txt};
\addlegendentry{SVD}
%\addplot [color=orange,
%               dashdotted, 
%                thick,
%               mark=star,
%               mark options={solid,scale=1.2},
%               ]
%               table [x index=4,y index=5]{RuntimeRosetta.txt};
%\addlegendentry{SVDs}                       
\end{axis}
\end{tikzpicture}\\
$(a)$ & $(b)$ &$(c)$
\end{tabular}
\caption{Comparison of algorithms both in synthetic and real data. (a) relative error of $L$ in Frobenius norm with $p =100$, and $r=r^* = 5$. (b) relative error of $L$ in Frobenius norm with $p =1000$, and $r=r^* = 50$. (c) NLL versus time in Rosetta data set with $p=1000$.}
\label{allfigsPME}
%\vskip -5mm
\end{figure}
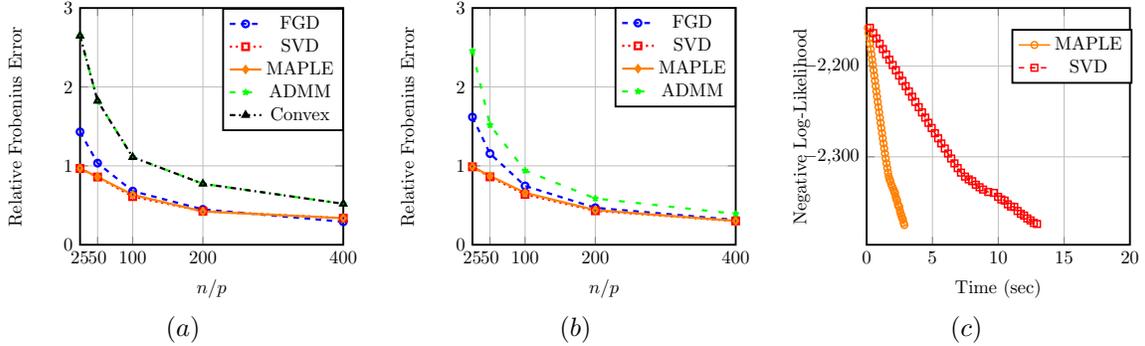
We also report the results of several more experiments on synthetic data. In the first experiment, we set $p=100$, $n=400p$, and $r=r^*=5$. Table~\ref{p100n400ptable} lists several metrics that we use for algorithm comparison. 
%An algorithm terminates if it satisfies one of two conditions: the evaluated objective function in the estimated $L$ in each iteration falls below the true negative log likelihood (NLL) (i.e., $F(L^*)$), or the total number of iterations exceeds $600$. 
From Table~\ref{p100n400ptable}, we see that MAPLE, FGD, and exact procedure produce better estimates of $L$ compared to ADMM and convex method. As anticipated, the total running time of convex approach is much larger than other algorithms. Finally, the estimated objective function for first three  algorithms is very close to the optimal (true) objective function compared to ADMM and CVX. 

We increase the dimension to $p=1000$ and reported the same metrics in Table~\ref{p1000n400ptable} similar to Table~\ref{p100n400ptable}. We did not report convex results as it takes long time to be completed. Again, we get the same conclusions as Table~\ref{p100n400ptable}. Important point here is that in this specific application, FGD has better running time compared to MAPLE for both well-condition and ill-condition problem. Here, we did not report the running time for the ill-posed case; however, we observed that FGD is not affected by condition number of ground-truth. We conjecture that FGD delivers a solution for problem~\eqref{opt_probPME} such that its convergence is independent of the condition number of ground-truth similar to~\cite{bhojanapalli2016global} where authors showed that for linear matrix sensing problem, there is no dependency on the condition number if they use FGD method. Proving of this conjecture can be interesting future direction. Also, Tables~\ref{p100n50ptable} and~\ref{p1000n50ptable} show the same experiment discussed in Tables~\ref{p100n400ptable} and~\ref{p1000n400ptable}, but for small number of samples, $n=50p$.

%{\textbf{Real data} 
Here, we just evaluate our methods through the \emph{Rosetta} gene expression data set~\cite{hughes2000functional}. This data set includes 301 samples with 6316 variables. We run the ADMM algorithm by~\cite{ma2013alternating} with $p=1000$ variables which have highest variances,  and obtained an estimate of the positive definite component $\bS$. Then we used $\bS$ as the input for MAPLE and exact projection procedure. The target rank for all three algorithms is set to be the same as that returned by ADMM. In Figure~\ref{allfigsPME} plot (c), we illustrate the NLL for these algorithms versus wall-clock time (in seconds) over 50 iterations. We observe that all the algorithms demonstrate linear convergence, as predicted in the theory. Among the these algorithms, MAPLE obtains the quickest rate of decrease of the objective function.}
%\vskip -3mm
\begin{table*}
\caption{Comparison of different algorithms for $p=100$ and $n=400p$. NLL stands for negative log-likelihood.}
\label{p100n400ptable}
\vskip 0.1in
\begin{center}
\begin{small}
\begin{sc}
\begin{tabular}{lcccccr}
\hline
%\abovespace
Alg & Estimated NLL & True NLL  & Relative error & Total time \\
\hline 
FGD & $-9.486278e+01 $ & $-9.485018e+01$ & $2.914218e-01$ & $2.150596e-02$ \\
SVD & $9.485558e+01$  & $-9.485018e+01$ & $3.371867e-01$ & $6.552529e-01$  \\
MAPLE  & $-9.485558e+01$ & $-9.485018e+01$ & $3.371742e-01$ & $3.092728e-01$   \\
ADMM & $-9.708976e+01$ & $-9.485018e+01$ & $5.192783e-01$ & $1.475124e+00 $ \\
Convex & $-9.491779e+01$  & $-9.485018e+01$ & $5.192783e-01$ & $7.482316e+02$ \\
\hline
\end{tabular}
\end{sc}
\end{small}
\end{center}
%\vskip -0.1in
\end{table*}

\begin{table*}[ht]
\caption{Comparison of different algorithms for $p=1000$ and $n=400p$.} %The meaning of the columns are the same as Table~\ref{p100n400ptable}}
\label{p1000n400ptable}
%\vskip 0.1in
\begin{center}
\begin{small}
\begin{sc}
\begin{tabular}{lcccccr}
\hline
%\abovespace\belowspace
Alg & Estimated NLL & True NLL  & Relative error & Total time \\
\hline
%\abovespace
FGD & $-2.638684e+03$  & $-2.638559e+03$ & $3.144617e-01$ & $1.301985e+01$ \\
SVD & $-2.638674e+03 $ &$-2.638559e+03$ & $3.019913e-01$ & $1.584453e+02$  \\
MAPLE & $-2.638675e+03$ & $-2.638559e+03$ & $3.020130e-01$ & $2.565310e+01$  \\
ADMM & $-2.638920e+03$ & $-2.638559e+03$ & $3.921407e-01$ & $3.375073e+02$ \\
\hline
\end{tabular}
\end{sc}
\end{small}
\end{center}
%\vskip -0.1in
\end{table*}

\begin{table*}[!t]
\caption{Comparison of different algorithms for $p=100$ and $n=50p$. NLL stands for negative log-likelihood.}
\label{p100n50ptable}
%\vskip 0.1in
\begin{center}
\begin{small}
\begin{sc}
\begin{tabular}{lcccccr}
\hline
%\abovespace\belowspace
Alg & Estimated NLL & True NLL  & Relative error & Total time \\
\hline
%\abovespace
FGD & $-9.483037e+01$  & $-9.470944e+01$ & $1.034812e+00$ & $2.294928e-02$ \\
SVD  & $-9.477855e+01$ & $-9.470944e+01$ & $8.586494e-01$ & $1.026811e+00$  \\
MAPLE  & $-9.478611e+01$ & $-9.470944e+01$ & $8.606593e-01$ & $4.854349e-01$  \\
ADMM & $-9.356307e+01$ & $-9.470944e+01$ & $1.823421e+00$ & $3.001534e+00$  \\
Convex & $-9.528296e+01$  & $-9.470944e+01$ & $1.864212e+00$ & $7.046295e+02$    \\
\hline
\end{tabular}
\end{sc}
\end{small}
\end{center}
%\vskip -0.1in
\end{table*}

\begin{table*}[ht]
\caption{Comparison of different algorithms for $p=1000$ and $n=50p$. %The meaning of the columns are the same as Table~\ref{p100n50ptable}.
}
\label{p1000n50ptable}
%\vskip 0.1in
\begin{center}
\begin{small}
\begin{sc}
\begin{tabular}{lcccccr}
\hline
%\abovespace\belowspace
Alg & Estimated NLL & True NLL  & Relative error & Total time \\
\hline
%\abovespace
FGD & $-2.639646e+03$  & $-2.638491e+03$ & $1.155856e+00$ & $1.335701e+01$ \\
SVD & $-2.638804e+03$ & $-2.638491e+03$ & $8.610451e-01$ & $1.567543e+02$  \\
MAPLE & $-2.638878e+03$ & $-2.638491e+03$ & $8.722342e-01$ & $2.606750e+01$  \\
ADMM & $-2.643757e+03$ & $-2.638491e+03$ & $1.517834e+00$ & $4.019458e+02$ \\
\hline
\end{tabular}
\end{sc}
\end{small}
\end{center}
%\vskip -0.1in
\end{table*}

\newpage
\small
\bibliographystyle{unsrt}
\bibliography{mrsbiblio.bib,chinbiblio.bib,csbib.bib,kernels.bib}

\begin{thebibliography}{10}

\bibitem{chen2015fast}
Y.~Chen and M.~Wainwright.
\newblock Fast low-rank estimation by projected gradient descent: General
  statistical and algorithmic guarantees.
\newblock {\em arXiv preprint arXiv:1509.03025}, 2015.

\bibitem{udell2016generalized}
M.~Udell, C.~Horn, R.~Zadeh, and S.~Boyd.
\newblock Generalized low rank models.
\newblock {\em Foundations and Trends{\textregistered} in Machine Learning},
  9(1):1--118, 2016.

\bibitem{recht2010guaranteed}
B.~Recht, M.~Fazel, and P.~Parrilo.
\newblock Guaranteed minimum-rank solutions of linear matrix equations via
  nuclear norm minimization.
\newblock {\em SIAM review}, 52(3):471--501, 2010.

\bibitem{tu2016low}
S.~Tu, R.~Boczar, M.~Simchowitz, M.~Soltanolkotabi, and B.~Recht.
\newblock Low-rank solutions of linear matrix equations via procrustes flow.
\newblock In {\em icml}, pages 964--973, 2016.

\bibitem{jain2010guaranteed}
P.~Jain, R.~Meka, and I.~Dhillon.
\newblock Guaranteed rank minimization via singular value projection.
\newblock In {\em Adv. Neural Inf. Proc. Sys. (NIPS)}, pages 937--945, 2010.

\bibitem{candes2009exact}
E.~Cand{\`e}s and B.~Recht.
\newblock Exact matrix completion via convex optimization.
\newblock {\em Found. Comput. Math.}, 9(6), 2009.

\bibitem{jain2013low}
P.~Jain, P.~Netrapalli, and S.~Sanghavi.
\newblock Low-rank matrix completion using alternating minimization.
\newblock In {\em Proc. ACM Symp. Theory of Comput.}, pages 665--674. ACM,
  2013.

\bibitem{Goodfellow-et-al-2016}
I.~Goodfellow, Y.~Bengio, and A.~Courville.
\newblock {\em Deep Learning}.
\newblock MIT Press, 2016.
\newblock \url{http://www.deeplearningbook.org}.

\bibitem{davenport20141}
M.~Davenport, Y.~Plan, E.~van~den Berg, and M.~Wootters.
\newblock 1-bit matrix completion.
\newblock {\em Information and Inference}, 3(3):189--223, 2014.

\bibitem{park2016finding}
D.~Park, A.~Kyrillidis, C.~Caramanis, and S.~Sanghavi.
\newblock Finding low-rank solutions via non-convex matrix factorization,
  efficiently and provably.
\newblock {\em arXiv preprint arXiv:1606.03168}, 2016.

\bibitem{fazel2002matrix}
M.~Fazel.
\newblock {\em Matrix rank minimization with applications}.
\newblock PhD thesis, PhD thesis, Stanford University, 2002.

\bibitem{jain2014iterative}
P.~Jain, A.~Tewari, and P.~Kar.
\newblock On iterative hard thresholding methods for high-dimensional
  m-estimation.
\newblock In {\em Adv. Neural Inf. Proc. Sys. (NIPS)}, pages 685--693, 2014.

\bibitem{quanming2017large}
Y.~Quanming, J.~Kwok, T.~Wang, and T.~Liu.
\newblock Large-scale low-rank matrix learning with non-convex regularizers.
\newblock {\em arXiv preprint arXiv:1708.00146}, 2017.

\bibitem{bhojanapalli2016dropping}
S.~Bhojanapalli, A.~Kyrillidis, and S.~Sanghavi.
\newblock Dropping convexity for faster semi-definite optimization.
\newblock In {\em 29th Ann. Conf. Learning Theory}, pages 530--582, 2016.

\bibitem{wang2017unified}
L.~Wang, X.~Zhang, and Q.~Gu.
\newblock A unified computational and statistical framework for nonconvex
  low-rank matrix estimation.
\newblock In {\em Proc. Int. Conf. Art. Intell. Stat. (AISTATS)}, pages
  981--990, 2017.

\bibitem{becker2013randomized}
S.~Becker, V.~Cevher, and A.~Kyrillidis.
\newblock Randomized low-memory singular value projection.
\newblock In {\em Proc. Sampling Theory and Appl. (SampTA)}, number
  EPFL-CONF-184017, 2013.

\bibitem{shen2016tight}
J.~Shen and P.~Li.
\newblock A tight bound of hard thresholding.
\newblock {\em arXiv preprint arXiv:1605.01656}, 2016.

\bibitem{li2016Nonconvex}
X.~Li, T.~Zhao, R.~Arora, H.~Liu, and J.~Haupt.
\newblock Nonconvex sparse learning via stochastic optimization with
  progressive variance reduction.
\newblock {\em arXiv preprint arXiv:1605.02711}, 2016.

\bibitem{kakade2011}
S.~Kakade, V.~Kanade, O.~Shamir, and A.~Kalai.
\newblock Efficient learning of generalized linear and single index models with
  isotonic regression.
\newblock In {\em Adv. Neural Inf. Proc. Sys. (NIPS)}, pages 927--935, 2011.

\bibitem{candes2011rpca}
E.~Cand{\`e}s, X.~Li, Y.~Ma, and J.~Wright.
\newblock Robust principal component analysis?
\newblock {\em Journal of the ACM}, 58(3):11, 2011.

\bibitem{Venkat2009sparse}
V.~Chandrasekaran, S.~Sanghavi, P.~Parrilo, and A.~S. Willsky.
\newblock Sparse and low-rank matrix decompositions.
\newblock In {\em Proc. Allerton Conf. on Comm., Contr., and Comp.}, pages
  962--967, 2009.

\bibitem{netrapalli2014non}
P.~Netrapalli, U.~Niranjan, S.~Sanghavi, A.~Anandkumar, and P.~Jain.
\newblock Non-convex robust pca.
\newblock In {\em Adv. Neural Inf. Proc. Sys. (NIPS)}, pages 1107--1115, 2014.

\bibitem{yi2016fast}
X.~Yi, D.~Park, Y.~Chen, and C.~Caramanis.
\newblock Fast algorithms for robust pca via gradient descent.
\newblock In {\em Adv. Neural Inf. Proc. Sys. (NIPS)}, pages 4152--4160, 2016.

\bibitem{hsieh2014quic}
C.~Hsieh, I.~Dhillon, P.~Ravikumar, S.~Becker, and P.~Olsen.
\newblock Quic \& dirty: A quadratic approximation approach for dirty
  statistical models.
\newblock In {\em Adv. Neural Inf. Proc. Sys. (NIPS)}, pages 2006--2014, 2014.

\bibitem{Venkat2010latent}
V.~Chandrasekaran, P.~Parrilo, and A.~S. Willsky.
\newblock Latent variable graphical model selection via convex optimization.
\newblock In {\em Proc. Allerton Conf. on Comm., Contr., and Comp.}, pages
  1610--1613, 2010.

\bibitem{candes2015phase}
E.~Candes, X.~Li, and M.~Soltanolkotabi.
\newblock Phase retrieval via wirtinger flow: Theory and algorithms.
\newblock {\em IEEE Trans. Inform. Theory}, 61(4):1985--2007, 2015.

\bibitem{candes2013phaselift}
E.~Candes, T.~Strohmer, and V.~Voroninski.
\newblock Phaselift: Exact and stable signal recovery from magnitude
  measurements via convex programming.
\newblock {\em Comm. Pure Appl. Math.}, 66(8):1241--1274, 2013.

\bibitem{netrapalli2013phase}
P.~Netrapalli, P.~Jain, and S.~Sanghavi.
\newblock Phase retrieval using alternating minimization.
\newblock In {\em Adv. Neural Inf. Proc. Sys. (NIPS)}, pages 2796--2804, 2013.

\bibitem{jain2015computing}
P.~Jain, C.~Jin, S.~Kakade, and P.~Netrapalli.
\newblock Computing matrix squareroot via non convex local search.
\newblock {\em arXiv preprint arXiv:1507.05854}, 2015.

\bibitem{johnson2014logistic}
C.~Johnson.
\newblock Logistic matrix factorization for implicit feedback data.
\newblock {\em Adv. Neural Inf. Proc. Sys. (NIPS)}, 27, 2014.

\bibitem{schein2003generalized}
A.~Schein, L.~Saul, and L.~Ungar.
\newblock A generalized linear model for principal component analysis of binary
  data.
\newblock In {\em AISTATS}, volume~3, page~10, 2003.

\bibitem{ji2010robust}
H.~Ji, C.~Liu, Z.~Shen, and Y.~Xu.
\newblock Robust video denoising using low rank matrix completion.
\newblock In {\em cvpr}, pages 1791--1798. IEEE, 2010.

\bibitem{liu2013robust}
G.~Liu, Z.~Lin, S.~Yan, J.~Sun, Y.~Yu, and Y.~Ma.
\newblock Robust recovery of subspace structures by low-rank representation.
\newblock {\em {IEEE} Trans. Pattern Anal. Mach. Intell.}, 35(1):171--184,
  2013.

\bibitem{yang2017nuclear}
J.~Yang, L.~Luo, J.~Qian, Y.~Tai, F.~Zhang, and Y.~Xu.
\newblock Nuclear norm based matrix regression with applications to face
  recognition with occlusion and illumination changes.
\newblock {\em {IEEE} Trans. Pattern Anal. Mach. Intell.}, 39(1):156--171,
  2017.

\bibitem{davenport2016overview}
M.~A Davenport and J.~Romberg.
\newblock An overview of low-rank matrix recovery from incomplete observations.
\newblock {\em {IEEE} J. Select. Top. Sig. Proc.}, 10(4):608--622, 2016.

\bibitem{cvx}
Michael G. and Stephen B.
\newblock {CVX}: Matlab software for disciplined convex programming, version
  2.1.
\newblock \url{http://cvxr.com/cvx}, 2014.

\bibitem{cai2010singular}
J.~Cai, E.~Cand{\`e}s, and Z.~Shen.
\newblock A singular value thresholding algorithm for matrix completion.
\newblock {\em SIAM J. Optimization}, 20(4):1956--1982, 2010.

\bibitem{GoldsteinStuderBaraniuk:2014}
T.~Goldstein, C.~Studer, and R.~Baraniuk.
\newblock A field guide to forward-backward splitting with a {FASTA}
  implementation.
\newblock {\em arXiv eprint}, abs/1411.3406, 2014.

\bibitem{hsieh2014nuclear}
C.~Hsieh and P.~Olsen.
\newblock Nuclear norm minimization via active subspace selection.
\newblock In {\em Proc. Int. Conf. Machine Learning}, pages 575--583, 2014.

\bibitem{fan2001variable}
J.~Fan and R.~Li.
\newblock Variable selection via non-concave penalized likelihood and its
  oracle properties.
\newblock {\em J. Amer. Statist. Assoc.}, 96(456):1348--1360, 2001.

\bibitem{lu2016nonconvex}
C.~Lu, J.~Tang, S.~Yan, and Z.~Lin.
\newblock Nonconvex nonsmooth low rank minimization via iteratively reweighted
  nuclear norm.
\newblock {\em {IEEE} Trans. Image Proc.}, 25(2):829--839, 2016.

\bibitem{burer2003nonlinear}
S.~Burer and R.~Monteiro.
\newblock A nonlinear programming algorithm for solving semidefinite programs
  via low-rank factorization.
\newblock {\em Mathematical Programming}, 95(2):329--357, 2003.

\bibitem{park2016non}
D.~Park, A.~Kyrillidis, C.~Caramanis, and S.~Sanghavi.
\newblock Non-square matrix sensing without spurious local minima via the
  burer-monteiro approach.
\newblock {\em stat}, 1050:12, 2016.

\bibitem{zheng2015convergent}
Q.~Zheng and J.~Lafferty.
\newblock A convergent gradient descent algorithm for rank minimization and
  semidefinite programming from random linear measurements.
\newblock In {\em Adv. Neural Inf. Proc. Sys. (NIPS)}, pages 109--117, 2015.

\bibitem{bhojanapalli2016global}
S.~Bhojanapalli, B.~Neyshabur, and N.~Srebro.
\newblock Global optimality of local search for low rank matrix recovery.
\newblock In {\em Adv. Neural Inf. Proc. Sys. (NIPS)}, pages 3873--3881, 2016.

\bibitem{ge2016matrix}
R.~Ge, J.~Lee, and T.~Ma.
\newblock Matrix completion has no spurious local minimum.
\newblock In {\em Adv. Neural Inf. Proc. Sys. (NIPS)}, pages 2973--2981, 2016.

\bibitem{ge2017no}
R.~Ge, C.~Jin, and Y.~Zheng.
\newblock No spurious local minima in nonconvex low rank problems: A unified
  geometric analysis.
\newblock {\em arXiv preprint arXiv:1704.00708}, 2017.

\bibitem{wang2017universal}
L.~Wang, X.~Zhang, and Q.~Gu.
\newblock A universal variance reduction-based catalyst for nonconvex low-rank
  matrix recovery.
\newblock {\em arXiv preprint arXiv:1701.02301}, 2017.

\bibitem{johnson2013accelerating}
R.~Johnson and T.~Zhang.
\newblock Accelerating stochastic gradient descent using predictive variance
  reduction.
\newblock In {\em Adv. Neural Inf. Proc. Sys. (NIPS)}, pages 315--323, 2013.

\bibitem{defazio2014saga}
A.~Defazio, F.~Bach, and S.~Lacoste-Julien.
\newblock Saga: A fast incremental gradient method with support for
  non-strongly convex composite objectives.
\newblock In {\em Adv. Neural Inf. Proc. Sys. (NIPS)}, pages 1646--1654, 2014.

\bibitem{plan2017high}
Y.~Plan, R.~Vershynin, and E.~Yudovina.
\newblock High-dimensional estimation with geometric constraints.
\newblock {\em Inform. and Infer: A Journal of the IMA}, 6(1):1--40, 2017.

\bibitem{negahban2009unified}
S.~Negahban, B.~Yu, M.~Wainwright, and P.~Ravikumar.
\newblock A unified framework for high-dimensional analysis of $ m $-estimators
  with decomposable regularizers.
\newblock In {\em Adv. Neural Inf. Proc. Sys. (NIPS)}, 2011.

\bibitem{musco2015randomized}
C.~Musco and C.~Musco.
\newblock Randomized block krylov methods for stronger and faster approximate
  singular value decomposition.
\newblock In {\em Adv. Neural Inf. Proc. Sys. (NIPS)}, pages 1396--1404, 2015.

\bibitem{allen2016lazysvd}
Z.~Allen-Zhu and Y.~Li.
\newblock Lazysvd: Even faster svd decomposition yet without agonizing pain.
\newblock In {\em Adv. Neural Inf. Proc. Sys. (NIPS)}, pages 974--982, 2016.

\bibitem{clarksonwoodruff}
K.~Clarkson and D.~Woodruff.
\newblock Low-rank psd approximation in input-sparsity time.
\newblock In {\em Proceedings of the Twenty-Eighth Annual ACM-SIAM Symposium on
  Discrete Algorithms}, pages 2061--2072. SIAM, 2017.

\bibitem{clarksonwoodruff_old}
K.~L Clarkson and D.~Woodruff.
\newblock Low rank approximation and regression in input sparsity time.
\newblock In {\em Proc. ACM Symp. Theory of Comput.}, pages 81--90. ACM, 2013.

\bibitem{drineas_mahoney}
M.~Mahoney and P.~Drineas.
\newblock Cur matrix decompositions for improved data analysis.
\newblock {\em Proc. Natl. Acad. Sci.}, 106(3):697--702, 2009.

\bibitem{tygert}
V.~Rokhlin, A.~Szlam, and M.~Tygert.
\newblock A randomized algorithm for principal component analysis.
\newblock {\em SIAM J. Matrix Anal. Applications}, 31(3):1100--1124, 2009.

\bibitem{yang2015sparse}
Z.~Yang, Z.~Wang, H.~Liu, Y.~Eldar, and T.~Zhang.
\newblock Sparse nonlinear regression: Parameter estimation and asymptotic
  inference.
\newblock {\em J. Machine Learning Research}, 2015.

\bibitem{soltani2016fastIEEETSP17}
M.~Soltani and C.~Hegde.
\newblock Fast algorithms for demixing sparse signals from nonlinear
  observations.
\newblock {\em IEEE Trans. Sig. Proc.}, 65(16):4209--4222, Aug 2017.

\bibitem{modelcsICALP}
C.~Hegde, P.~Indyk, and L.~Schmidt.
\newblock Nearly linear-time model-based compressive sensing.
\newblock In {\em Proc. Intl. Colloquium on Automata, Languages, and
  Programming (ICALP)}, July 2014.

\bibitem{jolliffe1986principal}
I.~Jolliffe.
\newblock Principal component analysis and factor analysis.
\newblock In {\em Principal component analysis}. Springer, 2002.

\bibitem{chiang2014prediction}
K.~Chiang, C.~Hsieh, N.~Natarajan, I.~Dhillon, and A.~Tewari.
\newblock Prediction and clustering in signed networks: a local to global
  perspective.
\newblock {\em J. Machine Learning Research}, 15(1):1177--1213, 2014.

\bibitem{wainwright2008graphical}
M.~Wainwright and M.~Jordan.
\newblock Graphical models, exponential families, and variational inference.
\newblock {\em Foundations and Trends{\textregistered} in Machine Learning},
  1(1--2):1--305, 2008.

\bibitem{padmanabhan}
N.~Padmanabhan, M.~White, H.~Zhou, and R.~O'Connell.
\newblock Estimating sparse precision matrices.
\newblock {\em Monthly Notices of the Royal Astronomical Society},
  460(2):1567--1576, 2016.

\bibitem{souly}
N.~Souly and M.~Shah.
\newblock Scene labeling using sparse precision matrix.
\newblock In {\em {IEEE} Conf. Comp. Vision and Pattern Recog}, pages
  3650--3658, 2016.

\bibitem{yin2013}
J.~Yin and H.e Li.
\newblock Adjusting for high-dimensional covariates in sparse precision matrix
  estimation by $\ell_1$-penalization.
\newblock {\em Journal of multivariate analysis}, 116:365--381, 2013.

\bibitem{friedman2008sparse}
J.~Friedman, T.~Hastie, and R.~Tibshirani.
\newblock Sparse inverse covariance estimation with the graphical lasso.
\newblock {\em Biostatistics}, 9(3):432--441, 2008.

\bibitem{mazumder2012graphical}
R.~Mazumder and T.~Hastie.
\newblock The graphical lasso: New insights and alternatives.
\newblock {\em Electronic journal of statistics}, 6:2125, 2012.

\bibitem{banerjee2008model}
O.~Banerjee, L.~Ghaoui, and A.~d'Aspremont.
\newblock Model selection through sparse maximum likelihood estimation for
  multivariate gaussian or binary data.
\newblock {\em J. Machine Learning Research}, 9(Mar):485--516, 2008.

\bibitem{hsieh2011sparse}
C.~Hsieh, I.~Dhillon, P.~Ravikumar, and M.~Sustik.
\newblock Sparse inverse covariance matrix estimation using quadratic
  approximation.
\newblock In {\em Adv. Neural Inf. Proc. Sys. (NIPS)}, pages 2330--2338, 2011.

\bibitem{chandrasekaran2012latent}
V.~Chandrasekaran, P.~Parrilo, and A.~Willsky.
\newblock Latent variable graphical model selection via convex optimization.
\newblock {\em The Annals of Statistics}, 40(4):1935--1967, 2012.

\bibitem{johnstone2001distribution}
I.~Johnstone.
\newblock On the distribution of the largest eigenvalue in principal components
  analysis.
\newblock {\em Annals of statistics}, pages 295--327, 2001.

\bibitem{han2016fast}
L.~Han, Y.~Zhang, and T.~Zhang.
\newblock Fast component pursuit for large-scale inverse covariance estimation.
\newblock In {\em Proceedings of the 22nd ACM SIGKDD International Conference
  on Knowledge Discovery and Data Mining}, pages 1585--1594. ACM, 2016.

\bibitem{ma2013alternating}
S.~Ma, L.~Xue, and H.~Zou.
\newblock Alternating direction methods for latent variable gaussian graphical
  model selection.
\newblock {\em Neural computation}, 25(8):2172--2198, 2013.

\bibitem{yang2013dirty}
E.~Yang and P.~Ravikumar.
\newblock Dirty statistical models.
\newblock In {\em Adv. Neural Inf. Proc. Sys. (NIPS)}, pages 611--619, 2013.

\bibitem{henrion2012projection}
D.~Henrion and J.~Malick.
\newblock Projection methods in conic optimization.
\newblock In {\em Handbook on Semidefinite, Conic and Polynomial Optimization},
  pages 565--600. Springer, 2012.

\bibitem{hughes2000functional}
T.~Hughes, M.~Marton, A.~Jones, C.~Roberts, R.~Stoughton, C.~Armour,
  H.~Bennett, E.~Coffey, H.~Dai, Y.~He, et~al.
\newblock Functional discovery via a compendium of expression profiles.
\newblock {\em Cell}, 102(1):109--126, 2000.

\bibitem{candes2011tight}
E.~Candes and Y.~Plan.
\newblock Tight oracle inequalities for low-rank matrix recovery from a minimal
  number of noisy random measurements.
\newblock {\em {IEEE} Trans. Inform. Theory}, 57(4):2342--2359, 2011.

\bibitem{lee2010admira}
K.~Lee and Y.~Bresler.
\newblock Admira: Atomic decomposition for minimum rank approximation.
\newblock {\em {IEEE} Trans. Inform. Theory}, 56(9):4402--4416, 2010.

\bibitem{HegdeFastUnionNips2016}
C.~Hegde, P.~Indyk, and L.~Schmidt.
\newblock Fast recovery from a union of subspaces.
\newblock In {\em Adv. Neural Inf. Proc. Sys. (NIPS)}, 2016.

\bibitem{haviv2017restricted}
I.~Haviv and O.~Regev.
\newblock The restricted isometry property of subsampled fourier matrices.
\newblock In {\em Geom. Aspec. Func. Anal.}, pages 163--179. Springer, 2017.

\bibitem{krahmer2011new}
F.~Krahmer and R.~Ward.
\newblock New and improved johnson--lindenstrauss embeddings via the restricted
  isometry property.
\newblock {\em SIAM J. Math. Anal.}, 43(3):1269--1281, 2011.

\bibitem{yuan2013gradient}
Xiaotong Yuan, Ping Li, and Tong Zhang.
\newblock Gradient hard thresholding pursuit for sparsity-constrained
  optimization.
\newblock In {\em Proc. Int. Conf. Machine Learning}, pages 127--135, 2014.

\bibitem{boyd2004convex}
S.~Boyd and L.~Vandenberghe.
\newblock {\em Convex optimization}.
\newblock Cambridge university press, 2004.

\end{thebibliography}

\newpage
%\section{Appendix}
\label{append}

\subsection{Proofs}
We provide full proofs of all theorems discussed in this paper. 

 %Finally $\kappa(A) = \frac{\la_1(A)}{\la_p(A)}$ denotes the condition number of the matrix $A$.
%{Below, the expression $C+D$ for two sets $C$ and $D$ refers to the \textit{Minkowski} sum of two sets, defined as $C+D = \{c+d \ | \ c\in C, \ d\in D\}$ for given sets $C$ and $D$.  }
Below, $\M(\U_r)$ denotes the set of vectors associated with $\U_r$, the set of all rank-r matrix subspaces. We show the maximum and minimum eigenvalues of a matrix $A\in\R^{p\times p}$ as $\lambda_{\min}(A), \lambda_{\max}(A)$, respectively. Furthermore $\sigma_i(A)$ denotes the $i^{th}$ largest singular value of matrix $A$.   
We need the following equivalent definitions of restricted strongly convex and restricted strong smoothness conditions.

\begin{definition} \label{defRSCRSS_app}
A function $f$ satisfies the Restricted Strong Convexity (RSC) and Restricted Strong Smoothness (RSS) conditions if one of the following equivalent definitions is satisfied for all $L_1,L_2, L\in\mathbb{R}^{p\times p}$ such that $\rank(L_1)\leq r, \rank(L_2)\leq r,rank(L)\leq r$:
\begin{align}
%\label{rscrss_app}
&\frac{m_r}{2}\|L_2-L_1\|^2_F \leq f(L_2) - f(L_1) - \langle\nabla f(L_1) , L_2-L_1\rangle\leq\frac{M_{r}}{2}\|L_2-L_1\|^2_F, \label{rscrss_app1}\\
&\hspace{5mm}m_{r}\|L_2-L_1\|^2_F \leq\langle\P_U\left(\nabla f(L_2) - \nabla f(L_1)\right), L_2-L_1\rangle \leq M_{r}\|L_2-L_1\|^2_F, \label{rscrss_app2}\\
&\hspace{35mm}m_{r} \leq\|\P_U\nabla^2 f(L)\|_2 \leq M_{r},\label{rscrss_app3} \\
&\hspace{7mm}m_{r}\|L_2-L_1\|_F \leq\|\P_U\left(\nabla f(L_2) - \nabla f(L_1)\right)\|_F \leq M_{r}\|L_2-L_1\|_F,\label{rscrss_app4}
\end{align} 
where $U$ is the span of the union of column spaces of the matrices $L_1$ and $L_2$. Here, $m_r$ and $M_r$ are the RSC and RSS constants, respectively. 
\end{definition}

\subsection{Proof of theorems in section~\ref{prelim} }
Before proving the main theorems, we restate the following hard-thresholding result from lemma $3.18$ in~\cite{li2016Nonconvex}:
\begin{lemma}\label{HT}
For $r>r^*$ and for any matrix $L\in\R^{p\times p}$, we have
\begin{align}
\|H_r(L)-L^*\|_F^2\leq\left(1+\frac{2\sqrt{r^*}}{\sqrt{r-r^*}}\right)\|L-L^*\|_F^2,
\end{align}
where $\rank(L^*)$ = $r^*$, and $H_r(.):\R^{p\times p}\rightarrow\U_{r}$ denotes the singular value thresholding operator, which keeps the largest $r$ singular values and sets the others to zero.
\end{lemma}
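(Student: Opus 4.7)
This lemma is a restatement of Lemma~3.18 of~\cite{li2016Nonconvex}, so the plan is to follow their two-stage argument: first establish the sparse/vector analogue, then lift it to matrices using the SVD together with Von~Neumann's trace inequality and Mirsky's inequality. In the vector version one shows that for $x\in\R^p$, $s^*$-sparse $x^*$, and $s>s^*$,
\[
\|H_s(x)-x^*\|_2^2\leq\Bigl(1+\tfrac{2\sqrt{s^*}}{\sqrt{s-s^*}}\Bigr)\|x-x^*\|_2^2.
\]
Let $I=\text{supp}(H_s(x))$, $I^*=\text{supp}(x^*)$, and partition the indices into $A=I\setminus I^*$, $B=I^*\setminus I$, $C=I\cap I^*$. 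Direct expansion gives the identity
\[
\|H_s(x)-x^*\|_2^2-\|x-x^*\|_2^2 \;=\; -\|H_s(x)-x\|_2^2\;+\;2\textstyle\sum_{i\in B}x_i x_i^*.
\]
The structural input is that $|x_i|\le |x_j|$ for every $i\in B$ and every $j\in A$ (because $H_s$ retains the top $s$ of $x$), so $\|x_B\|_2^2\le\tfrac{|B|}{|A|}\|x_A\|_2^2\le\tfrac{s^*}{s-s^*}\|x-x^*\|_2^2$. Cauchy--Schwarz on $2\langle x_B,x^*_B\rangle$ together with the triangle bound $\|x^*_B\|_2\leq\|x-x^*\|_2+\|x_B\|_2$ and elementary algebra then yields the claimed sparse inequality.

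For the matrix lift, set $R=L-H_r(L)$. Expanding squares gives
\[
\|H_r(L)-L^*\|_F^2-\|L-L^*\|_F^2\;=\;2\langle L^*,R\rangle-\|R\|_F^2,
\]
which reduces the problem to an upper bound on the right-hand side. Von~Neumann's trace inequality supplies $\langle L^*,R\rangle\leq\sum_k\sigma_k(L^*)\sigma_k(R)=\sum_{k\le r^*}\sigma_k(L^*)\sigma_{r+k}(L)$, since $\sigma_k(L^*)=0$ for $k>r^*$ and the singular values of $R$ are the tail $\sigma_{r+1}(L),\sigma_{r+2}(L),\ldots$. Dropping the nonpositive contribution $-\sum_{k>r^*}\sigma_{r+k}(L)^2$ from $\|R\|_F^2$ gives
\[
2\langle L^*,R\rangle-\|R\|_F^2\;\leq\;\textstyle\sum_{k\le r^*}\bigl[2\sigma_k(L^*)\sigma_{r+k}(L)-\sigma_{r+k}(L)^2\bigr].
\]
This has exactly the scalar structure of the Step~1 excess, with $\alpha_k=\sigma_{r+k}(L)$ playing the role of $x_B$ and $\beta_k=\sigma_k(L^*)$ playing the role of $x^*_B$. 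The analogue of the top-$s$ magnitude comparison is provided by the singular-value ordering $\sigma_{r+k}(L)\le\sigma_{r+1}(L)$ together with Mirsky's inequality $\sum_{k>r^*}\sigma_k(L)^2\le\|L-L^*\|_F^2$, which combine to give $\sum_{k\le r^*}\sigma_{r+k}(L)^2\le\tfrac{r^*}{r-r^*}\|L-L^*\|_F^2$. Plugging into the sparse bound and translating back then produces the desired matrix estimate.

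\textbf{Main obstacle.} The delicate step is controlling the mixed sum $\sum_k\sigma_k(L^*)\sigma_{r+k}(L)$ purely by $\|L-L^*\|_F^2$: a naive Cauchy--Schwarz factors out $\|L^*\|_F$, which is \emph{not} dominated by $\|L-L^*\|_F$. The resolution, mirroring the sparse case, is to avoid separating the product and instead rewrite $2\alpha_k\beta_k-\alpha_k^2=\beta_k^2-(\alpha_k-\beta_k)^2$, bounding $\beta_k^2$ via a triangle-type estimate using $\sigma_k(L^*)-\sigma_{r+k}(L)$, whose squared sum is controlled by Mirsky. Pushing this analysis through carefully---so that only $\alpha_k$-type terms (which \emph{are} $\O(\|L-L^*\|_F)$) appear in the final inequality---is what yields the stated constant; replacing the sharp estimate by a loose Cauchy--Schwarz at this stage would only give a multiplicative constant depending on $\|L^*\|_F/\|L-L^*\|_F$, not a universal one.
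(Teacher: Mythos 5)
The paper does not actually prove this lemma; it imports it verbatim as Lemma~3.18 of~\cite{li2016Nonconvex}, and the only argument it spells out is for the $\epsilon$-approximate generalization (Lemma~\ref{ApprHTh}), which proceeds by a support decomposition of the \emph{squared} singular-value vectors followed by a three-case analysis optimized over an auxiliary parameter $\gamma$. Your high-level strategy (vector inequality first, then a lift via Von Neumann) is the same as theirs, but both halves of your execution have gaps. In the vector step, applying $\|x_B\|_2^2\le\frac{s^*}{s-s^*}\|x-x^*\|_2^2$ and $\|x_B^*-x_B\|_2\le\|x-x^*\|_2$ \emph{independently} and at full strength gives
\begin{align*}
\|H_s(x)-x^*\|_2^2-\|x-x^*\|_2^2\le \|x_B\|_2^2+2\|x_B\|_2\|x_B^*-x_B\|_2\le\Bigl(\tfrac{s^*}{s-s^*}+\tfrac{2\sqrt{s^*}}{\sqrt{s-s^*}}\Bigr)\|x-x^*\|_2^2,
\end{align*}
i.e.\ the constant $\bigl(1+\sqrt{s^*/(s-s^*)}\bigr)^2$, not $1+2\sqrt{s^*/(s-s^*)}$. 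The two quantities $\|x_A\|_2^2$ and $\|x_B-x_B^*\|_2^2$ both live inside $\|x-x^*\|_2^2$ and cannot both be saturated; to recover the stated constant you must either optimize the quadratic form $c^2u^2+2cuv$ jointly over $u^2+v^2\le\|x-x^*\|_2^2$ (using also the sharper count $|B|/|A|\le s^*/s\le 1$), or reproduce the case analysis of~\cite{li2016Nonconvex}. This is fixable, and for the paper's downstream use (where $r\gg r^*$) the weaker constant would even suffice, but it does not prove the lemma as stated.

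The serious gap is in the matrix lift. Your resolution of the ``main obstacle'' rests on the claim that $\sum_{k\le r^*}\bigl(\sigma_k(L^*)-\sigma_{r+k}(L)\bigr)^2$ is controlled by $\|L-L^*\|_F^2$ via Mirsky. Mirsky's inequality bounds $\sum_k\bigl(\sigma_k(L^*)-\sigma_k(L)\bigr)^2$ with \emph{aligned} indices; the shifted version you need is false. Take $L=L^*$ of rank $r^*\le r$: then $\sigma_{r+k}(L)=0$ for all $k\ge1$, so the left side equals $\|L^*\|_F^2$ while $\|L-L^*\|_F^2=0$. Consequently the chain ``rewrite $2\alpha_k\beta_k-\alpha_k^2=\beta_k^2-(\alpha_k-\beta_k)^2$, bound $\beta_k^2$ by a triangle estimate, control the squared differences by Mirsky'' cannot be pushed through, and (even granting the false inequality) it would reintroduce an additive $\O\bigl(\tfrac{r^*}{r-r^*}\bigr)$ term rather than the bare $\tfrac{2\sqrt{r^*}}{\sqrt{r-r^*}}$. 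The difficulty is structural: after Von Neumann, $\sigma_k(L^*)$ is paired with $\sigma_{r+k}(L)$, and unlike $x_B$ versus $x_B^*$ in the sparse case these are not restrictions of $L$ and $L^*$ to a common ``support,'' so no analogue of $\|x_B-x_B^*\|_2\le\|x-x^*\|_2$ is available. This is exactly the step that the support-based reduction of~\cite{li2016Nonconvex} (and the paper's proof of Lemma~\ref{ApprHTh}) is designed to handle, and your proposal does not currently contain a correct substitute for it.
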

For proving theorem~\ref{AppSVD}, we cannot use directly lemma~\ref{HT} since $\Ta$ operator returns an approximation of the top $r$ singular vectors, and using exact projection in the proof of lemma~\ref{HT} is necessary~\cite{li2016Nonconvex}. However, we can modify the proof of lemma~\ref{HT} to make it applicable through the approximate projection approach. Hence, we can prove Lemma~\ref{ApprHTh}:
%\begin{lemma}~\label{ApprHTh}
%For $r>(1+\frac{1}{1-\epsilon})r^*$ and for any matrix $L\in\R^{p\times p}$, we have
%\begin{align}
%\|\Ta(L)-L^*\|_F^2\leq\left(1+\frac{2}{\sqrt{1-\epsilon}}\frac{\sqrt{r^*}}{\sqrt{r-r^*}}\right)\|L-L^*\|_F^2,
%\end{align}
%where $\rank(L^*)$ = $r^*$, $\Ta:\R^{p\times p}\rightarrow\U_{r}$ denotes the approximate tail projection defined in the definition~\ref{taildef} and $\epsilon>0$ is the approximation ratio introduced by the approximate projection.
%\end{lemma}
 
\begin{proof}[Proof of Lemma~\eqref{ApprHTh}]
The proof is similar to the procedure described in~\cite{li2016Nonconvex} with some modification based on the per-vector guarantee property of approximate projection. In this work, the proof in is given first for sparse hard thresholding, and then is generalized to the low-rank case using Von Neumann's trace inequality, i.e., for two matrices $A,B\in\R^{p\times p}$ and corresponding singular values $\sigma_i(A)$ and  $\sigma_i(B)$, respectively, we have: 
\begin{align}\label{VoNeu}
\langle A,B\rangle = \Sigma_{k=1}^{\min\{\rank(A),\rank(B)\}}\sigma_k(A)\sigma_k(B).
\end{align}
First define $\theta = [\sigma_1^2(L),\sigma_2^2(L)\ldots,\sigma_r^2(L)]^T$. Let $\theta^* = [\sigma_1^2(L^*),\sigma_2^2(L^*)\ldots,\sigma_r^2(L^*)]^T$, and $\theta' = \Ta(\theta)$. Also, let $supp(\theta^*)  = \mathcal{I^*}$, $supp(\theta)  = \mathcal{I}$, $supp(\theta')  = \mathcal{I'}$, and $\theta'' =\theta - \theta'$ with support $I''$. It follows that $$\|\theta'-\theta^*\|_2^2 - \|\theta- \theta^*\|_2^2\leq 2\langle\theta'',\theta^*\rangle - \|\theta''\|_2^2.$$ Now define new sets $\mathcal{I^*}\cap\mathcal{I'} = \mathcal{I}^{*1}$ and $\mathcal{I^*}\cap\mathcal{I''} = \mathcal{I}^{*2}$ with restricted vectors to these sets as $\theta_{\mathcal{I}^{*1}} = \theta^{*1}$, $\theta_{\mathcal{I}^{*2}} = \theta^{*2}$, $\theta'_{\mathcal{I}^{*1}} = \theta^{1*}$, and $\theta''_{\mathcal{I}^{*2}} = \theta^{2*}$ such that $|\mathcal{I}^{*2}| = r^{**}$. Hence, $\|\theta^{2*}\|_2 = \beta\theta_{\max}$ where $\beta\in[\sqrt{r^{**}}]$ and $\theta_{\max} = \|\theta^{2*}\|_{\infty}$. By these definitions, we have:
$$ \|\theta'-\theta^*\|_2^2 - \|\theta- \theta^*\|_2^2\leq2\|\theta^{2*}\|_2\|\theta^{*2}\|_2 - \|\theta^{2*}\|_2^2.$$
The proof continues to discuss in three cases as:
\begin{enumerate}
\item if $\|\theta^{2*}\|_2\leq \theta_{\max}$, then $\beta=1$.
\item if $\theta_{\max}\leq\|\theta^{2*}\|_2< \sqrt{r^{**}}\theta_{\max}$, then $\beta = \frac{\|\theta^{2*}\|_2}{\theta_{\max}}$.
\item if $\|\theta^{2*}\|_2\geq\sqrt{r^{**}}\theta_{\max}$, then $\beta = \sqrt{r^{**}}$.
\end{enumerate}
In each case, the ratio of $\frac{\|\theta'-\theta^*\|_2^2 - \|\theta- \theta^*\|_2^2}{\|\theta- \theta^*\|_2^2}$ is upper bounded in terms of $r,r^*, r^{**}$ and by using the inequality $|\theta_{\min}|\geq|\theta_{\max}|$ where $|\theta_{\min}|$ is defined as the smallest entry of $\theta^{1*}$. This inequality holds due to the exact hard thresholding. However, it does not necessary hold when approximate projection is used. To resolve this problem, we note that in our framework the approximate tail projection is implemented via any randomized SVD method which supports the so-called per-vector guarantee. Recall from our discussion in section~\ref{prelim}, the per vector guarantee condition means: 
$$|u_i^TLL^Tu_i - z_iLL^Tz_i|\leq\epsilon\sigma_{r+1}^2\leq\epsilon\sigma_{i}^2, \ \ \ \ \ i\in[r].$$ In our implementation, we use randomized block Krylov method (BK-SVD) which supports this condition. In our notations, this condition implies, $|\theta_{\min} - \widehat{\theta}_{\min}|\leq\epsilon\theta_{\min}$ where $\widehat{\theta} = [\widehat{\sigma}_1^2(L),\widehat{\sigma}_2^2(L)\ldots,\widehat{\sigma}_r^2(L)]^T$. By combing with $|\theta_{\min}|\geq|\theta_{\max}|$, we thus have $\widehat{\theta}_{\min}\geq(1-\epsilon)\theta_{\max}$. Now by this modification, we can continue the proof with the procedure described in~\cite{li2016Nonconvex}. Let $I:=\frac{\|\theta'-\theta^*\|_2^2 - \|\theta- \theta^*\|_2^2}{\|\theta- \theta^*\|_2^2}$. For each case, we have::
\begin{itemize}
\item case 1: $I\leq \frac{\theta_{\max}^2}{(r-r^*+r^{**} )(1-\epsilon)\theta^2_{\min}-\theta^2_{\max}}\leq \frac{1}{(r-r^*+r^{**} )(1-\epsilon)-1}$.
\item case 2: $I\leq \frac{r^{**}\theta^2_{\max}}{(r-r^*+r^{**})(1-\epsilon)\theta^2_{\min}}\leq\frac{r^{**}}{(r-r^*+r^{**})(1-\epsilon)}$.
\item case 3:  $I\leq\frac{2\gamma\sqrt{r^{**}}\theta^2_{\max}-r^{**}\theta^2_{\max}}{(r-r^*+r^{**})(1-\epsilon)\theta^2_{\min}+r^{**}\theta^2_{\max}+\gamma^2\theta^2_{\max}-2\gamma\sqrt{r^{**}}\theta^2_{\max}}\\
~~~~~~~~~~~~~\overset{e_1}\leq\frac{2\sqrt{r^{**}}}{2\sqrt{(r-r^*)(1-\epsilon) +r^{**}(\frac{5}{4}-\epsilon)}-\sqrt{r^{**}}}$ \ \ , for some $\gamma\geq\sqrt{r^{**}}$.
\end{itemize}
In all the above cases, we have used the fact that $\widehat{\theta}_{\min}\geq(1-\epsilon)\theta_{\max}$. In addition, $e_1$ in case 3 holds due to maximizing the R.H.S. with respect to $\gamma$. After taking derivative w.r.t. $\gamma$, setting to zero, and solving the resulted quadratic equation, we obtain that:
$$ \gamma = \max\Big{\{} \sqrt{r^{**}},\frac{\sqrt{r^{**}}}{2} +\sqrt{(r-r^*)(1-\epsilon) +r^{**}(\frac{5}{4}-\epsilon)} \Big{\}} $$
Now if we plug in the value of $\gamma$ in the R.H.S of case 3, we obtain the claimed bound. Putting the three above bounds all together, we have: 
\begin{align*}
\frac{\|\theta'-\theta^*\|_2^2 - \|\theta- \theta^*\|_2^2}{\|\theta- \theta^*\|_2^2}&\leq\max\Big{\{}\frac{1}{(r-r^*+r^{**} )(1-\epsilon)-1},\frac{r^{**}}{(r-r^*+r^{**})(1-\epsilon)}, \\
&\hspace{3cm}\frac{2\sqrt{r^{**}}}{2\sqrt{(r-r^*)(1-\epsilon) +r^{**}(\frac{5}{4}-\epsilon)}-\sqrt{r^{**}}}\Big{\}}\\
&\overset{e_1}\leq\frac{2\sqrt{r^{**}}}{2\sqrt{(r-r^*)(1-\epsilon) +r^{**}(\frac{5}{4}-\epsilon)}-\sqrt{r^{**}}}\\
&\overset{e_2}\leq\frac{2\sqrt{r^{*}}}{2\sqrt{(r-r^*)(1-\epsilon) }-\sqrt{r^{*}}}\\
&\overset{e_3}\leq\frac{2}{\sqrt{1-\epsilon}}\frac{\sqrt{r^*}}{\sqrt{r-r^*}},
\end{align*}
where $e_1$ follows by choosing $r$ sufficiently large and the fact that $\epsilon$ can be chosen arbitrary small (this inceases the running time of the approximate projection by $\log(\frac{1}{\epsilon})$ factor), $e_2$ holds due to $r^{**}\leq r^*$, and finally $e_3$ holds by the assumption on $r$ in the lemma. This completes the proof.
\end{proof} 
 
\begin{proof}[Proof of Theorem~\ref{AppSVD}]
Let $V^t, V^{t+1}$, and $V^{*}$ denote the bases for the column space of $L^t, L^{t+1}$, and $L^*$, respectively. Assume $\nu = \sqrt{1+\frac{2}{\sqrt{1-\epsilon}}\frac{\sqrt{r^*}}{\sqrt{r-r^*}}}$. Also, by the definition of the tail projection, we have $L^{t}\in\M(\U_{r})$, and by definition of set $J$ in the theorem, $V^t\cup V^{t+1}\cup V^*\subseteq J_t := J$ such that $rank(J_t)\leq 2r+r^*\leq3r$. Define $b = L^t -\eta\P_J\nabla F(L^t)$. We have:
\begin{align}
\label{linApp}
\|L^{t+1} - L^*\|_F&\overset{e_1}{\leq}\nu\|b-L^*\|_F \nonumber \\
& \leq\nu\|L^t - L^*-\eta\P_J\nabla F(L^t)\|_F \nonumber \\
&\overset{e_2}{\leq}\nu\|L^t - L^*-\eta\P_J\left(\nabla F(L^t)- \nabla F(L^*) \right)\|_F + \nu\eta\|\P_J\nabla F(L^*)\|_F \nonumber \\
&\overset{e_3}{\leq}\nu\sqrt{1+M_{2r+r^*}^2\eta^2 - 2m_{2r+r^*}\eta}\|L^t-L^*\|_F + \nu\eta\|\P_J\nabla F(L^*)\|_F
%&\leq\left(1+2\frac{\sqrt{r^*}}{\sqrt{r-r^*}}\right)\sqrt{1+M_{2r+r^*}^2\eta^2 - 2m_{2r+r^*}\eta}\|L^t-L^*\|_F + \nu\eta\|\P_J\nabla F(L^*)\|_F,
\end{align}
where $e_1$ holds due to applying lemma~\ref{HT}. Moreover, $e_2$ holds by applying triangle inequality and $e_3$ is obtained by combining the lower bound in~\eqref{rscrss_app2} and upper bound in~\eqref{rscrss_app4}, i.e.,
$$\|L^t - L^* - \eta^{\prime}\left(\nabla_J F(L^t) -\nabla_J F(L^*)\right)\|_2^2\leq(1+{\eta^{\prime}}^2M_{2r+r^*}^2-2\eta^{\prime} m_{2r+r^*})\|L^t-L^*\|_2^2.$$
In order that~\eqref{linApp} implies convergence, we require that 
$$\rho = \left(\sqrt{1+\frac{2}{\sqrt{1-\epsilon}}\frac{\sqrt{r^*}}{\sqrt{r-r^*}}}\right)\sqrt{1+M_{2r+r^*}^2\eta^2 - 2m_{2r+r^*}\eta}<1$$. 
By solving this quadratic inequality with respect to $\eta$, we obtain:
\begin{align*}
\left(\frac{M_{2r+r^*}}{m_{2r+r^*}}\right)^2\leq1+\frac{\sqrt{r-r^*}\sqrt{1-\epsilon}}{2\sqrt{r^*}},
\end{align*} 

As a result, we obtain the the condition $r\geq \frac{C_1}{1-\epsilon}\left(\frac{M_{2r+r^*}}{m_{2r+r^*}}\right)^4r^*$ for some $C_1>0$. Furthermore, since $r = \alpha r^*$ for some $\alpha> 1$, we conclude the condition on step size $\eta$ as $\frac{1-\sqrt{\alpha'}}{M_{2r+r^*}}\leq\eta\leq\frac{1+\sqrt{\alpha'}}{m_{2r+r^*}}$ where $\alpha' = \frac{\sqrt{\alpha-1}}{\sqrt{1-\epsilon}\sqrt{\alpha-1}+2}$. This completes the proof of Theorem~\ref{AppSVD}.

\end{proof}

\subsection{Proof of theorems in section~\ref{NLARMsec} }
We first prove the statistical error rate, staing in Theorem~\ref{staterrorNLARM}. 
\begin{proof}[proof of Theorem~\ref{staterrorNLARM}]
Let $b_i = vec(A_i)\in\R^{p^2}$ denotes the $i^{th}$ row of matrix $X\in\R^{m\times p^2}$, defining in the section~\ref{NLARMsec} for $i = 1,\ldots,n$. Since $X$ is constructed by uniform randomly chosen $m$ rows of a $p^2\times p^2$ DFT matrix multiplied by a diagonal matrix whose diagonal entries are uniformly distributed over $\{-1,+1\}^{P^2}$, $\frac{1}{\sqrt{n}}X$ satisfies the rank-$r$ RIP condition with probability at least $1-\exp(-cn\varpi^2)$ ($c>0$ is a constant) provided that $m= \O(\frac{1}{\varpi^2}pr\text{\polylog(p)})$~\cite{candes2011tight}. On the other hand, if a matrix $B$ satisfies the rank-r RIP condition, then~\cite{lee2010admira}
\begin{align}\label{ripprop}
\Big{\|}\P_U\frac{1}{\sqrt{n}}B^*a\Big{\|}_2\leq (1+\delta_r)\|a\|_2, \ \ for \ all \ a\in\R^n,
\end{align}
where $U$ denotes the set of rank-$r$ matrices, and $\delta_{r}$ is the RIP constant. As a result, for all $t=1,\ldots,T$ we have:
$$ \Big{\|}\frac{1}{n}\P_{J_t}\nabla F(L^{*})\Big{\|}_F = \Big{\|}\frac{1}{n}\A^*e\Big{\|}_F = \frac{1}{\sqrt{n}}\Big{\|}\P_{J_t}\frac{1}{\sqrt{n}}X^*e\Big{\|}_2\leq \frac{1+\delta_{2r+r^*}}{\sqrt{n}}\|e\|_2,$$
where the last inequality holds due to~\eqref{ripprop} ($\frac{1}{\sqrt{n}}X$ has RIP constant $\delta_r$, and from our definition, $rank(J_t)\leq 2r+r^*$), and the fact that $e\in\R^n$.
\end{proof}

\begin{proof}[proof of corollary~\ref{induclin}]
Consider upper bound in~\eqref{linApp}. By using induction, zero initialization, and Theorem~\ref{staterrorNLARM}, we obtain $\vartheta$ accuracy after $T_{iter} = \mathcal{O}\left(\log\left(\frac{\|L^*\|_F}{\vartheta}\right)\right)$ iterations. In other words, after $T_{iter}$ iterations, we obtain:
$$ \|L^{T+1} - L^{*}\|_F\leq \vartheta + \frac{1}{\sqrt{n}}\frac{\nu\eta(1+\delta_{2r+r^*})}{1-\rho}\|e\|_2.$$
\end{proof}

The above results shows the linear convergence of APRM if there is no additive noise. We now prove that the objective function defined in problem~\eqref{opt_probNLARM} satisfies the RSC/RSS conditions in each iteration.
\begin{proof}[proof of Theorem~\ref{RSCRSSappNLARM}]
Let $L = L^t$ for all $t=1,\ldots,T$. We follow the approach in~\cite{soltani2016fastIEEETSP17}. hence, we use the the Hessian based definition of RSC/RSC, stating in equation~\eqref{rscrss_app3} in definition~\ref{defRSCRSS_app}. We note that the Hessian of $F(L)$ is given by:
\[
\nabla^2F(L) = \frac{1}{n}\sum_{i=1}^{n}A_ig'(\langle A_i,L\rangle)A_i^T,
\]
According to our assumption on the link function, we know $0<\mu_1\leq g'(x)\leq\mu_2$ for all $x\in\mathcal{D}(g)$. As a result $\lambda_{\min}(\nabla^2F(L))\geq0$ due to the positive semidefinite of $A_iA_i^T$ for all $i=1,\dots,n$. Now let $\Lambda_{\max} =\max_U \lambda_{\max}(\P_U\nabla^2F(L))$ and $\Lambda_{\min} =\min_U \lambda_{\min}(\P_U\nabla^2F(L))$. Moreover, let $W$ be any set of rank-$2r$ matrices such that $U\subseteq W$. We have:
\begin{align}\label{RIPNLARM}
\mu_1\min_W \lambda_{\min}\left(\P_W\left(\frac{1}{n}\sum_{i=1}^{n}A_iA_i^T\right)\right)\leq\Lambda_{\min}
\leq\Lambda_{\max}\leq\mu_2\max_W \lambda_{\max}\left(\P_W\left(\frac{1}{n}\sum_{i=1}^{n}A_iA_i^T\right)\right),
\end{align}
Now, we need to bound the upper bound and the lower bound in the above inequality. To do this, we are using the assumption on the design matrices $A_i$'s, stating in the theorem. According to this, we can write, $\P_W\left(\frac{1}{n}\sum_{i=1}^{n}A_iA_i^T\right) = \P_W\left(\frac{1}{n}X^TX\right)$. We follow the approach of~\cite{HegdeFastUnionNips2016}. Now fix any set $W$ as defined above. Recall that $X =X'D$, where $X'$ is a partial Fourier or partial Hadamard matrix. Thus, by~\cite{haviv2017restricted}, $X'$ satisfies RIP condition with constant $4\upsilon$ over the set of of $s$-sparse vectors with high probability when $m=\O\left(\frac{1}{\upsilon^2}s\log^2(\frac{s}{\upsilon})\log(p)\right)$. Also from~\cite{krahmer2011new}, $X$ is a $(1\pm\xi)-$Johnson-Lindenstrauss embedding (with $4\upsilon<\xi$) for set $W$ with probability at least $1-\varsigma$ provided that $s >\O(\frac{V}{\varsigma})$, where $V$ is the number of vectors in $W$. In other words, the Euclidean distance between any two vectors (matrix) $\beta_1,\beta_2\in W\in\R^{p\times p}$ is preserved up to a $\pm\xi$ by application of $X$. As a result, with high probability
\[
1-\xi\leq\lambda_{\min}\left(\P_W\left(\frac{1}{n}\sum_{i=1}^{n}A_iA_i^T\right)\right)\leq\lambda_{\max}\left(\P_W\left(\frac{1}{n}\sum_{i=1}^{n}A_iA_i^T\right)\right)\leq 1+\xi
\]
Now it remains to argue the final bound in~\eqref{RIPNLARM}. By~\cite{candes2011tight}, we know that the set of $p\times p$ rank-$r$ matrices can be discretized by a $\zeta$-cover $S_r$ such that $|S_r| = (\frac{9}{\zeta})^{(2p+1)r}$. In addition, They show that if a matrix $X$ satisfies JL embedding by constant $\xi$, then $X$ satisfies the rank-$r$ RIP with constant $\omega=\O(\xi)$. As a result, by taking union bound (taking maximum over all set $W$ in~\eqref{RIPNLARM}), we establish RSC/RSS constants such that $M_{2r+r^*}\leq \mu_2(1+\omega)$ and $m_{2r+r^*}\geq \mu_1(1-\omega)$ provided that $s=\O(pr)$ and $V = |S_r|$ which implies $m= \O(pr\text{\polylog(p)})$. Now, In order to satisfy the assumptions in Theorem~\ref{AppSVD}, we need to have $\frac{M_{2r+r^*}^2}{m_{2r+r^*}^4}\leq C_2(1-\epsilon)\frac{r}{r^*}$ for some $C_2>0$ and $\epsilon$ defined in lemma~\ref{ApprHTh}. Thus, we have $\frac{\mu_2^4(1+\omega)^4}{\mu_1^4(1-\omega)^4}\leq C_2(1-\epsilon)\frac{r}{r^*}$ which justifies the assumption in Theorem~\ref{RSCRSSappNLARM}. 
\end{proof} 

\subsection{Proof of theorems in section~\ref{PME} }
\begin{proof}[Proof of Theorem~\ref{ExactSVD}]
%In all the following equations, $C_1,C_2$ and $C_3$ are some positive constants. 
Let $V^t, V^{t+1}$, and $V^{*}$ denote the bases for the column space of $L^t, L^{t+1}$, and $L^*$, respectively. Assume $\nu' = \sqrt{1+\frac{2\sqrt{r^*}}{\sqrt{r-r^*}}}$. By definition of set $J$ in the theorem, $V^t\cup V^{t+1}\cup V^*\subseteq J_t := J$ and $\rank(J_t)\leq 2r+r^*$. Define $b = L^t -\eta'\P_J\nabla F(L^t)$. We have:
\begin{align}
\label{linExact}
\|L^{t+1} - L^*\|_F&\overset{e_1}{\leq}\nu'\|b-L^*\|_F \nonumber \\
& \leq\nu\|L^t - L^*-\eta'\P_J\nabla F(L^t)\|_F \nonumber \\
&\overset{e_2}{\leq}\nu'\|L^t - L^*-\eta'\P_J\left(\nabla F(L^t)- \nabla F(L^*) \right)\|_F + \nu'\eta'\|\P_J\nabla F(L^*)\|_F \nonumber \\
&\overset{e_3}{\leq}\nu'\sqrt{1+M_{2r+r^*}^2\eta'^2 - 2m_{2r+r^*}\eta'}\|L^t-L^*\|_F + \nu'\eta'\|\P_J\nabla F(L^*)\|_F,
%&\leq\left(1+2\frac{\sqrt{r^*}}{\sqrt{r-r^*}}\right)\sqrt{1+M_{2r+r^*}^2\eta^2 - 2m_{2r+r^*}\eta}\|L^t-L^*\|_F + \nu\eta\|\P_J\nabla F(L^*)\|_F,
\end{align}
where $e_1$ holds due to applying lemma~\ref{HT}. Moreover, $e_2$ holds by applying triangle inequality and $e_3$ is obtained by combining the lower bound in~\eqref{rscrss_app2} and upper bound in~\eqref{rscrss_app4}, i.e.,
$$\|L^t - L^* - \eta^{\prime}\left(\nabla_J F(L^t) -\nabla_J F(L^*)\right)\|_2^2\leq(1+{\eta^{\prime}}^2M_{2r+r^*}^2-2\eta^{\prime} m_{2r+r^*})\|L^t-L^*\|_2^2.$$
%together with the inequality $\|AB\|_F\leq\|A\|_2\|B\|_F$ for any matrix $A$ and $B$. 
In order that~\eqref{linExact} implies convergence, we require that 
$$\rho' = \sqrt{1+2\frac{\sqrt{r^*}}{\sqrt{r-r^*}}}\sqrt{1+M_{2r+r^*}^2\eta'^2 - 2m_{2r+r^*}\eta'}<1$$. 
By solving this quadratic inequality with respect to $\eta$, we obtain:
\begin{align*}
\left(\frac{M_{2r+r^*}}{m_{2r+r^*}}\right)^2\leq1+\frac{\sqrt{r-r^*}}{2\sqrt{r^*}},
\end{align*} 

As a result, we obtain the the condition $r\geq C_1'\left(\frac{M_{2r+r^*}}{m_{2r+r^*}}\right)^4r^*$ for some $C_1'>0$. Furthermore, since $r = \alpha r^*$ for some $\beta> 1$, we conclude the condition on step size $\eta'$ as $\frac{1-\sqrt{\beta'}}{M_{2r+r^*}}\leq\eta'\leq\frac{1+\sqrt{\beta'}}{m_{2r+r^*}}$ where $\beta' = \frac{\sqrt{\beta-1}}{\sqrt{\beta-1}+2}$ for some $\beta>1$.
If we initialize at $L^0 = 0$, then we obtain $\vartheta$ accuracy after $T = \mathcal{O}\left(\log\left(\frac{\|L^*\|_F}{\vartheta}\right)\right)$ iterations.
\end{proof}

\begin{proof}[Proof of Theorem~\ref{BoundGrad}]
The proof of this theorem is a direct application of the Lemma 5.4 in \cite{Venkat2009sparse} and we restate it for completeness:
\begin{lemma}%[eviation of the sample covariance from the true covariance matrix]
\label{boundSmCo}
Let $C$ denote the sample covariance matrix, then with probability at least $1 - 2\exp(-p)$ we have $\|C - (S^* + L^*)^{-1}\|_2\leq c_1\sqrt{\frac{p}{n}}$ where $c_1>0$ is a constant.
\end{lemma}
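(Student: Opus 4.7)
The plan is to reduce the spectral-norm bound on $C - \Sigma^*$ (where $\Sigma^* = (S^* + L^*)^{-1}$) to a standard Bernstein-plus-$\varepsilon$-net argument for the sample covariance of standard Gaussian vectors.

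First I would whiten the samples: write $x_i = (\Sigma^*)^{1/2} z_i$ with $z_i \sim \mathcal{N}(0, I_p)$ i.i.d.\ so that $C - \Sigma^* = (\Sigma^*)^{1/2}\bigl(\widehat{I}_n - I_p\bigr)(\Sigma^*)^{1/2}$, where $\widehat{I}_n := \tfrac{1}{n}\sum_{i=1}^n z_i z_i^T$. Sub-multiplicativity of the spectral norm gives
\[
\|C - \Sigma^*\|_2 \;\le\; \|\Sigma^*\|_2 \cdot \|\widehat{I}_n - I_p\|_2,
\]
which absorbs the (bounded) factor $\|\Sigma^*\|_2$ into the constant $c_1$. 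It therefore suffices to prove $\|\widehat{I}_n - I_p\|_2 \lesssim \sqrt{p/n}$ with probability at least $1 - 2\exp(-p)$.

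Next, I would bound $\|\widehat{I}_n - I_p\|_2$ via the variational identity $\|A\|_2 = \sup_{v \in S^{p-1}} |v^T A v|$ for symmetric $A$. For any fixed unit $v$, $v^T z_i \sim \mathcal{N}(0,1)$, so $(v^T z_i)^2 - 1$ is a centered sub-exponential random variable with bounded Orlicz $\psi_1$-norm. The standard Bernstein inequality then yields, for all $t \in (0, 1]$,
\[
\pr\bigl(|v^T(\widehat{I}_n - I_p) v| > t\bigr) \;\le\; 2\exp(-c n t^2),
\]
for an absolute constant $c > 0$. I would then discretize $S^{p-1}$ by a $1/4$-net $\mathcal{N}$ of cardinality $|\mathcal{N}| \le 9^p$ and invoke the well-known covering inequality $\|A\|_2 \le 2\max_{v \in \mathcal{N}} |v^T A v|$ for symmetric $A$. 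A union bound over $\mathcal{N}$ gives
\[
\pr\bigl(\|\widehat{I}_n - I_p\|_2 > 2t\bigr) \;\le\; 2 \cdot 9^p \exp(-c n t^2).
\]
Choosing $t = c'\sqrt{p/n}$ with $c'$ large enough makes the exponent $(\log 9 - c c'^2) p \le -p$, so the failure probability is at most $2\exp(-p)$. Undoing the whitening step yields the claimed bound with $c_1 = 2 c' \|\Sigma^*\|_2$.

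The main obstacle is a mild but genuine one: Bernstein for sub-exponential variables gives a Gaussian-tail regime ($\exp(-cnt^2)$) only when $t \lesssim 1$, and an exponential-tail regime ($\exp(-cnt)$) otherwise, so the scaling $\sqrt{p/n}$ is only correct when $n \gtrsim p$. Thus the proof implicitly requires the high-dimensional regime $n = \Omega(p)$ that is already assumed in Theorem~\ref{RSCRSSex}. Beyond this, the argument is standard — and indeed the statement is a direct restatement of Lemma~5.4 of~\cite{Venkat2009sparse} — so the work is mostly bookkeeping around the $\varepsilon$-net cardinality and the scaling of the failure probability.
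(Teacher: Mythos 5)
Your argument is correct, but it is worth noting that the paper does not actually prove this lemma at all: it is imported verbatim as Lemma~5.4 of~\cite{Venkat2009sparse}, so there is no internal proof to compare against. What you have written is the standard self-contained derivation of Gaussian sample-covariance concentration (whitening, sub-exponential Bernstein for $(v^Tz_i)^2-1$, a $1/4$-net of $S^{p-1}$ of size $9^p$, and a union bound), and each step checks out, including the factor-of-$2$ covering inequality and the choice $t=c'\sqrt{p/n}$ that turns $9^p e^{-cnt^2}$ into $e^{-p}$. The two caveats you flag are real and handled appropriately: the Gaussian-tail regime of Bernstein requires $n\gtrsim p$, which is subsumed by the paper's standing assumption $n=\O(pr)$, and the constant $c_1$ absorbs $\|\Sigma^*\|_2=1/\lambda_p(S^*+L^*)$, i.e., it is a constant only insofar as the ground-truth precision matrix has bounded spectrum --- consistent with how the paper treats $c_1$ downstream in Theorem~\ref{BoundGrad}. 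In short, your proof supplies the missing justification rather than diverging from an existing one.
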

By noting that $\nabla F(L^{*}) = C - (S^* + L^*)^{-1}$ and $\rank(J_t)\leq 2r+r^*\leq3r$, we can bound the term on the right hand side in Theorem~\ref{ExactSVD} as:
$$\|\P_{J_t}\nabla F(L^{*})\|_F\leq \sqrt{3r}\|\nabla F(L^{*})\|_2\leq c_2\sqrt{\frac{rp}{n}} .$$
%The proof for upper-bounding $\|\P_{V_t}\nabla F(L^{*})\|_F$ in Theorem~\ref{AppSVD} follows analogously. 
%Similarly, since outputted subspace by head projection, $V$ contains matrices with rank $2r$, we can bound the Frobenius norm in theorem~\ref{AppSVD} as
%$$\|\P_V\nabla F(L^{*})\|_F\leq \sqrt{2r}\|\nabla F(L^{*})\|_2\leq c_3\sqrt{\frac{rp}{n}}.$$
\end{proof}

The key observation is that the objective function in~\eqref{opt_probPME} is globally strongly convex, and when restricted to any compact psd cone, it also satisfies the smoothness condition. As a result, it satisfies RSC/RSS conditions. Our strategy to prove Theorems~\ref{RSCRSSex} and~\ref{RSCRSSapp} is to establish upper and lower bounds on the spectrum of the sequence of estimates $L^t$ independent of $t$. We use the following lemma.
\begin{lemma}\cite{yuan2013gradient,boyd2004convex}
\label{hessian}
The Hessian of the objective function $F(L)$ is given by $\nabla^2F(L) = \Theta^{-1}\otimes\Theta^{-1}$ where $\otimes$ denotes the Kronecker product and $\Theta = \bS + L$. In addition if $\alpha I\preceq\Theta\preceq\beta I$ for some $\alpha$ and $\beta$, then $\frac{1}{\beta^2} I\preceq\nabla^2F(L)\preceq\frac{1}{\alpha^2} I$.
\end{lemma}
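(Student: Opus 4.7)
The plan is to compute the gradient and Hessian of $F$ directly from standard matrix calculus identities, and then use the eigenvalue structure of Kronecker products to derive the spectral sandwich bound.

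First I would compute the gradient. Since $F(L) = -\log\det(\bS+L) + \langle \bS+L, C\rangle$ and $\bS$ is fixed, by the well-known identity $\nabla_{\Theta}\log\det(\Theta) = \Theta^{-1}$ (valid when $\Theta\succ 0$), together with the chain rule applied to the affine map $L\mapsto \Theta=\bS+L$, I get $\nabla F(L) = -\Theta^{-1} + C$. This is the natural starting point, and it is the formula used throughout the PME discussion in Section~\ref{PME}.

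Next I would differentiate a second time. The key calculation is $\partial \Theta^{-1}/\partial \Theta_{ij} = -\Theta^{-1}E_{ij}\Theta^{-1}$, where $E_{ij}$ denotes the matrix with a $1$ in entry $(i,j)$ and zeros elsewhere. Viewing $F$ as a function of $\text{vec}(L)\in\R^{p^2}$, the Hessian in vectorized coordinates takes the form $\nabla^2 F(L) = \Theta^{-1}\otimes \Theta^{-1}$; this is most cleanly derived by applying the identity $\text{vec}(AXB) = (B^T\otimes A)\,\text{vec}(X)$ to $\text{vec}(\Theta^{-1}E_{ij}\Theta^{-1})$ and stacking over $(i,j)$, using the symmetry $\Theta = \Theta^T$ so that $\Theta^{-T} = \Theta^{-1}$. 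This gives the first claim of the lemma.

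For the spectral bounds, I would use two standard facts. First, if $\alpha I\preceq \Theta\preceq \beta I$ with $\alpha>0$, then inverting (which reverses the PSD order on positive matrices) yields $\tfrac{1}{\beta}I\preceq \Theta^{-1}\preceq \tfrac{1}{\alpha}I$. Second, the eigenvalues of a Kronecker product $A\otimes B$ of symmetric matrices are exactly the products $\lambda_i(A)\lambda_j(B)$. Combining these, every eigenvalue of $\Theta^{-1}\otimes \Theta^{-1}$ lies in $[\tfrac{1}{\beta^2},\tfrac{1}{\alpha^2}]$, which gives the sandwich $\tfrac{1}{\beta^2}I\preceq \nabla^2 F(L)\preceq \tfrac{1}{\alpha^2}I$.

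I do not expect any serious obstacle here: both ingredients are textbook (\cite{boyd2004convex}). The only mild care needed is to be precise about what ``Hessian'' means, i.e., to identify $\nabla^2 F(L)$ with the $p^2\times p^2$ matrix acting on $\text{vec}(L)$ so that the Kronecker product expression is literally correct, rather than a fourth-order tensor that must then be contracted. Once that identification is made, the two stated conclusions follow in one line each.
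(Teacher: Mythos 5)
Your proposal is correct. The paper does not actually prove this lemma---it imports it directly from the cited references \cite{yuan2013gradient,boyd2004convex}---and your derivation (gradient $-\Theta^{-1}+C$ via $\nabla_\Theta\log\det\Theta=\Theta^{-1}$, Hessian $\Theta^{-1}\otimes\Theta^{-1}$ via $\mathrm{vec}(AXB)=(B^T\otimes A)\mathrm{vec}(X)$, and the eigenvalue-product rule for Kronecker products combined with order reversal under inversion) is exactly the standard argument those sources contain, with the right care taken in identifying the Hessian as a $p^2\times p^2$ matrix acting on $\mathrm{vec}(L)$.
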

\begin{lemma}[Weyl type inequality]
\label{weyl}
For any two matrices $A,B\in\R^{p\times p}$, we have:
$$\max_{1\leq i\leq p} |\sigma_i(A+B) - \sigma_i(A)|\leq\|B\|_2.$$
%$$\lambda_i(A) - \lambda_1(B)\leq\lambda_i(A-B)\leq\lambda_i(A) - \lambda_p(B)\quad\ i=1\dots p.$$
\end{lemma}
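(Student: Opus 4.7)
The plan is to prove this via the Courant--Fischer variational characterization of singular values, which is the standard route to any Weyl-type perturbation bound. Recall that for an arbitrary matrix $M\in\R^{p\times p}$, the $i^{\textrm{th}}$ singular value can be written as $\sigma_i(M) = \max_{\dim V = i}\,\min_{v\in V,\,\|v\|=1}\|Mv\|$. This min-max formula is the workhorse: it transforms an algebraic statement about singular values into an analytic statement about how quadratic forms $\|Mv\|$ vary with the subspace $V$.

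First, I would fix an index $i$ and apply the triangle inequality pointwise: for any unit vector $v$,
\[
\bigl|\,\|(A+B)v\| - \|Av\|\,\bigr| \;\leq\; \|Bv\| \;\leq\; \|B\|_2.
\]
Hence $\|Av\| - \|B\|_2 \leq \|(A+B)v\| \leq \|Av\| + \|B\|_2$ uniformly in $v$. Then I would pass this pointwise bound through the max-min: fixing any $i$-dimensional subspace $V$ and taking the minimum over unit $v\in V$ on both sides, the constant $\|B\|_2$ survives the $\min$, so $\min_v \|(A+B)v\| \leq \min_v \|Av\| + \|B\|_2$. Subsequently taking the maximum over $i$-dimensional subspaces yields $\sigma_i(A+B) \leq \sigma_i(A) + \|B\|_2$.

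For the reverse inequality, I would apply the same argument with the roles of $A$ and $A+B$ swapped, writing $A = (A+B) + (-B)$ and using $\|-B\|_2 = \|B\|_2$, which gives $\sigma_i(A) \leq \sigma_i(A+B) + \|B\|_2$. Combining the two bounds and taking the maximum over $i$ yields the claimed inequality $\max_i |\sigma_i(A+B) - \sigma_i(A)| \leq \|B\|_2$.

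I do not anticipate a substantive obstacle here: this is a classical Weyl-type bound, and the only subtlety is the order in which $\min$ and $\max$ interact with the additive perturbation $\|B\|_2$, which is trivial because $\|B\|_2$ is a constant independent of $v$ and $V$. An alternative (shorter but less elementary) derivation uses the fact that singular values of $M$ are the eigenvalues of $\sqrt{M^\top M}$ together with Weyl's inequality for Hermitian matrices; I would mention this as a remark but stick with the Courant--Fischer derivation above since it is self-contained and directly uses only the definition of $\|B\|_2$.
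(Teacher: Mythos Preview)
Your proof is correct and entirely standard. The paper itself does not supply a proof of this lemma: it is stated as a classical Weyl-type perturbation bound and used as a black box in the proofs of Theorems~\ref{RSCRSSex} and~\ref{RSCRSSapp}. Your Courant--Fischer argument is exactly the textbook derivation one would give if asked to fill in the details, and the alternative route you mention (via Weyl's inequality for the Hermitian matrix $\begin{pmatrix}0&M\\M^\top&0\end{pmatrix}$ or $\sqrt{M^\top M}$) is also valid.
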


If we establish an universal upper bound and lower bound on $\lambda_1(\Theta^t)$ and $\lambda_p(\Theta^t)$ for all $t=1\dots T$, then we can bound the RSC constant as $m_{2r+r^*}\geq\frac{1}{\lambda_1(\Theta^t)^2}$ and the RSS-constant as $M_{2r+r^*}\leq\frac{1}{\lambda_p(\Theta^t)^2}$ using Lemma~\ref{hessian} and the definition of RSS/RSC.

\begin{proof}[Proof of Theorem~\ref{RSCRSSex}]
Recall that by Theorem~\ref{ExactSVD}, we have: 
$$\|L^{t} - L^{*}\|_F\leq \rho'\|L^{t-1} - L^{*}\|_F + \nu'\eta'\|\P_{J_t}\nabla F(L^{*})\|_F,$$
By Theorem~\ref{BoundGrad}, the second term on the right hand side can be bounded by $\O(\sqrt{\frac{rp}{n}})$ with high probability. Therefore, recursively applying this inequality to $L^t$ (and initializing with zero), we obtain:
\begin{align}
\label{firstexactupp}
\|L^{t} - L^{*}\|_F\leq(\rho')^t\|L^*\|_F+ \frac{c_2\nu'\eta'}{1-\rho'} \sqrt{\frac{rp}{n}}.
\end{align} 
Since $\rho'<1$, then $ (\rho')^t<1$. On the other hand $\|L^*\|_F\leq\sqrt{r^*}\|L^*\|_2$. Hence, $\rho^t\|L^*\|_F\leq\sqrt{r^*}\|L^*\|_2$. Also, by the Weyl inequality, we have:
\begin{align}
\|L^t\|_2 - \|L^*\|_2\leq\|L^t-L^*\|_2\leq\|L^t-L^*\|_F.
\end{align}
Combining~\eqref{firstexactupp} and~\eqref{secondexactupp} and using the fact that $\la_1(L^{t})\leq\si_1(L^{t})$,  
\begin{align}
\la_1(L^{t})&\leq\|L^*\|_2+\|L^t-L^*\|_F\nonumber  \\
&\leq\|L^*\|_2+\sqrt{r^*}\|L^*\|_2+ \frac{c_2\nu'\eta'}{1-\rho'} \sqrt{\frac{rp}{n}}.\nonumber  
\end{align}
Hence for all $t$, 
\begin{align}
\label{secondexactupp}
\la_1(\Theta^t) = S_1+\la_1(L^t)\leq S_1 +\left(1+\sqrt{r^*}\right)\|L^*\|_2+\frac{c_2\nu'\eta'}{1-\rho'} \sqrt{\frac{rp}{n}}.
\end{align}
For the lower bound, we trivially have for all $t$:
\begin{align}
\la_p(\Theta^t) = \la_p(\bS +  L^t)\geq S_p.
\end{align}
If we select $n=\O\left(\frac{1}{\delta^2}\left(\frac{\nu'\eta'}{1-\rho'}\right)^2rp\right)$ for some small constant $\delta>0$, then~\eqref{secondexactupp} becomes:
\begin{align*}
\la_1(\Theta^t)\leq S_1 +\left(1+\sqrt{r}\right)\|L^*\|_2+ \delta.
\end{align*}
As mentioned above, we set $m_{2r+r^*}\geq\frac{1}{\lambda_1^2(\Theta^t)}$ and $M_{2r+r^*}\leq\frac{1}{\lambda_p^2(\Theta^t)}$ which implies $\frac{M_{2r+r^*}}{m_{2r+r^*}}\leq\frac{\la_1^2(\Theta^t)}{\la_p^2(\Theta^t)}$. In order to satisfy the assumption on the RSC/RSS in theorem~\ref{ExactSVD}, i.e., $\frac{M_{2r+r^*}^4}{m_{2r+r^*}^4}\leq C_2'\frac{r}{r^*}$ for some $C_2'>0$, we need to establish a regime such that $\frac{\la_1^8(\Theta^t)}{\la_p^8(\Theta^t)}\leq C_2'\frac{r}{r^*}$. 
As a result, to satisfy this condition, we need to have the following condition, verifying the assumption in the theorem. % on $S_1$ and $S_p$: 
\begin{align}
S_p\leq S_1\leq C_3'(\frac{r}{r^*})^{\frac{1}{8}}S_p - \left(1+\sqrt{r^*}\right)\|L^*\|_2- \delta.
\end{align}
for some constant $C_3'>0$.  
%This condition says that the diagonal part, $S^*$ should be well-posed matrix.
\end{proof}

\begin{proof}[Proof of Theorem~\ref{RSCRSSapp}]
The proof is similar to the proof of theorem~\ref{RSCRSSex}.  Recall that by theorem~\ref{AppSVD}, we have 
$$\|L^{t+1} - L^{*}\|_F\leq \rho\|L^{t} - L^{*}\|_F + \nu\eta\|\P_{J_t}\nabla F(L^*)\|_F,$$ 
As before, the second term on the right hand side is bounded by $\O(\sqrt{\frac{rp}{n}})$ with high probability by Theorem~\ref{BoundGrad}. As above, recursively applying this inequality to $L^t$ and using zero initialization, we obtain:
\begin{align*}
\|L^{t} - L^{*}\|_F\leq\rho^t\|L^*\|_F  + \frac{c_2\nu\eta}{1-\rho}\sqrt{\frac{rp}{n}}.
\end{align*}
Since $\rho<1$, then $\rho^t<1$. Now similar to the exact algorithm, $\|L^*\|_F\leq\sqrt{r^*}\|L^*\|
|_2$ and $\rho_1^t\|L^*\|_F\leq\sqrt{r^*}\|L^*\|_2$. , Hence with high probability,
\begin{align}
\label{secontineqApp}
\la_1(L^{t})&\leq\|L^*\|_2+\|L^t-L^*\|_F\nonumber  \\
%&\leq\|L^*\|_2 + \sqrt{r}\|L^*\|_2 + \frac{c_2\nu\eta}{1-\rho}\sqrt{\frac{rp}{m}} \nonumber  \\
&\overset{}{\leq}\left(1+\sqrt{r^*}\right)\|L^*\|_2+ \frac{c_2\nu\eta}{1-\rho}\sqrt{\frac{rp}{n}},
\end{align} 
Hence, for all $t$:
\begin{align}
\label{firstineqApp}
\la_1(\Theta^t) = S_1+\la_1(L^t)\leq S_1 + \left(1+\sqrt{r^*}\right)\|L^*\|_2+ \frac{c_2\nu\eta}{1-\rho}\sqrt{\frac{rp}{n}},
\end{align}
%\new{%According to the assumption~\ref{mineigLapp},} 
Also, we trivially have:
\begin{align}
\la_p(\Theta^t)=\la_p(\bS +  L^t)\geq S_p - a^{\prime}, \ \forall t.
\end{align}
By selecting $n=\O\left(\frac{1}{\delta^{\'2}}\left(\frac{\nu\eta}{1-\rho}\right)^2rp\right)$ for some small constant $\delta^{\'}>0$, \eqref{firstineqApp} can be written as follows:
\begin{align*}
\la_1(\Theta^t)\leq S_1 + \left(1+\sqrt{r^*}\right)\|L^*\|_2+\delta^{\'},
\end{align*}
In order to satisfy the assumptions in Theorem~\ref{AppSVD}, i.e., $\frac{M_{2r+r^*}^4}{m_{2r+r^*}^4}\leq \frac{C_2''}{1-\epsilon}\frac{r}{r^*}$, we need to guarantee that $\frac{\la_1^8(\Theta^t)}{\la_p^8(\Theta^t)}\leq \frac{C_2''}{1-\epsilon}\frac{r}{r^*}$.  As a result, to satisfy this inequality, we need to have the following condition on $S_1$ and $S_p$: 
\begin{align}
S_p\leq S_1&\leq \frac{C_3''}{(1-\epsilon)^{\frac{1}{8}}}(\frac{r}{r^*})^{\frac{1}{8}}(S_p-a^{\'}) - \left(1+\sqrt{r^*}\right)\|L^*\|_2- \delta^{\'}.
\end{align}
for some $C_3''>0$. Also, we can choose RSC/RSS constant as previous case.
\end{proof}

%Finally, we prove theorem~\ref{mineigval} which says that the minimum eigenvalue of outputted matrix in each iteration by AP-LVGGM does not have large negative eigenvalue with high probability.

\begin{proof}[Proof of Theorem~\ref{mineigval}]
Recall from~\eqref{secontineqApp} that with very high probability, 
$$\|L^t\|_2{\leq}\left(1+\sqrt{r^*}\right)\|L^*\|_2+ \frac{c_2\nu\eta}{1-\rho}\sqrt{\frac{rp}{n}}.$$ 
Also, we always have: $\la_p(L^t)\geq-\|L^t\|_2$. As a result:
\begin{align}
\la_p(L^t)\geq-\left(1+\sqrt{r^*}\right)\|L^*\|_2- \frac{c_2\nu\eta}{1-\rho}\sqrt{\frac{rp}{n}}.
\end{align}
Now if the inequality $\left(1+\sqrt{r^*}\right)\|L^*\|_2+ \frac{c_2\nu\eta}{1-\rho}\sqrt{\frac{rp}{n}}< S_p$ is satisfied, then we can select $0<a^{\'}\leq\left(1+\sqrt{r^*}\right)\|L^*\|_2+ \frac{c_2\nu\eta}{1-\rho}\sqrt{\frac{rp}{n}}$. The former inequality is satisfied 
%by choosing $n$ large enough, i.e., $n=\O\left(\frac{1}{\delta^{\'2}}\left(\frac{\rho_2}{1-\rho_1}\right)^2rp\right)$ and 
by the assumption of Theorem~\ref{RSCRSSapp} on $\|L^*\|_2$, i.e., 
\begin{align*}
\|L^*\|_2\leq\frac{1}{1+\sqrt{r^*}}\left(\frac{S_p}{1+C_4\left((1-\epsilon)(\frac{r^*}{r})\right)^{\frac{1}{8}}} - \frac{S_1(C_4((1-\epsilon)(\frac{r^*}{r}))^{\frac{1}{8}})}{1+C_4\left((1-\epsilon)(\frac{r^*}{r})\right)^{\frac{1}{8}}} - \frac{c_2\nu\eta}{1-\rho}\sqrt{\frac{rp}{n}}\right).
\end{align*}
\end{proof}

\end{document}